\definecolor{first}{rgb}{0.70588235 0.89019608 0.69411765}
\definecolor{second}{rgb}{0.82352941 0.96078431 0.81568627} 
\definecolor{third}{rgb}{0.90588235 0.98039216 0.90196078} 
\newcommand{\third}{}  
\newcommand{\first}{}  
\newcommand{\second}{}  
\def\E{{\mathbf E}}
\def\0{{\mathbf 0}}
\newcommand{\abs}[1]{\left\lvert#1\right\rvert}
\DeclareMathOperator*{\argmax}{arg\,max}
\DeclareMathOperator*{\argmin}{arg\,min}
\newcommand{\tabf}[1]{\fontseries{b}\selectfont{#1}}
\newcommand{\ff}{\mathfrak{F}}
\newcommand{\pp}{\mathfrak{p}}
\newcommand{\fred}[1]{{\color{magenta}[\textsc{Fred:} #1]}}
\newcommand{\marco}[1]{\textcolor{blue}{[\textsc{Marco:} #1]}}
\theoremstyle{plain}
\newtheorem{theorem}{Theorem}[section]
\newtheorem{proposition}[theorem]{Proposition}
\theoremstyle{definition}
\theoremstyle{remark}
\icmltitlerunning{Bayesian Metric Learning for Uncertainty Quantification in Image Retrieval}
\begin{document}

\twocolumn[
\icmltitle{Bayesian Metric Learning for Uncertainty Quantification in Image Retrieval}



\icmlsetsymbol{equal}{*}

\begin{icmlauthorlist}
\icmlauthor{Frederik Warburg}{equal,dtu}
\icmlauthor{Marco Miani}{equal,dtu}
\icmlauthor{Silas Brack}{dtu}
\icmlauthor{Søren Hauberg}{dtu}
\end{icmlauthorlist}

\icmlaffiliation{dtu}{Technical University of Denmark}

\icmlcorrespondingauthor{Frederik, Marco}{\{frwa,mmia\}@dtu.dk}

\icmlkeywords{Machine Learning, ICML}

\vskip 0.3in
]



\printAffiliationsAndNotice{\icmlEqualContribution} 

\begin{abstract}
We propose the first Bayesian encoder for metric learning. Rather than relying on neural amortization as done in prior works, we learn a distribution over the network weights with the Laplace Approximation. We actualize this by first proving that the contrastive loss is a valid log-posterior. We then propose three methods that ensure a positive definite Hessian. Lastly, we present a novel decomposition of the Generalized Gauss-Newton approximation. Empirically, we show that our Laplacian Metric Learner (LAM) estimates well-calibrated uncertainties, reliably detects out-of-distribution examples, and yields state-of-the-art predictive performance. 
\end{abstract}

\vspace{-1.5\baselineskip}
\section{Introduction}
Metric learning seeks data representations where similar observations are near and dissimilar ones are far. This construction elegantly allows for building retrieval systems with simple nearest-neighbor search. Such systems easily cope with a large number of classes, and new classes can organically be added without retraining. While these retrieval systems show impressive performance, they quickly, and with no raised alarms, deteriorate with out-of-distribution data~\cite{shi2019probabilistic}. In particular, in safety-critical applications, the lack of uncertainty estimation is a concern as retrieval errors may propagate unnoticed through the system, resulting in erroneous and possibly dangerous decisions.



\begin{figure}
    \flushright
    \includegraphics[width=0.915\linewidth]{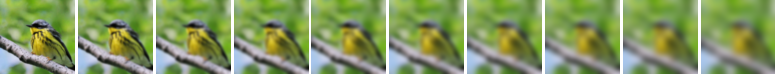} 
    \includegraphics[width=\linewidth]{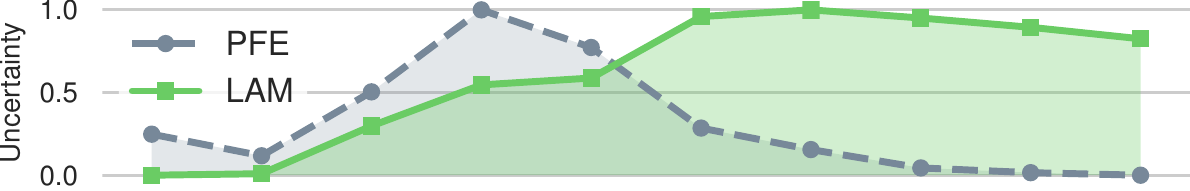}
    \includegraphics[width=0.915\linewidth]{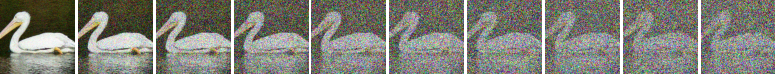}
    \includegraphics[width=\linewidth]{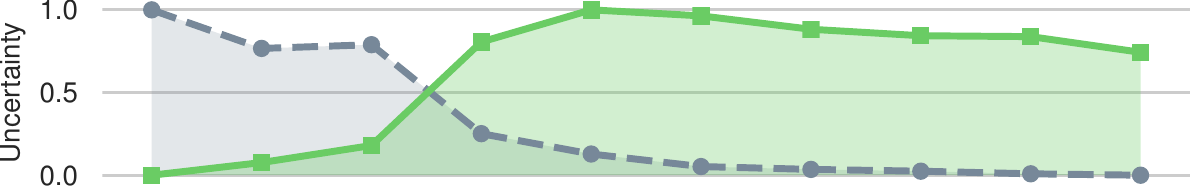}
    \includegraphics[width=0.915\linewidth]{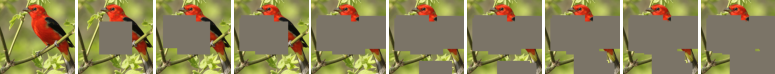}
    \includegraphics[width=\linewidth]{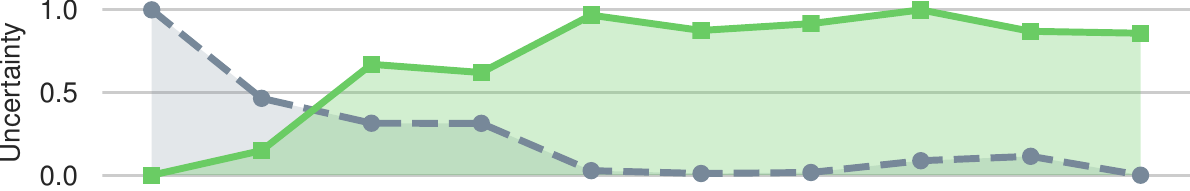}
    \caption{\textbf{Reliable stochastic embeddings.} Current state-of-the-art, PFE~\cite{shi2019probabilistic}, and LAM (ours) learn stochastic representations. LAM estimates reliable uncertainties of the latent representation that intuitively follow the amount of blur, noise, or occlusion in the input image.}
    
    \label{fig:teaser}
\end{figure}

We present the Laplacian Metric Learner (LAM) to estimate reliable uncertainties of image embeddings as demonstrated in \cref{fig:teaser}.  LAM is the first Bayesian method proposed for metric learning. We learn a distribution over the network weights (weight posterior) from which we obtain a stochastic representation by embedding an image through sampled neural networks.
The Bayesian formulation has multiple benefits, namely (1) robustness to out-of-distribution examples, (2) calibrated in-distribution uncertainties, and (3) a slight improvement in predictive performance.\looseness=-1


Our method extends the Laplace Approximation~\citep{mackay1992laplace} for metric learning. We present a probabilistic interpretation of the contrastive loss~\cite{hadsell2006dimensionality} which justifies that it can be interpreted as an unnormalized negative log-posterior.
We then propose three solutions to ensure a positive definite Hessian for the contrastive loss and present two approaches to compute the Generalized Gauss-Newton~\cite{foresee1997ggn} approximation for $\ell_2$-normalized networks. Finally, we boost our method with the online training procedure from \citet{miani_2022_neurips} and achieve state-of-the-art performance.


We are not the first to consider uncertainty quantification in image retrieval. Seminal works~\citep{shi2019probabilistic, oh2018modeling} have addressed the lack of uncertainties in retrieval with \emph{amortized inference} \citep{gershman2014amortized}, where a neural network predicts a stochastic embedding. The issues with this approach are that (1) it requires strong assumptions on the distribution of the embedding, (2) the networks are often brittle and difficult to optimize, and (3) out-of-distribution detection relies on the network's capacity to extrapolate uncertainties. As neural networks extrapolate poorly \citep{xu2021neural}, the resulting \emph{predicted} uncertainties are unreliable for out-of-distribution data \citep{detlefsen2019reliable} and are thus, in practice, of limited value.

In contrast, our method does not assume any distribution on the stochastic embeddings, is simple to optimize, and does not rely on a neural network to extrapolate uncertainties. Instead, our weight posterior is derived from the curvature of the loss landscape and the uncertainties of the latent embeddings deduced (rather than learned) with sampling. We show through rigorous experiments that this leads to reliable out-of-distribution performance and calibrated uncertainties in both controlled toy experiments and challenging real-world applications such as bird, face, and place recognition.

\section{Related Work}






\textbf{Metric learning} attempts to map data to an embedding space, where similar data are close together and dissimilar data are far apart. This is especially useful for retrieval tasks with many classes and few observations per class such as place recognition \citep{Warburg_2020_CVPR} and face recognition \citep{Schroff2015Facenet} or for tasks where classes are not well-defined, such as food tastes \citep{wilber2015snack} or narratives in online discussions \citep{ebert2022narratives}.

There exist many metric losses that optimize for a well-behaved embedding space. We refer to the excellent survey by \citet{musgrave2020metric} for an overview. We here focus on the \emph{contrastive loss}~\cite{hadsell2006dimensionality}
\begin{align}
  \begin{split}
    \mathcal{L}_{\text{con}}(\theta) ={}& \frac{1}{2} \|f_\theta(x_a)-f_\theta(x_p)\|^2 \\ &+
    \frac{1}{2}\max\left(0, m- \|f_\theta(x_a)-f_\theta(x_n)\|^2\right),
    \label{eq:contrastive}
  \end{split}
\end{align}
which has shown state-of-the-art performance \citep{musgrave2020metric} and is one of the most commonly used metric losses. Here, $f_{\theta}$ is a neural network parametrized by $\theta$ which maps from the observation space to the embedding space. 
The loss consists of two terms, one that attracts observations from the same class (\textit{anchor} $x_a$ and \textit{positive} $x_p$), and one that repels observations from different classes (\textit{anchor} $x_a$ and \textit{negative} $x_n$). The margin $m$ ensures that negatives are repelled sufficiently far. 
We will later present a probabilistic extension of the contrastive loss that allows us to learn stochastic, rather than deterministic, features in the embedding space. 

\textbf{Uncertainty in deep learning} is currently studied across many domains to mitigate fatal accidents and allow for human intervention when neural networks make erroneous predictions. Current methods can be divided into methods that apply amortized optimization to train a neural network to predict the parameters of the output distribution, and methods that do not. The amortized methods, best known from the variational autoencoder (VAE) \citep{kingma2013vae, rezende2014stochastic}, seem attractive at first as they can directly estimate the output distribution (without requiring sampling), but they suffer from mode collapse and are sensitive to out-of-distribution data due to the poor extrapolation capabilities of neural networks \citep{nilisnick2019knownothing, detlefsen2019reliable}. `Bayes by Backprop' ~\citep{blundell2015bayesbybackprop} learns a distribution over parameters variationally but is often deemed too brittle for practical applications. Alternatives to amortized methods includes deep ensembles \citep{Lakshminarayanan2016deepensembles}, stochastic weight averaging (SWAG) \citep{Maddox2019swag}, Monte-Carlo dropout \citep{gal2015mcdropout} and Laplace Approximation (LA) \citep{laplace1774memoire, mackay1992laplace} which all approximate the generally intractable weight posterior $p(\theta | \mathcal{D})$ of a neural network. We propose the first principled method to approximate the weight posterior in metric learning.

\textbf{Laplace approximations (LA)} can be applied for every loss function $\mathcal{L}$ that can be interpreted as an unnormalized log-posterior by performing a second-order Taylor expansion around a chosen weight vector $\theta^*$ such that
\begin{align}\label{eq:taylor_laplace}
  \begin{split}
    \mathcal{L}(\theta)
    \approx{}&
    \mathcal{L}^*
    + (\theta - \theta^*)^{\top} \nabla \mathcal{L}^* \\
    & + \frac{1}{2} (\theta - \theta^*)^{\top} \nabla^2 \mathcal{L}^* (\theta - \theta^*),
  \end{split}
\end{align}
where $\mathcal{L}^*$ is the loss evaluated in $\theta^*$. Imposing the unnormalized log-posterior to be a second-order polynomial is equivalent to assuming the posterior to be Gaussian.
If $\theta^*$ is a MAP estimate, the first-order term vanishes, and the second-order term can be interpreted as a precision matrix, the inverse of the covariance.
Assuming $\theta^*$ is a MAP estimate, this second-order term is negative semi-definite for common (convex) supervised losses, such as the mean-squared error and cross-entropy.
Recently, \citet{daxberger2021laplaceredux} demonstrated that post-hoc LA is scalable and produces well-behaved uncertainties for classification and regression.
The Laplacian Autoencoder (LAE) \citep{miani_2022_neurips} improves on the post-hoc LA with an online Monte Carlo EM training procedure to learn a well-behaved posterior.
It demonstrates state-of-the-art uncertainty quantification for unsupervised representation learning.
We first extend LA to the contrastive loss and achieve state-of-the-art performance with the online EM training procedure.

\textbf{Uncertainty in metric learning} is not a new idea \citep{vilnis2014word}, but the large majority of recent methods apply amortized inference to predict distributions in the embedding space \citep{warburg2020btl, chang2020data, chun2021crossmodal, oh2018modeling, shi2019probabilistic, song2019crossmodal, sun2019prob4pose}, making them sensitive to mode collapse and out-of-distribution data. \citet{taha2019dropout, taha2019ensembles} explore deep ensembles and Monte-Carlo dropout as alternatives, but these methods suffer from increased training time, poor empirical performance, and limited Bayesian interpretation~\citep{daxberger2021laplaceredux}. We explore LA in metric learning and attain state-of-the-art performance.



\section{Laplacian Metric Learning}

To perform Bayesian retrieval, we propose to estimate the weight posterior of the embedding network $f_{\theta}$ such that we can sample data embeddings to propagate uncertainty through the decision process. The embedding network is parametrized by $\theta \in \Theta$ and trained with the contrastive loss.
The network maps an image $x \in \mathcal{X}:=\mathbb{R}^{HWC}$ to an embedding $z \in \mathcal{Z}$, which is restricted to be on a $Z$-dimensional unit sphere $\mathcal{Z}:=\mathcal{S}^Z$.
This normalization to the unit sphere is commonly done in retrieval to obtain faster retrieval and a slight performance boost~\citep{Arandjelovic2015netvlad, Radenovic2017gem}.  \cref{fig:model_overview} illustrate our Bayesian mapping from image to latent space.

To obtain an approximate posterior over the weights $\theta$ we rely on the Laplace approximation (LA).
We first motivate the post-hoc LA as this is the simplest. We then extend this approach to online LA which marginalizes the Laplace approximation during training to improve model fit.
In \cref{sec:probabilistic_view}, we prove that the contrastive loss is a valid unnormalized log-posterior on the compact spherical space.
The proof draws inspiration from electrostatics, and the main idea is to define a PDF for the attracting and repelling terms. We then show that the logarithm of the product of these PDFs is equivalent (up to some constant) to the contrastive loss (details are postponed to \cref{sec:probabilistic_view}). Since the contrastive loss is a valid log-posterior, we can proceed with LA\@.\looseness=-1



\begin{figure}
    \centering
    \includegraphics[width=\linewidth]{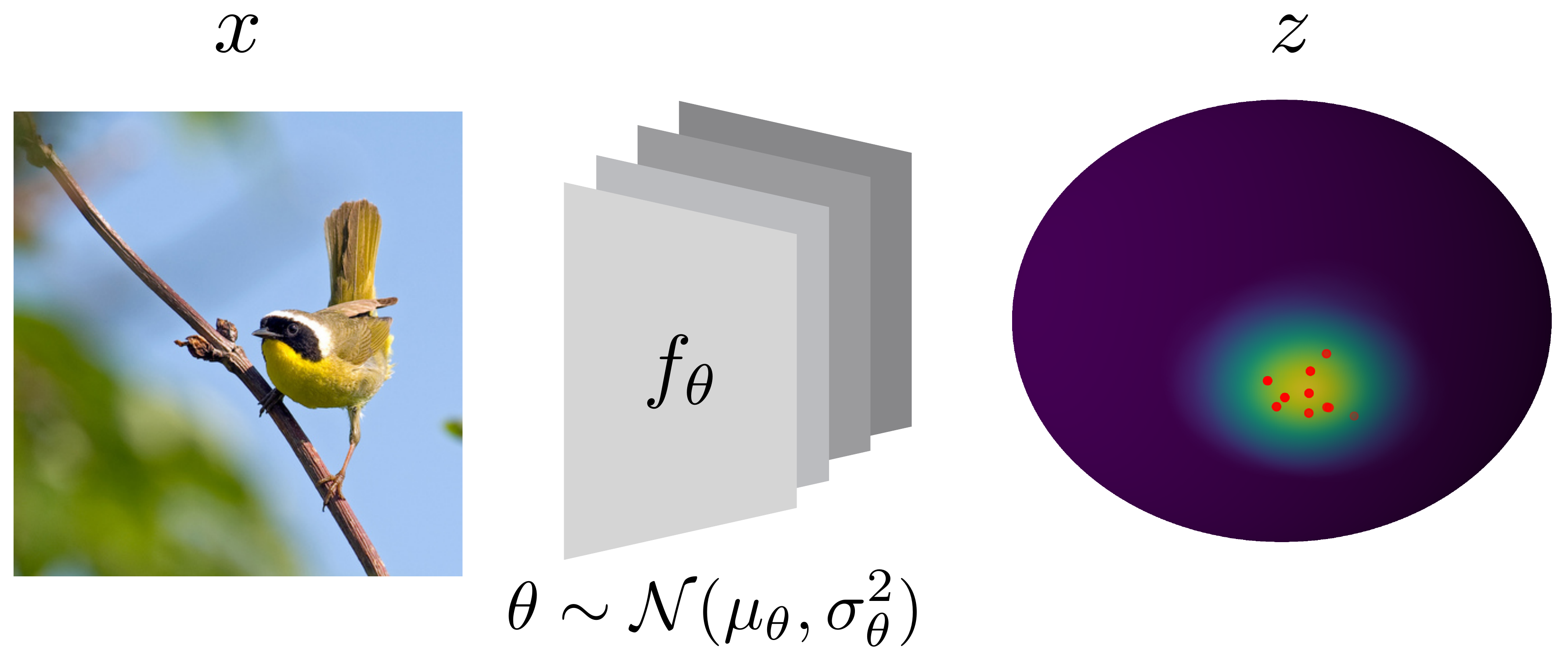}
    \caption{\textbf{Model overview}. We learn a distribution over parameters, such that we embed an image through sampled encoders $f_\theta$ to points $z_i$ (red dots) in a latent space $\mathcal{Z}$. We reduce these latent samples to a single measure of uncertainty by estimating the parameters of a von Mises-Fisher distribution.  
    }
    \label{fig:model_overview}
\end{figure}

\textbf{A post-hoc Laplace approximation} is found by first training a standard deterministic network through gradient steps with the contrastive loss to find the \emph{maximum a posteriori} (MAP) parameters $\theta^{*}$. Since we are in a local optimum, the first-order term in the second-order Taylor expansion (\cref{eq:taylor_laplace}) vanishes, and we can define the parameter distribution as
\begin{equation}
    p(\theta|\mathcal{D})
    =
    \mathcal{N}\left(
    \theta\Big|\theta^{*},
    \left(\nabla^2_\theta \mathcal{L}_{\text{con}}\left(\theta^{*};\mathcal{D}\right)
        +
        \sigma_{\text{prior}}^{-2}\mathbb{I}\right)^{-1} \right).
\end{equation}
The advantage of post-hoc LA is that the training procedure does not change, and already trained neural networks can be made Bayesian.
In practice, however, stochastic gradient-based training does not locate isolated minima of the loss landscape, but rather ends up exploring regions near local minima.
The Hessian (and hence the posterior covariance) can change significantly during this exploration, and the post-hoc LA can become unstable.


\textbf{Online Laplace approximations} \citep{miani_2022_neurips} avoids this instability by marginalizing the LA during training with Monte Carlo EM.
This helps the training recover a solution $\theta^*$ where the Hessian reflects the loss landscape.
Specifically, at each step $t$ during training we keep in memory a Gaussian distribution on the parameters $q^t(\theta)=\mathcal{N}(\theta|\theta_t,H_{\theta_{t}}^{-1})$.
The parameters are updated through an expected gradient step
\begin{equation}
    \theta_{t+1}
    =
    \theta_t
    + \lambda
    \mathbb{E}_{\theta\sim q^t}
    [\nabla_\theta \mathcal{L}_{\text{con}}(\theta;\mathcal{D})]
\end{equation}
and a discounted Laplace update
\begin{equation}
    H_{\theta_{t+1}}
    =
    (1-\alpha) H_{\theta_t}
    +
    \nabla^2_\theta \mathcal{L}_{\text{con}}(\theta;\mathcal{D}),
\end{equation}
where $\alpha$ describes an exponential moving average, similar to momentum-like training. The initialization follows the isotropic prior $q^0(\theta) = \mathcal{N}(\theta|0,\sigma^2_{\text{prior}}\mathbb{I})$.

In practice, the Hessian scales quadratically in memory wrt. the number of parameters. To mitigate this, we approximate this Hessian by its diagonal \citep{lecun1989optimal,denker1990transforming}.


\textbf{Hessian of the contrastive loss.}
Both post-hoc and online LA require the Hessian of the contrastive loss $\nabla^2_\theta \mathcal{L}_{\text{con}}(\theta;\mathcal{D})$.
The Hessian is commonly approximated with the Generalized Gauss-Newton (GGN) approximation~\citep{foresee1997ggn, daxberger2021laplaceredux, dangel2020backpack, software:stochman}. 
The GGN decomposes the loss into $\mathcal{L} = g \circ f$, where $g$ is usually chosen as the loss function and $f$ the model function, and only $f$ is linearized.

However, in our case, this decomposition is non-trivial. Recall that the last layer of our network is an $\ell_2$ normalization layer, which projects embeddings onto a hyper-sphere. This normalization layer can either be viewed as part of the model $f$ (linearized normalization layer) or part of the loss $g$ (non-linearized normalization layer). We show in \cref{sec:derivatives_euclidean} that the former can be interpreted as using the \textit{Euclidean} distance and in \cref{sec:derivatives_arccos} that the latter as using the \textit{Arccos} distance for the contrastive loss (\cref{eq:contrastive}). We highlight that these share the zero- and first-order terms for normalized embeddings but, due to the GGN linearization, not the second-order derivatives. The Euclidean interpretation leads to simpler derivatives and interpretations, and we will therefore use it for our derivations. We emphasize that the Arccos is theoretically a more accurate approximation, because the $\ell_2$-layer is not linearized, and we provide derivations in \cref{sec:derivatives}. 



The GGN matrix for contrastive loss with the \textit{Euclidean} interpretation is given by 
\begin{align}\label{eq:ggn-euclidean}
     & \nabla^2_\theta \mathcal{L}_{\text{con}}(\theta; \mathcal{I}) = \sum_{ij \in \mathcal{I}} H^{ij}_\theta = \sum_{ij \in \mathcal{I}_{p}} H^{ij}_{\theta} + \sum_{ij \in \mathcal{I}_{n}} H^{ij}_{\theta}                                                                                   \\
     & \stackrel{\textsc{ggn}}{\approx} \sum_{ij \in \mathcal{I}_{p}} {J^{ij}_\theta}^\top \underbrace{\left(\begin{smallmatrix*}[r] 1 & -1 \\ -1 & 1 \end{smallmatrix*}\right)}_{:=H_p}
     J^{ij}_\theta + \sum_{ij \in \mathcal{I}_{n}} {J^{ij}_\theta}^\top \underbrace{\left(\begin{smallmatrix*}[r] -1 & 1 \\ 1 & -1 \end{smallmatrix*}\right)}_{:=H_n}
     J^{ij}_\theta, \nonumber
\end{align}
where $J^{ij}_\theta = \left(J_\theta f_\theta(x_i)^\top, J_\theta f_\theta(x_j)^\top \right)^\top$, with $J_\theta$ being the Jacobian wrt. the parameters, and where $H_p$ and $H_n$ are the Hessian of the contrastive loss wrt. the model output for positive and negative pairs. Notice that the first sum runs over positive pairs and the second sum runs over negative pairs \textit{within the margin}. Negative pairs outside the margin do not contribute to the Hessian, and can therefore be ignored to reduce the computational load~(\cref{sec:contrastive_loss}).

The eigenvalues of the Hessian wrt. to the output are $(0, 2)$ and $(-2, 0)$ for the positive $H_p$ and negative $H_n$ terms, so we are not guaranteed to have a positive definite Hessian, $H_{\theta}$.
To avoid covariances with negative eigenvalues, we propose three solutions to ensure a positive definite Hessian. Proofs are in \cref{sec:hessian_approximations}.




\begin{figure}
    \centering
    \includegraphics[width=1\linewidth]{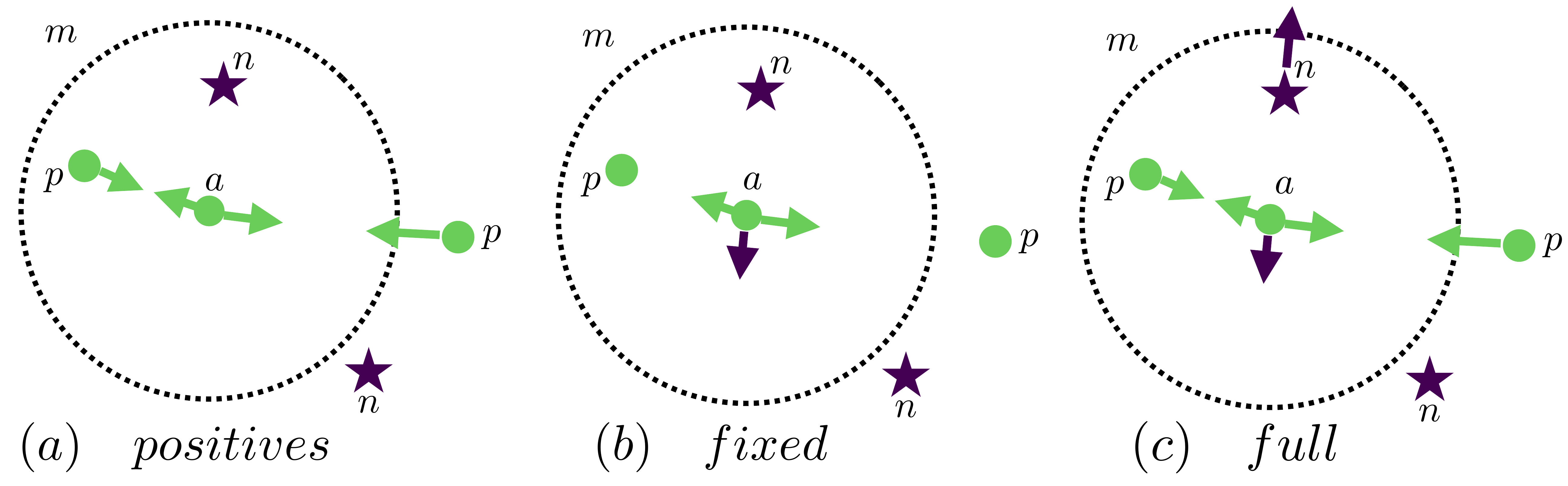}
    \caption{\textbf{Hessian approximations.} To ensure a positive definite Hessian approximation we propose three approximations. In (a) only the positives $p$ contribute to the Hessian as the negatives $n$ are ignored. In (b) we consider one point at a time, e.g., only the anchor $a$ contributes. In (c) we consider all interactions.}
    \vspace{-1\baselineskip}
    \label{fig:Hessian_approximations}
\end{figure}

\textbf{Ensuring positive definiteness of the Hessian}. We do not want to be restricted in the choice of the prior, so we must ensure that $\nabla^2_\theta\mathcal{L}_{\text{con}}(\theta^{*};\mathcal{D})$ is positive definite itself. Differently from the standard convex losses, this is not ensured by the GGN approximation~\cite{immer2020bnnlocallocalization}. 
Our main insight is that we can ensure a positive definite Hessian $H_\theta$ by only manipulating the Hessians $H_p$ and $H_n$ in \cref{eq:ggn-euclidean}. 

\textit{1. Positive: The repelling term is ignored, such that only positive pairs contribute to the Hessian.}
\begin{equation}\label{eq:pos}
    H_p = \begin{pmatrix*}[r]
        1 & -1 \\ -1 & 1
    \end{pmatrix*},
    \qquad
    H_n =\begin{pmatrix*}[r]
        0 & 0 \\ 0 & 0
    \end{pmatrix*}
\end{equation}

\textit{2. Fixed: The cross derivatives are ignored.} 
\begin{equation}\label{eq:fix}
    H_p = \begin{pmatrix*}[r]
        1 & 0 \\ 0 & 1
    \end{pmatrix*},
    \qquad
    H_n =\begin{pmatrix*}[r]
        -1 & 0 \\ 0 & -1
    \end{pmatrix*}
\end{equation}

\textit{3. Full: Nothing is ignored, but rather positive definiteness is ensured with ReLU, $\max(0, \nabla^2_\theta \mathcal{L}_{\text{con}}(\theta;\mathcal{D}))$, on the Hessian of the loss wrt. the parameters. 
}
\begin{equation}\label{eq:full}
    H_p = \begin{pmatrix*}[r]
        1 & -1 \\ -1 & 1
    \end{pmatrix*},
    \qquad
    H_n =\begin{pmatrix*}[r]
        -1 & 1 \\ 1 & -1
    \end{pmatrix*}
\end{equation}


The \textit{positive} approximation is inspired by~\citet{shi2019probabilistic}, which also only uses positive pairs to train their uncertainty module. The gradient arrows in \cref{fig:Hessian_approximations} (a) illustrate that negative pairs are neglected when computing the Hessian of the contrastive loss. The
\textit{fixed} approximation considers one data point at the time, assuming the other one is fixed. Thus, given a pair of data points, this can be interpreted as first moving one data point, and then the second (rather than both at the same time). We formalize this in \cref{sec:hessian_approximations}. \cref{fig:Hessian_approximations} (b) illustrate this idea when all points except $a$ are fixed. 
Lastly, we propose the \textit{full} Hessian of the contrastive loss (\cref{fig:Hessian_approximations} (c)) and ensure a positive definiteness by computing the ReLU of the Hessian. We note that this approximation assumes a diagonal Hessian. We experimentally determine, which of these three methods to ensure a positive definiteness leads to better performance (\cref{sec:ablation}).


\textbf{Intuition of Hessian approximations.} With \cref{fig:intuition} we try to provide some intuition on these three approximations and their impact in a clustered versus cluttered latent space. Positive points will increase the precision (magnitude of the Hessian) while negative points inside the margin will decrease the precision (as indicated with the arrows in \cref{fig:intuition}).
Negative points outside the margin will not affect the gradient or the Hessian.
This is the desired behavior as a perfect clustered latent space (\cref{fig:intuition} b) will have high precision (low uncertainty), whereas in a cluttered latent space (\cref{fig:intuition} a) will have many negatives within the margin, which will decrease the precision (large uncertainty).
Therefore, the positive approximation (\cref{eq:pos}) will provide the lowest uncertainty, as the negatives are excluded.
In practice, we cannot compute the Hessian for all pairs, as this scales quadratically in time and memory with the dataset size. Furthermore, most pairs, namely the negatives outside the margin, do not contribute to the Hessian, so it is wasteful to compute their Hessian. Therefore, we make use of a common trick in metric learning, namely hard negative mining~\cite{musgrave2020metric}.

\begin{figure}
    \centering
    \begin{subfigure}{0.49\linewidth}
        \centering
        \includegraphics[width=\linewidth]{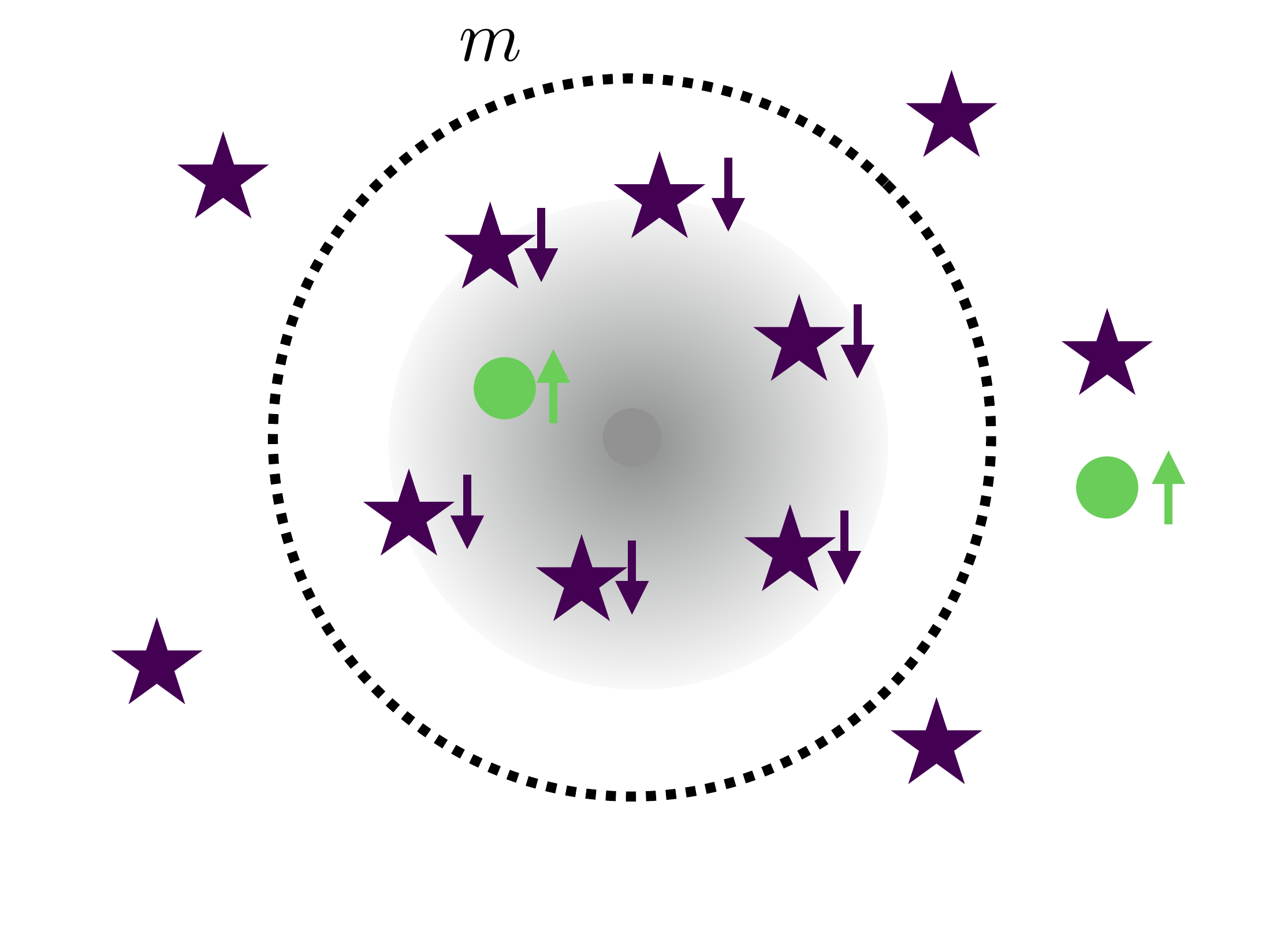}
        \vspace{-2.0\baselineskip}
        \caption{Cluttered}
        \label{fig:sub1}
    \end{subfigure}%
    \begin{subfigure}{0.49\linewidth}
        \centering
        \includegraphics[width=\linewidth]{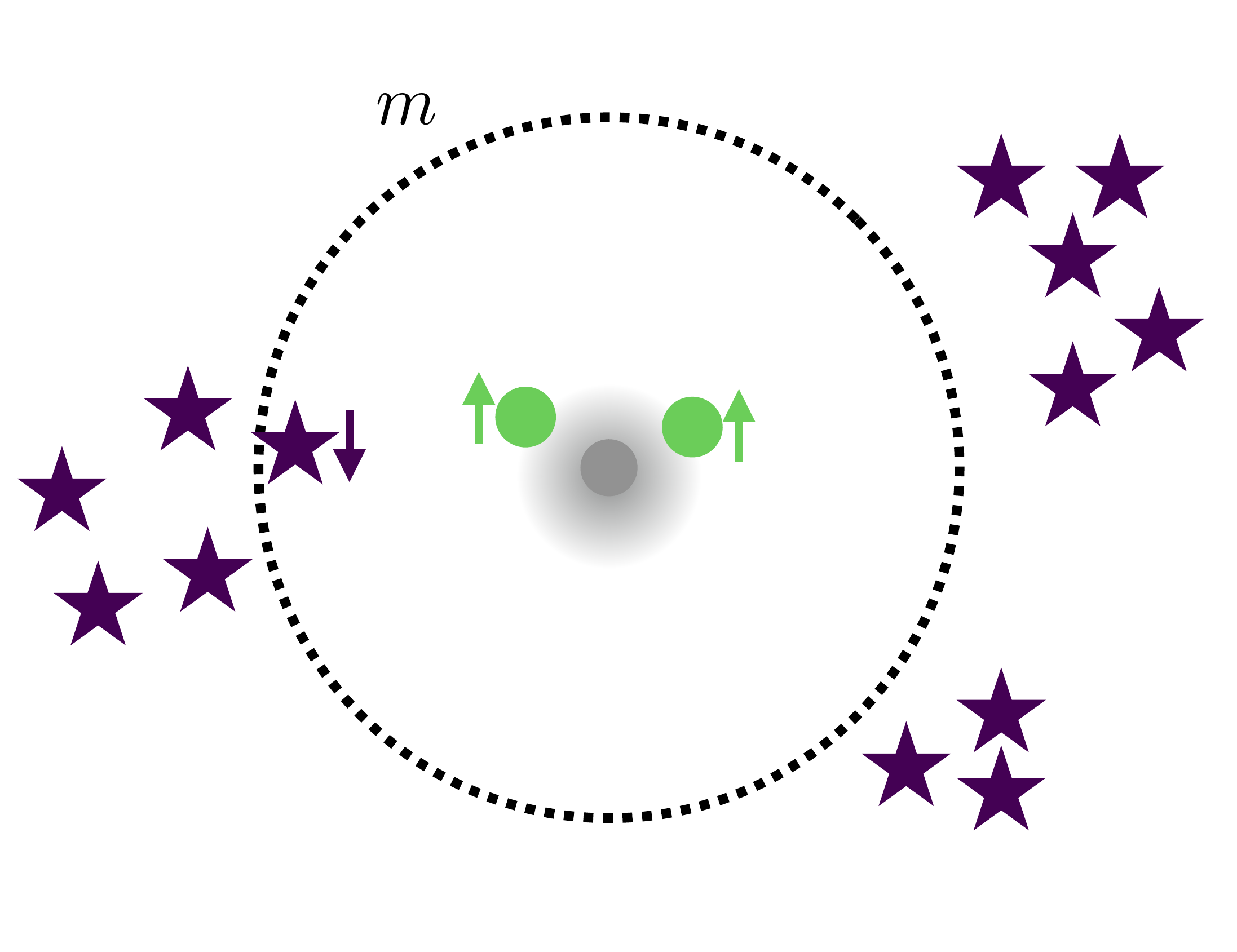}
        \vspace{-2.0\baselineskip}
        \caption{Clustered}
        \label{fig:sub2}
    \end{subfigure}
    \vspace{-0.6\baselineskip}
    \caption{\textbf{Intuition of $\mathbf{H_p}$ and $\mathbf{H_n}$.} Illustration of a cluttered latent space, where observations from different classes are close, and a clustered latent space with distinct clusters with observations from the same class. Intuitively, negative examples $\star$ inside the margin decreases the precision (higher variance) with $H_n$, and positive points $\bullet$ will increase the precision (lower variance) with $H_p$.}
    \vspace{-1\baselineskip}
    \label{fig:intuition}
\end{figure}

\textbf{Hard negative mining leads to biased Hessian approximation.} Metric losses operate locally, hence non-zero losses will only occur for negative examples that are close to the anchor in the latent space. Presenting the model for randomly sampled data will in practice rarely result in negative pairs that are close, which leads to extremely long training times. Therefore, \textit{hard negative mining} is often used to construct batches that have an over-representation of hard negative pairs (negative examples that are close to the anchor). We use similar mining when computing the Hessian, which leads to a biased estimate of the Hessian. We thus introduce a scaling parameter $0 \leq w_n \leq 1$ to obtain
\begin{equation}
    \hat{H}_\theta = (1-w_n) J_\theta^\top H_p J_\theta + w_n  J_\theta^\top H_n J_\theta,
\end{equation}
which corrects the biased estimate (see \cref{appendix:mining} for proof). Our positive approximation (\cref{eq:pos}) can be seen as the extreme case with $w_n = 0$, whereas our full approximation \cref{eq:full} ($w_n = 0.5$) corresponds to unbiased sampling.

Having an estimate of the Hessian in place and three methods to ensure it is positively definite, we can perform both post-hoc and online LA to obtain a distribution over the weights. We can sample from this weight posterior and embed an input image via each sampled network to obtain multiple samples in the latent space (see \cref{fig:model_overview}). We reduce these samples to a single measure of uncertainty by estimating the parameters of a von~Mises-Fisher distribution. 

\textbf{Why the von~Mises-Fisher distribution?} 
The uncertainty deduced from LA is usually computed by the variance of the samples~\cite{daxberger2021laplaceredux}. This assumes that the samples follow a Gaussian distribution. However, for $\ell_2$-normalized networks, we assume that all the probability mass lies on a $Z$-dimensional hyper-sphere. The von~Mises-Fisher distribution describes such distribution, and it is parametrized with a directional mean $\mu$ and a scalar concentration parameter $\kappa$, which can be interpreted as the inverse of an isotropic covariance $\kappa = 1 / \sigma^2$, i.e., small $\kappa$ means high uncertainty and large $\kappa$ means low uncertainty.

There exist several methods to estimate $\kappa$. We opt for the simplest and most computationally efficient~\cite{sra2011vonmises} (see \cref{sec:normalization-vmf}). Prior works~\citep{shi2019probabilistic, taha2019dropout, taha2019ensembles} have treated an estimated latent distribution as a Gaussian, although all probability mass lies on the unit sphere. This is insufficient, as samples from the Gaussian distribution will not lie on the unit sphere. In the experimental work, we correct this by projecting the samples onto the unit sphere.

\begin{figure*}
  \centering
  \begin{subfigure}{\dimexpr.5\linewidth-0.160in\relax}
    \centering
    \includegraphics[width=1\linewidth]{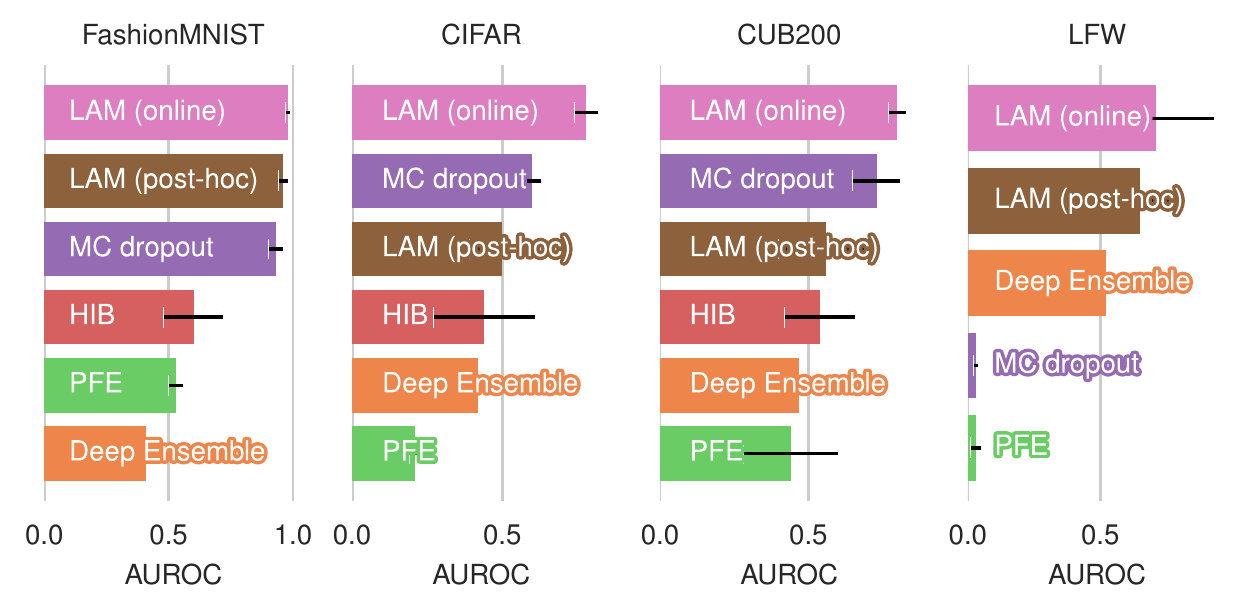}
    \vspace{-1.5\baselineskip}
    \caption{Out-of-distribution detection.}
    \label{fig:overview-auroc}
  \end{subfigure}
  \hspace{\the\columnsep}
  \begin{subfigure}{\dimexpr.5\linewidth-0.160in\relax}
    \centering
    \includegraphics[width=1\linewidth]{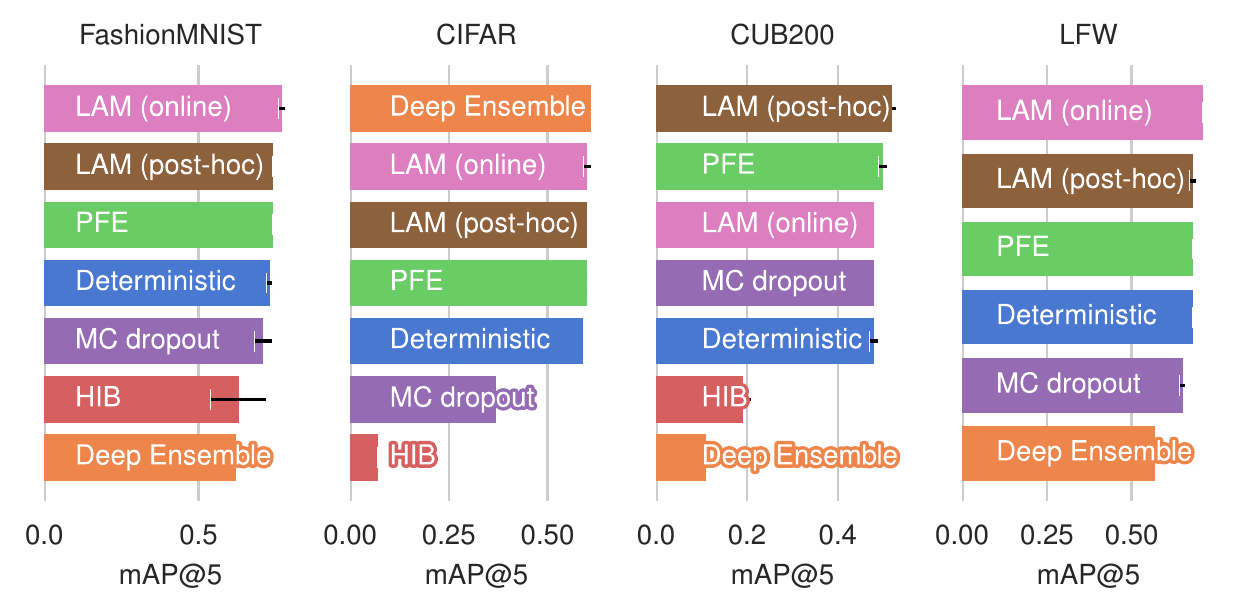}
    \vspace{-1.5\baselineskip}
    \caption{Predictive performance.}
    \label{fig:overview-map}
  \end{subfigure}
  \caption{\textbf{Summary of experimental results.} LAM consistently outperforms existing methods on \textsc{OoD} detection, as measured by AUROC, and matches or surpasses in predictive performance measured by mAP@$k$. Error bar shows $\pm$ one standard deviation measured across five runs.}
  \vspace{-1.2\baselineskip}
  \label{fig:overview}
\end{figure*}

\section{Experiments}

We benchmark our method against strong probabilistic retrieval models. Probabilistic Face Embeddings (PFE) \citep{shi2019probabilistic} and Hedge Image Embedding (HIB) \citep{oh2018modeling} perform amortized inference and thus estimate the mean and variance of latent observation. We also compare against MC Dropout~\citep{gal2015mcdropout} and Deep Ensemble~\citep{Lakshminarayanan2016deepensembles}, two approximate Bayesian methods, which have successfully been applied in image retrieval~\citep{taha2019dropout, taha2019ensembles}.

We compare the models' \textit{predictive performance} with the recall (recall@$k$) and mean average precision (mAP@$k$) among the $k$ nearest neighbors \citep{warburg2020btl, musgrave2020metric, Arandjelovic2015netvlad}. We evaluate the models' abilities to \textit{interpolate} and \textit{extrapolate} uncertainties by measuring the Area Under the Sparsification Curve (\textsc{AUSC}) and Expected Calibration Error (\textsc{ECE}) on in-distribution (ID) data, and the Area Under Receiver Operator Curve (\textsc{AUROC}) and Area Under Precision-Recall Curve (\textsc{AUPRC}) on out-of-distribution (\textsc{OoD}) data. We provide more details on these metrics in \cref{sec:metrics}.

We extend StochMan \citep{software:stochman} with the Hessian backpropagation for the contrastive loss. The training code is implemented in PyTorch \citep{paszke2017automatic} and is available on GitHub\footnote{Code: \url{https://github.com/FrederikWarburg/bayesian-metric-learning}}. \cref{sec:experimental_details} provides more details on the experimental setup.

\textbf{Experimental Summary.} We begin with a short summary of our experimental results. Across five datasets, three different network architectures, and three different sizes of the latent space (ranging from $3$ to $2048$), we find that LAM has well-calibrated uncertainties, reliably detects \textsc{OoD} examples, and achieves state-of-the-art predictive performance. \cref{fig:overview-auroc} shows that the uncertainties from online LAM reliably identify \textsc{OoD} examples. Online LAM outperforms other Bayesian methods, such as post-hoc LAM and MC dropout, on this task, which in turn clearly improves upon amortized methods that rely on a neural network to extrapolate uncertainties. \cref{fig:overview-map} shows that LAM consistently matches or outperforms existing image retrieval methods in terms of predictive performance. We find that the fixed Hessian approximation with the Arccos distance performs the best, especially on higher dimensional data. 

\textbf{Ablation: Positive Definiteness of the Hessian.}\label{sec:ablation} We experimentally study which method to ensure a positive definite Hessian has the best performance measured in both predictive performance (mAP@$5$) and uncertainty quantification (\textsc{AUROC} , \textsc{AUSC}).
We found that all these methods perform similarly on simple datasets and low dimensional hyper-spheres, but the fixed approximation with Arccos distance performs better on more challenging datasets and higher dimensional hyper-spheres.
We present results on one of these more challenging datasets, namely the LFW~\cite{LFWTech} face recognition dataset with the CUB200~\cite{WahCUB_200_2011} bird dataset as an \textsc{OoD} dataset.
We use a ResNet50~\cite{he2016deep} with a GeM pooling layer \cite{Radenovic2017gem} and a $2048$ dimensional embedding and diagonal, last-layer LA~\cite{daxberger2021laplaceredux}.

\cref{tab:ablation_ood} shows the performance for post-hoc and online LA with fixed, positive, or full Hessian approximation using either Euclidean or Arccos distance. Across all metrics, the online LA with Arccos distance and the fixed Hessian approximation performs similarly or the best. We proceed to benchmark this method against several strong probabilistic baselines on closed-set retrieval and a more challenging open-set retrieval.

\begin{table}[]
    \centering
        \caption{\textbf{Ablation on Hessian approximation and GGN decomposition.} Online LA with the fixed approximation and Arccos distance performs best. Error bars show one std. deviation across five runs.\looseness=-1 }
\resizebox{\linewidth}{!}{
    \begin{tabular}{cllll}
    \toprule
    & &   mAP@$5$ $\uparrow$   &   \textsc{AUROC}  $\uparrow$   &   \textsc{AUSC}  $\uparrow$\\ \midrule 
    \multirow{6}{*}{\rotatebox{90}{Post-hoc}}
&  Euclidean fix & \second 0.70 $\pm$ 0.0 & 0.57 $\pm$ 0.25 & 0.44 $\pm$ 0.01 \\
 &  Euclidean pos & \second 0.70 $\pm$ 0.0 & 0.58 $\pm$ 0.23 & 0.45 $\pm$ 0.01 \\
 &  Euclidean full & \second 0.70 $\pm$ 0.0 & 0.56 $\pm$ 0.26 & 0.44 $\pm$ 0.01 \\
 &  Arccos fix & \third 0.69 $\pm$ 0.0 & 0.53 $\pm$ 0.20 & 0.46 $\pm$ 0.02 \\
 &  Arccos pos & \second 0.70 $\pm$ 0.0 & 0.29 $\pm$ 0.11 & \third 0.48 $\pm$ 0.01 \\
 &  Arccos full & \third 0.69 $\pm$ 0.0 & 0.55 $\pm$ 0.18 & 0.45 $\pm$ 0.01 \\ \hline
 \multirow{6}{*}{\rotatebox{90}{Online}}
 &  Euclidean fix & 0.63 $\pm$ 0.01 & \second 0.77 $\pm$ 0.04 & 0.31 $\pm$ 0.02 \\
 &  Euclidean pos & \second 0.70 $\pm$ 0.0 & 0.38 $\pm$ 0.10 & 0.47 $\pm$ 0.01 \\
 &  Euclidean full & 0.67 $\pm$ 0.01 & 0.59 $\pm$ 0.04 & 0.42 $\pm$ 0.01 \\
 &  Arccos fix & \first \tabf{0.71 $\pm$ 0.0} & \first \tabf{0.78 $\pm$ 0.18} & \second 0.50 $\pm$ 0.03 \\
 &  Arccos pos & \second 0.70 $\pm$ 0.0 & 0.23 $\pm$ 0.03 & 0.46 $\pm$ 0.00 \\
 &  Arccos full & \first \tabf{0.71 $\pm$ 0.0} & \third 0.70 $\pm$ 0.12 & \first \tabf{0.51 $\pm$ 0.01} \\
 \bottomrule
    \end{tabular}
    }
    \vspace{-1.5\baselineskip}
    \label{tab:ablation_ood}
\end{table}


\textbf{Closed-Set Retrieval.} \textsc{OoD} capabilities are critical for identifying distributional shifts, outliers, and irregular user inputs, which might hinder the propagation of erroneous decisions in an automated system.
We evaluate \textsc{OoD} performance on the commonly used benchmarks \citep{nilisnick2019knownothing}, where we use (1) FashionMNIST \citep{xiao2017fashionmnist} as \textsc{ID} and MNIST \citep{lecun1998mnist} as \textsc{OoD}, and (2) CIFAR10 \cite{Krizhevsky09learningmultiple} as \textsc{ID} and SVHN \cite{netzer2011reading} as \textsc{OoD}.
We use, respectively, a standard $2$- or $3$-layer relu convolutional network followed by a single linear layer on which we compute LA with a diagonal Hessian.

\begin{table*}
    \centering
    \caption{\textbf{Closed-set results.} LAM matches or outperforms existing methods in terms of predictive performance. It produces reliable uncertainties \textsc{ID} and \textsc{OoD} on two standard datasets FashionMNIST and CIFAR10. Confidence intervals show one standard deviation computed across five runs. 
    }
    \label{tab:ooddetection}
    \small
    \begin{tabular}{cl|lll|ll|ll}
        \toprule
         &                & \multicolumn{3}{c|}{\textsc{Image retrieval}} & \multicolumn{2}{c|}{\textsc{OoD}} & \multicolumn{2}{c}{\textsc{Calibration}}                                                                                                     \\
         &                & mAP@1 $\uparrow$                                        & mAP@5 $\uparrow$                            & mAP@10 $\uparrow$                                  & \textsc{AUROC}  $\uparrow$                 & \textsc{AUPRC}  $\uparrow$                  & \textsc{AUSC}   $\uparrow$                 & \textsc{ECE}  $\downarrow$                    \\ \midrule
        \multirow{7}{*}{\rotatebox{90}{FashionMNIST}}
         & Deterministic  & \second 0.78 $\pm$ 0.01                               & \third  0.73 $\pm$ 0.01                   & \second 0.72 $\pm$ 0.01                          & ---                    & ---                    & ---                    & ---                    \\
         & Deep Ensemble  & 0.69                                          & 0.62                              & 0.59                                     & 0.41                   & 0.46                   & 0.61                   & 0.04                   \\
         & PFE            & \second 0.78 $\pm$ 0.00                               & \second 0.74 $\pm$ 0.00                   & \second 0.72 $\pm$ 0.00                          & 0.53 $\pm$ 0.03        & 0.46 $\pm$ 0.01        & 0.65 $\pm$ 0.01        & 0.26 $\pm$ 0.02        \\
         & HIB            & 0.69 $\pm$ 0.08                               & 0.63 $\pm$ 0.09                   & 0.61 $\pm$ 0.09                          & 0.60 $\pm$ 0.12        & 0.60 $\pm$ 0.11        & 0.65 $\pm$ 0.08        & \third 0.54 $\pm$ 0.08        \\
         & MC dropout     & \third 0.76 $\pm$ 0.03                               & 0.71 $\pm$ 0.03                   & \third  0.70 $\pm$ 0.03                          & \third 0.93 $\pm$ 0.03        & \third 0.93 $\pm$ 0.03        & \third  0.84 $\pm$ 0.06        & \second 0.03 $\pm$ 0.04        \\
         & LAM (post-hoc) & \second 0.78 $\pm$ 0.00                               & \second 0.74 $\pm$ 0.00                   & \second 0.72 $\pm$ 0.00                          & \second 0.96 $\pm$ 0.02        & \second 0.96 $\pm$ 0.02        & \second 0.86 $\pm$ 0.01        & \second 0.03 $\pm$ 0.00        \\
         & LAM (online)   & \first \tabf{0.81 $\pm$ 0.00}                        & \first \tabf{0.77 $\pm$ 0.01}            & \first \tabf{0.76 $\pm$ 0.01}                   & \first \tabf{0.98 $\pm$ 0.01} & \first \tabf{0.98 $\pm$ 0.01} & \first \tabf{0.89 $\pm$ 0.01} & \first \tabf{0.02 $\pm$ 0.00} \\
        \midrule
        \multirow{7}{*}{\rotatebox{90}{Cifar10}}
         & Deterministic  & \first \tabf{0.66 $\pm$ 0.00}                        & \third 0.59 $\pm$ 0.00                   & \second 0.58 $\pm$ 0.00                          & ---                    & ---                    & ---                    & ---                    \\
         & Deep Ensemble  & \first \tabf{0.66}                                   & \first \tabf{0.61}                       & \first \tabf{0.59}                              & 0.42                   & 0.67                   & \third 0.72                   & \second 0.02                   \\
         & MC dropout     & \second 0.46 $\pm$ 0.01                               & 0.37 $\pm$ 0.01                   & 0.34 $\pm$ 0.01                          & \second 0.60 $\pm$ 0.03        & \second 0.76 $\pm$ 0.02        & 0.61 $\pm$ 0.01        & 0.05 $\pm$ 0.00        \\
         & HIB            & \third 0.11 $\pm$ 0.01                               & 0.07 $\pm$ 0.00                   & 0.05 $\pm$ 0.00                          & 0.44 $\pm$ 0.17        & \third 0.70 $\pm$ 0.1         & 0.29 $\pm$ 0.03        & \third 0.04 $\pm$ 0.02        \\
         & PFE            & \first \tabf{0.66 $\pm$ 0.00}                        & \second 0.60 $\pm$ 0.00                   & \second 0.58 $\pm$ 0.00                          & 0.21 $\pm$ 0.02        & 0.56 $\pm$ 0.01        & 0.56 $\pm$ 0.01        & 0.11 $\pm$ 0.01        \\
         & LAM (post-hoc) & \first \tabf{0.66 $\pm$ 0.00}                        & \second 0.60 $\pm$ 0.00                   & \second 0.58 $\pm$ 0.00                          & \third 0.50 $\pm$ 0.11        & 0.69 $\pm$ 0.07        & \second 0.81 $\pm$ 0.01        & 0.23 $\pm$ 0.01        \\ 
         & LAM (online)   & \first \tabf{0.66 $\pm$ 0.01}                        & \second 0.60 $\pm$ 0.00                   & \third 0.57 $\pm$ 0.01                          & \first \tabf{0.78 $\pm$ 0.04} & \first \tabf{0.85 $\pm$ 0.03} & \first \tabf{0.83 $\pm$ 0.01} & \first \tabf{0.01 $\pm$ 0.00} \\ 
        \bottomrule
    \end{tabular}
    \vspace{-1.0\baselineskip}
\end{table*}

\textit{FashionMNIST (ID) vs MNIST (\textsc{OoD}).} \cref{tab:ooddetection} shows that both PFE and post-hoc LAM have a similar predictive performance to the deterministic model.
This is not surprising, as both methods are initialized with the deterministic parameters, and then uncertainties are learned (PFE) or deduced (post-hoc LAM) with frozen weights.
The awareness of uncertainties during training, grants the online LAM slightly higher predictive performance.

PFE uses amortized inference to predict variances. This works reasonably within distribution but does not work well for \textsc{OoD} detection. This is because a neural network is trusted to extrapolate far away from the data distribution. \cref{tab:ooddetection} shows that MC dropout, LAM (post-hoc), and LAM (online) assign high uncertainty to observations outside the training distribution.
\cref{fig:ece} shows that both post-hoc and online LAM are near perfectly calibrated, which leads to very low \textsc{ECE}  measure (\cref{tab:ooddetection}).

\textit{Cifar10 (ID) vs SVHN (\textsc{OoD})} is a slightly harder setting.
\cref{tab:ooddetection} yields similar conclusions as before; Bayesian approaches such as MC dropout, LAM (post-hoc), and LAM (online) are much better at detecting \textsc{OoD} examples than neural amortized methods such as PFE\@.
Online LAM has a similar predictive performance to state-of-the-art while having better \textsc{ID} (lower \textsc{ECE}  and higher \textsc{AUSC}) and \textsc{OoD} (higher \textsc{AUROC}  and \textsc{AUPRC}) performance. 
\cref{fig:ece} shows the calibration plot for CIFAR10.
For this dataset, online LAM has a near-perfect calibration curve.
\cref{fig:auroc} shows the ROC curves for CIFAR10 and highlights that online LAM is better at distinguishing \textsc{ID} and \textsc{OoD} examples.

\begin{figure}
    \centering
    \begin{subfigure}{0.45\linewidth}
        \centering
        \includegraphics[width=\linewidth]{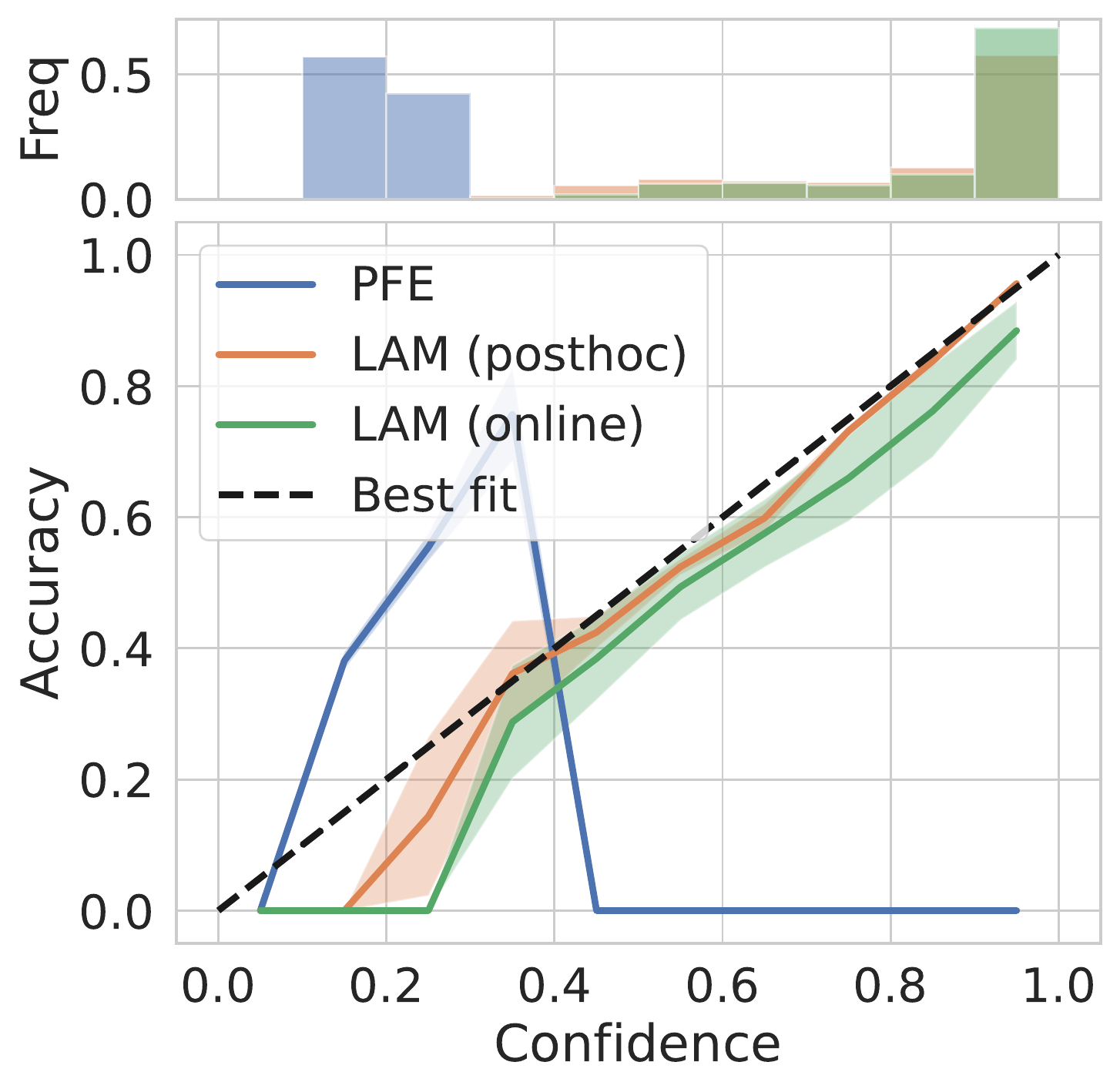}
        \caption{FashionMNIST}
    \end{subfigure} %
    \begin{subfigure}{0.45\linewidth}
        \centering
        \includegraphics[width=\linewidth]{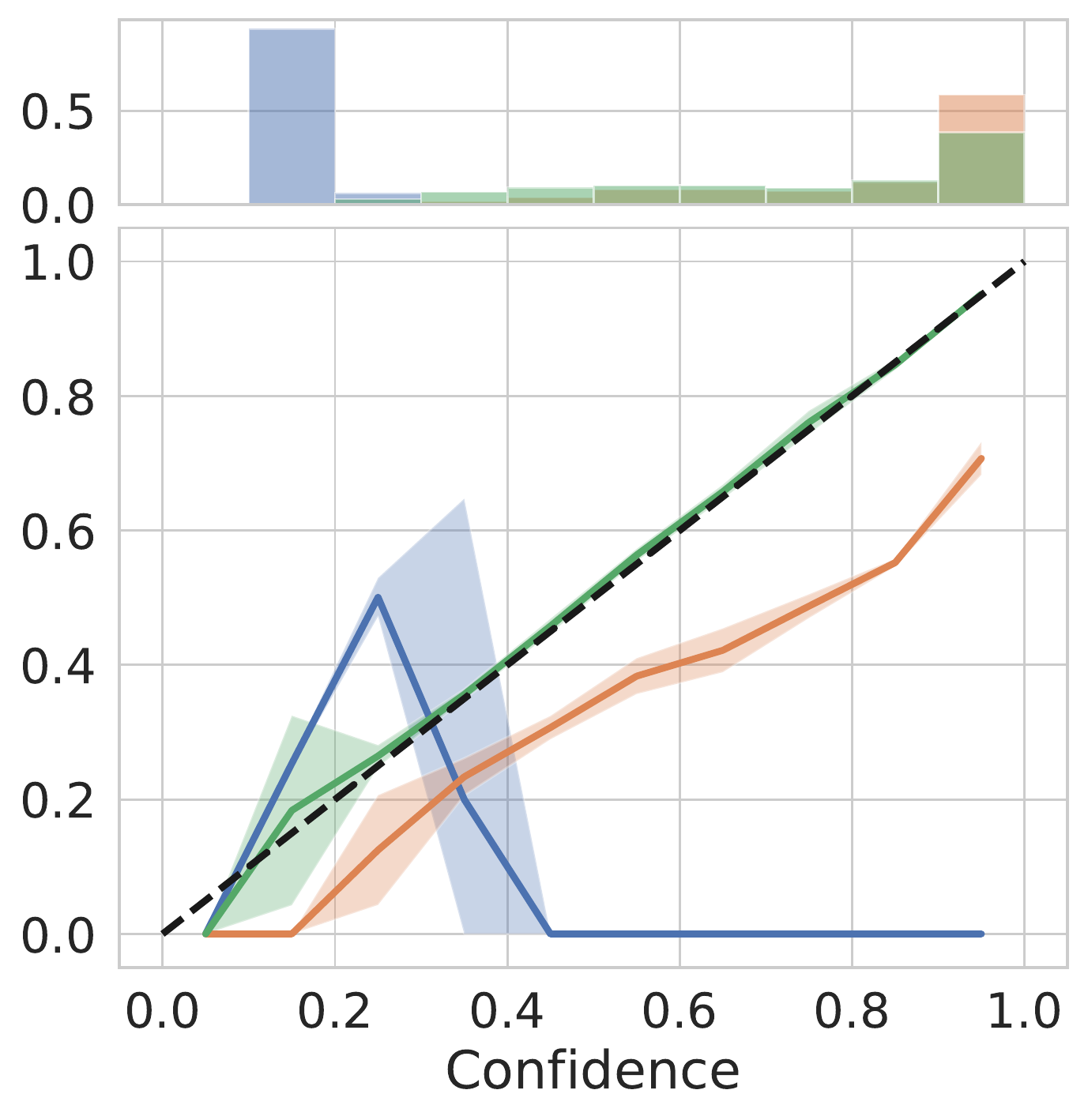}
        \caption{CIFAR10}
    \end{subfigure}
    \caption{\textbf{Calibration Curves.} LAM is near perfectly calibrated on FashionMNIST and CIFAR10.}
    \label{fig:auroc}
\end{figure}

\begin{figure}
    \centering
    \begin{subfigure}{0.49\linewidth}
        \includegraphics[width=\linewidth]{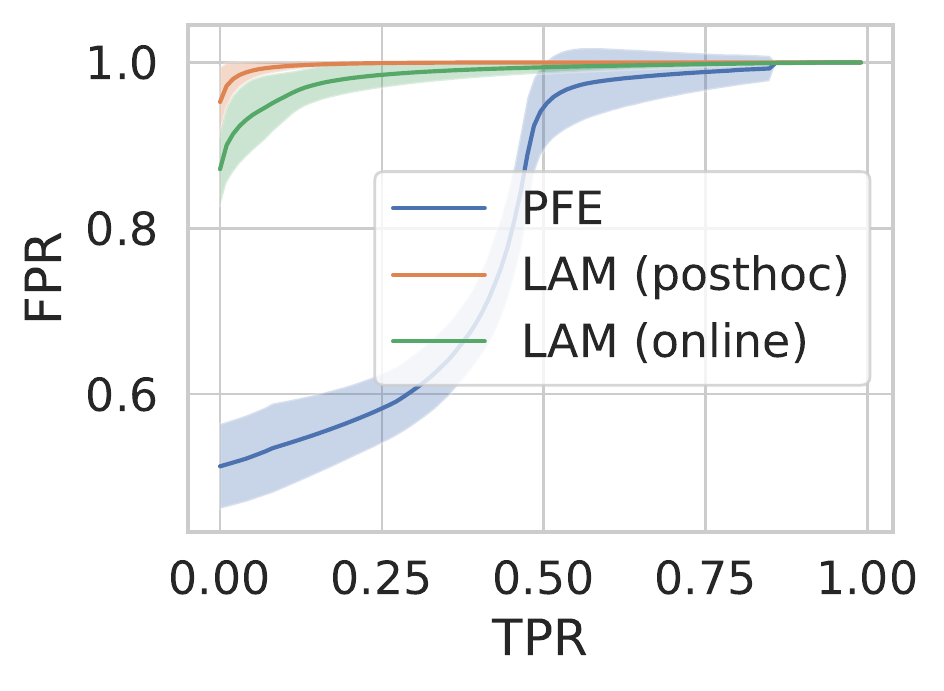}
        \caption{FashionMNIST}
    \end{subfigure} %
    \begin{subfigure}{0.49\linewidth}
        \includegraphics[width=\linewidth]{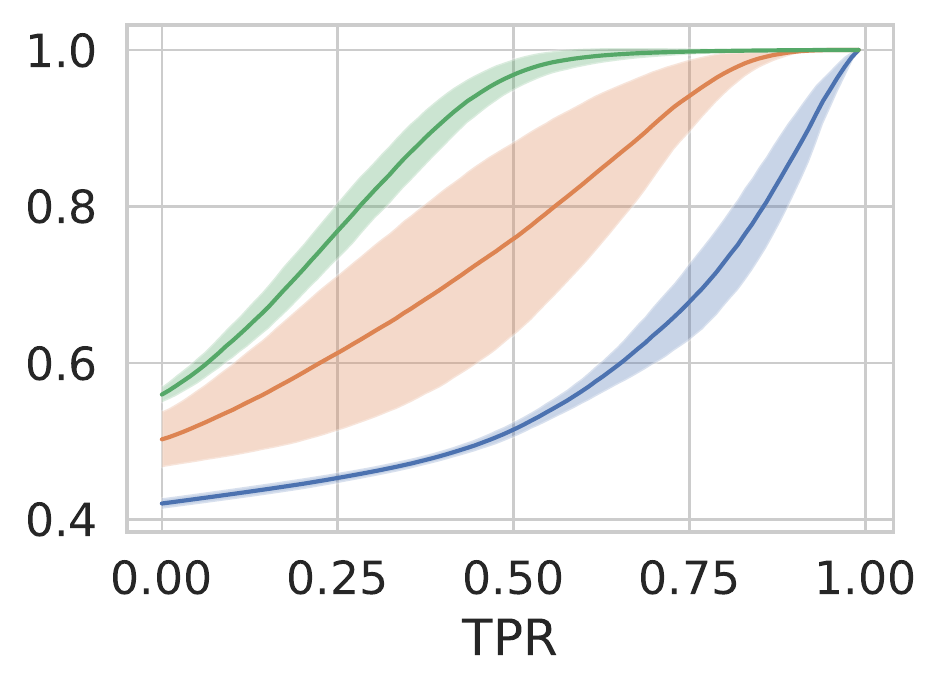}
        \caption{CIFAR10}
    \end{subfigure}
    \caption{\textbf{Receive Operator Curves} LAM assign high uncertainty to \textsc{OoD} observations.}
    \label{fig:ece}
\end{figure}

\begin{table*}[t]
    \centering
    \caption{\textbf{Open-set results.} LAM matches or outperforms existing methods in terms of predictive performance and produces state-of-the-art uncertainty quantification for challenging zero-shot metric learning datasets LFW and CUB200. Confidence intervals show one standard deviation computed across five runs.}
    \label{tab:cub200}
    \small
    \begin{tabular}{cl|lll|ll|l}
        \toprule
         &                & \multicolumn{3}{c|}{\textsc{Image retrieval}} & \multicolumn{2}{c|}{\textsc{OoD}} & \multicolumn{1}{c}{\textsc{ID}}                                                                            \\
         &                & mAP@1 $\uparrow$                              & mAP@5 $\uparrow$                  & mAP@10 $\uparrow$               & \textsc{AUROC}  $\uparrow$       & \textsc{AUPRC}  $\uparrow$       & \textsc{AUSC}  $\uparrow$        \\ \midrule  
        \multirow{7}{*}{\rotatebox{90}{CUB200}}
         & Deterministic  & \second 0.62 $\pm$ 0.01                               & \third 0.48 $\pm$ 0.01                   & \third 0.42 $\pm$ 0.01                 & ---                    & ---                                             \\
         & Deep Ensemble  & 0.21                                          & 0.11                              & 0.07                            & 0.47                   & 0.55                   & 0.21                   \\
         & PFE            & \second 0.62 $\pm$ 0.01                               & \second 0.5 $\pm$ 0.01                    & \second 0.43 $\pm$ 0.01                 & 0.44 $\pm$ 0.16        & 0.5 $\pm$ 0.08         & \third 0.61 $\pm$ 0.02        \\
         & HIB            & 0.33 $\pm$ 0.04                               & 0.19 $\pm$ 0.02                   & 0.14 $\pm$ 0.02                 & 0.54 $\pm$ 0.12        & 0.61 $\pm$ 0.1         & 0.31 $\pm$ 0.07        \\
         & MC dropout     & \third  0.61 $\pm$ 0.00                               & \third 0.48 $\pm$ 0.00                   & \third 0.42 $\pm$ 0.00                 & \second 0.73 $\pm$ 0.08        & \second 0.68 $\pm$ 0.07        & \second 0.63 $\pm$ 0.01        \\
         & LAM (post-hoc) & \first \tabf{0.65 $\pm$ 0.01}                        & \first \tabf{0.52 $\pm$ 0.01}            & \first \tabf{0.45 $\pm$ 0.01}          & \third 0.56 $\pm$ 0.16        & \third 0.61 $\pm$ 0.11        & \first \tabf{0.66 $\pm$ 0.03} \\
         & LAM (online)   & \third 0.61 $\pm$ 0.00                               & \third 0.48 $\pm$ 0.00                   & \third 0.42 $\pm$ 0.00                 & \first \tabf{0.80 $\pm$ 0.03} & \first \tabf{0.75 $\pm$ 0.03} & \second 0.63 $\pm$ 0.01        \\
        \midrule
        \multirow{6}{*}{\rotatebox{90}{LFW}}
         & Deterministic  & \second ${0.44 \pm 0.00}$                             & \second ${0.68 \pm 0.00}$                 & \second ${0.65 \pm 0.00}$               & ---                    & ---                    & ---                    \\ 
         & Deep Ensemble  & $0.36$                                        & $0.57$                            & $0.54$                          & \third $0.52$                 & \third  $0.64$                 & $0.33$                 \\ 
         & PFE            & \second ${0.44 \pm 0.00}$                             & \second ${0.68 \pm 0.00}$                 & \second ${0.65 \pm 0.00}$               & $0.03 \pm 0.02$        & $0.41 \pm 0.0$         & \second $0.49 \pm 0.01$        \\ 
         & MC dropout     & \third $0.42 \pm 0.00$                               & \third $0.65 \pm 0.01$                   & \third $0.63 \pm 0.01$                 & $0.03 \pm 0.01$        & $0.41 \pm 0.0$         & \third $0.46 \pm 0.01$        \\ 
         & LAM (post-hoc) & \second $0.44 \pm 0.01$                               & \second $0.68 \pm 0.01$                   & \second $0.65 \pm 0.00$                 & \second $0.65 \pm 0.14$        &  \second $0.72 \pm 0.11$        & $0.45 \pm 0.03$        \\ 
         & LAM (online)   & \first \tabf 0.46 $\pm$ 0.00                         & \first \tabf 0.71 $\pm$ 0.00             & \first \tabf 0.69 $\pm$ 0.00           & \first \tabf 0.71 $\pm$ 0.22  & \first \tabf 0.78 $\pm$ 0.17  & \first \tabf 0.50 $\pm$ 0.02  \\
        \bottomrule
    \end{tabular}
\end{table*}

\begin{figure*}
\centering
    \adjustbox{minipage=1.3em,valign=m}{\rotatebox{90}{PFE}}
    \begin{subfigure}[t]{\dimexpr.5\linewidth-2.0em\relax}
        \centering
        \includegraphics[width=.95\linewidth,valign=m]{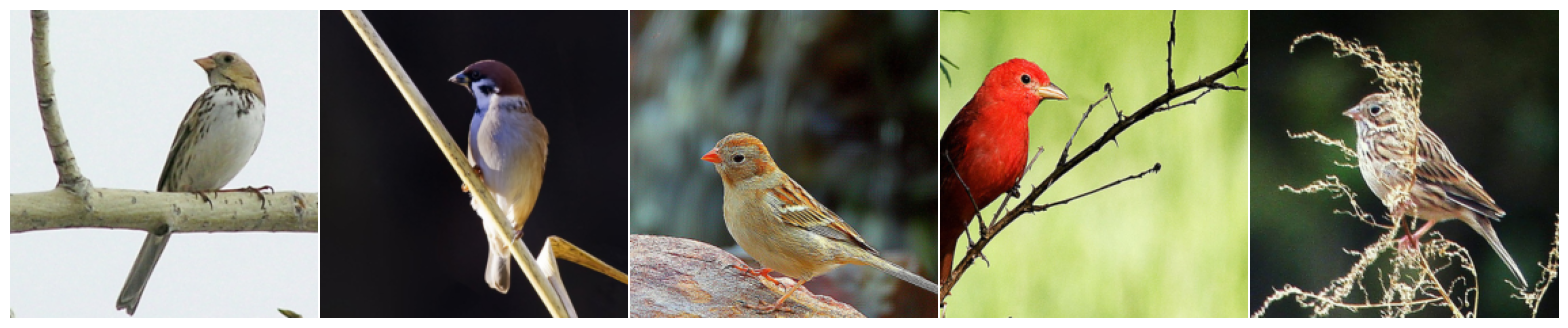}
    \end{subfigure}
    $\ldots$
    \adjustbox{minipage=1.3em,valign=m}{}
    \begin{subfigure}[t]{\dimexpr.5\linewidth-2.0em\relax}
        \centering
        \includegraphics[width=.95\linewidth,valign=m]{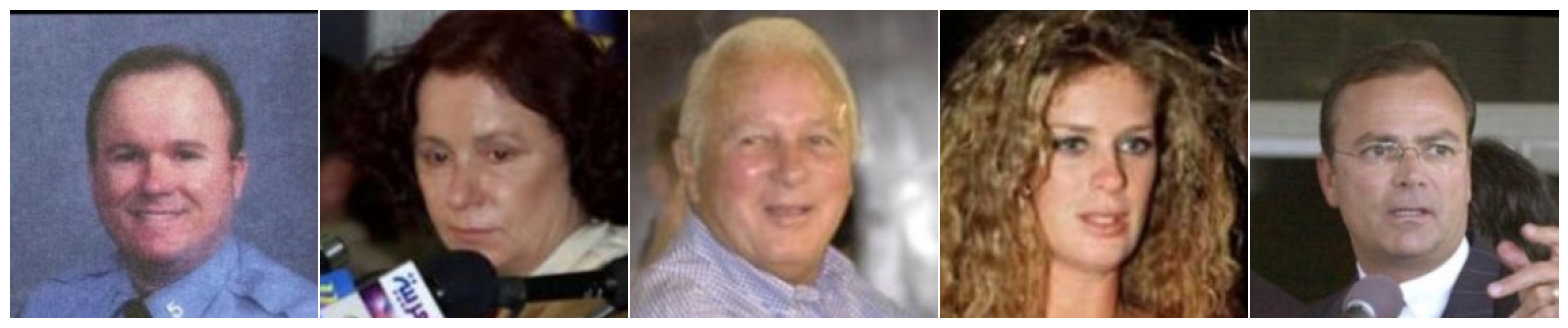}
    \end{subfigure}

    \adjustbox{minipage=1.3em,valign=m}{\rotatebox{90}{LAM (p)}}
    \begin{subfigure}[t]{\dimexpr.5\linewidth-2.0em\relax}
        \centering
        \includegraphics[width=.95\linewidth,valign=m]{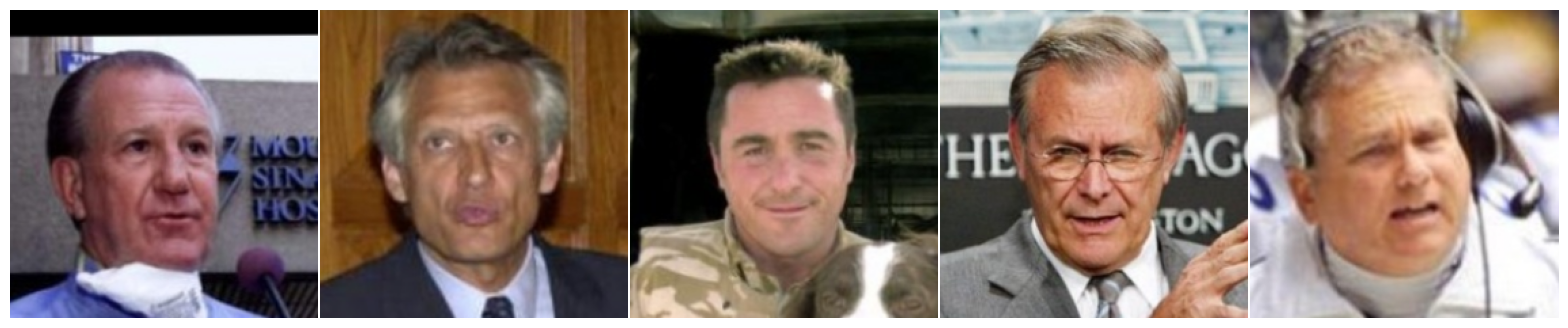}
    \end{subfigure}
    $\ldots$
    \adjustbox{minipage=1.3em,valign=m}{}
    \begin{subfigure}[t]{\dimexpr.5\linewidth-2.0em\relax}
        \centering
        \includegraphics[width=.95\linewidth,valign=m]{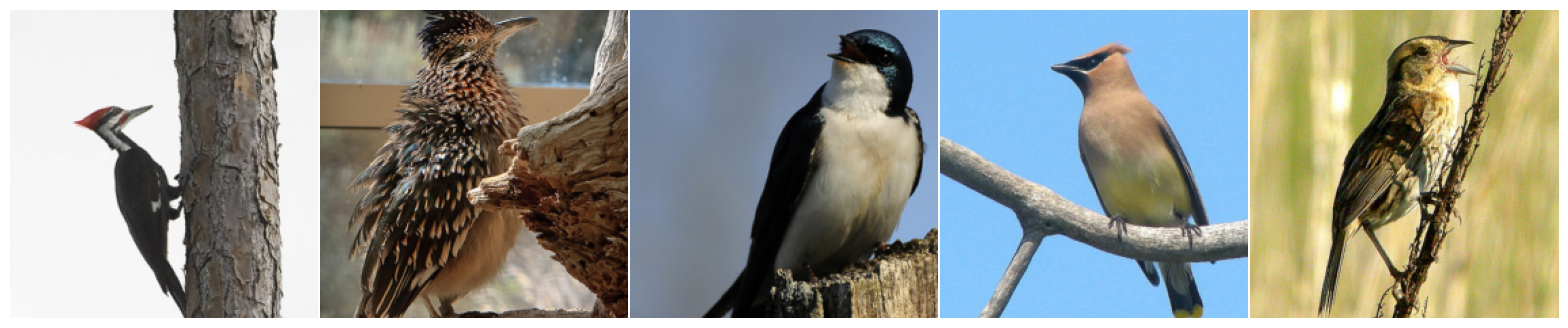}
    \end{subfigure}

    \adjustbox{minipage=1.3em,valign=m}{\rotatebox{90}{LAM (o)}}
    \begin{subfigure}[t]{\dimexpr.5\linewidth-2.0em\relax}
        \centering
        \includegraphics[width=.95\linewidth,valign=m]{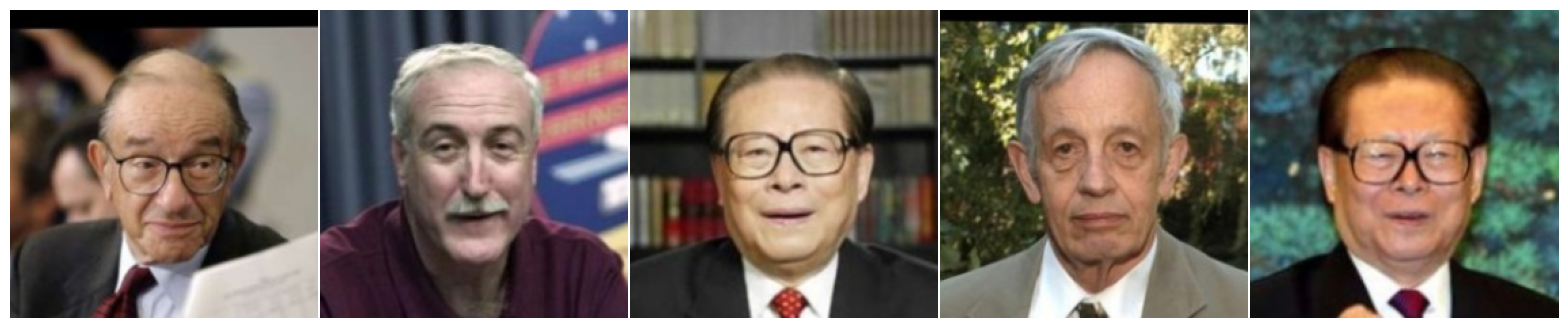}
    \end{subfigure}
    $\ldots$
    \adjustbox{minipage=1.3em,valign=m}{}
    \begin{subfigure}[t]{\dimexpr.5\linewidth-2.0em\relax}
        \centering
        \includegraphics[width=.95\linewidth,valign=m]{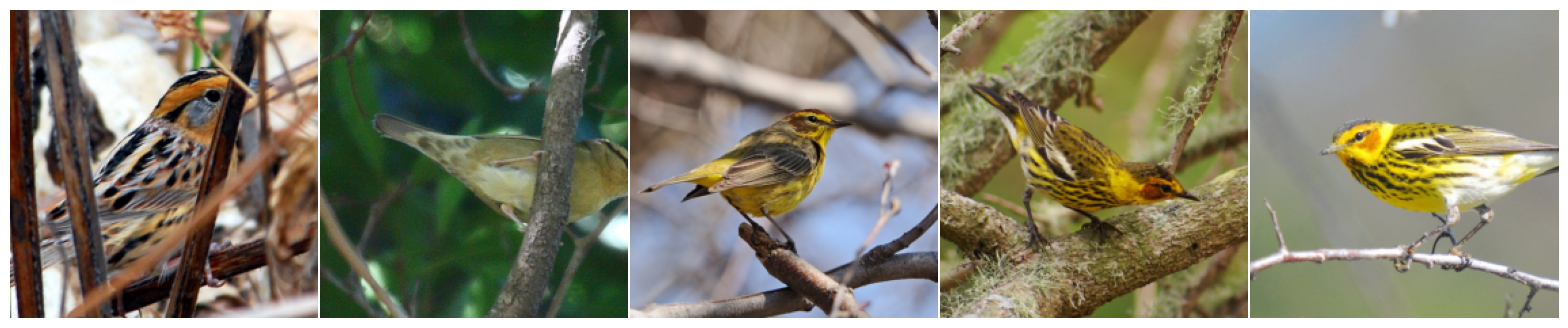}
    \end{subfigure}
    \hspace{1.2em}
    \begin{subfigure}[t]{\dimexpr1.0\linewidth-2.0em\relax}
        \begin{tikzpicture}[node distance=.5\linewidth-1.5em, thick]
            \node (1) {};
            \node (2) [right of=1] {};
            \draw[-latex, thick, shorten <=2pt, shorten >=2pt]    (2) to node [font=\small, preaction={fill, white}] {Lower uncertainty} (1);
        \end{tikzpicture}
        \hspace{1.2em}
        \begin{tikzpicture}[node distance=.5\linewidth-1.5em, thick]
            \node (1) {};
            \node (2) [right of=1] {};
            \draw[-latex, thick, shorten <=2pt, shorten >=2pt]    (1) to node [font=\small, preaction={fill, white}] {Higher uncertainty} (2);
        \end{tikzpicture}
    \end{subfigure}
    \caption{\textbf{Images with lowest and highest variance} for PFE, post-hoc LAM, and online LAM across LFW (ID) and CUB200 (\textsc{OoD}) datasets. LAM associates high uncertainty to \textsc{OoD} examples, whereas PFE predicts higher uncertainties for \textsc{ID} images. Shows the best-performing PFE and online LAM across five runs.}
    \label{fig:lfw_high_low_uq_images}
\end{figure*}

\textbf{Open-Set Retrieval.} A key advantage of metric learning methods is that they easily cope with a large number of classes and new classes can be added seamlessly.
We therefore evaluate LAM's performance on challenging open-set retrieval, where none of the classes in the test set are available during training.
We first test with CUB200 \citep{WahCUB_200_2011} as \textsc{ID} and CAR196 \citep{krause2013cars} as \textsc{OoD} similarly to \citet{warburg2020btl}, and second, test with LFW~\cite{LFWTech} as \textsc{ID} and CUB200 as \textsc{OoD}.
We use a ResNet50~\cite{he2016deep} with a GeM pooling layer \cite{Radenovic2017gem} and a $2048$ dimensional embedding and diagonal, last-layer LA~\cite{daxberger2021laplaceredux}.
\looseness=-1

\textit{CUB200 (ID) vs CARS196 (\textsc{OoD}).}
The CUB-200-2011 dataset~\citep{WahCUB_200_2011} has $200$ bird species captured from different perspectives and in different environments. We follow the zero-shot train/test split \cite{musgrave2020metric}. In this zero-shot setting, the trained models have not seen any of the bird species in the test set, and the learned features must generalize well across species. 
\cref{tab:cub200} shows that LAM matches or surpasses the predictive performance of all other methods. LAM (post-hoc) achieves state-of-the-art predictive performance, while LAM (online) matches the predictive performance of the deterministic trained model while achieving state-of-the-art \textsc{AUROC}  and \textsc{AUPRC}  for \textsc{OoD} detection.

\textit{LFW (ID) vs CUB200 (\textsc{OoD}).}
Face recognition is another challenging metric learning task with many applications in security and surveillance. The goal is to retrieve images of the same person as in the query image. 
\cref{tab:cub200} shows that online LAM outperforms existing methods both in terms of predictive performance and uncertainty quantification.
\cref{fig:lfw_high_low_uq_images} shows that PFE assigns higher uncertainty to images from the \textsc{ID} dataset (faces) than those from the \textsc{OoD} dataset (birds).
In contrast, both online and post-hoc LAM better associate high variance to \textsc{OoD} examples, while PFE predicts high variance to \textsc{ID} examples. Furthermore, online LAM seems to assign the highest variance to images in which the background is complex and thus camouflages the birds.

\subsection{Visual Place recognition}
Lastly, we evaluate LAM on the challenging task of visual place recognition, which has applications that span from human trafficking investigation~\cite{stylianou2019hotels50k} to the long-term operation of autonomous robots~\cite{davison2007monoslam}. Our focus is the latter, where the goal is to retrieve images taken within a radius of $25$ meters from a query image. The high number of unique places and varying visual appearance of each location -- including weather, dynamic, structural, view-point, seasonal, and day/night changes -- makes visual place recognition a very challenging metric learning problem. Reliable uncertainties and reliable out-of-distribution behavior are important to avoid incorrect loop-closure, which can deteriorate the autonomous robots' location estimate. We evaluate on MSLS~\citep{Warburg_2020_CVPR}, which is the largest and most diverse place recognition dataset currently available comprised of $1.6M$ images from $30$ cities spanning six continents.  We use the standard train/test split, training on $24$ cities and testing on six other cities. We use the same model as in open-set retrieval.

\begin{table*}
    \centering
     \caption{\textbf{Results on MSLS.} LAM yields state-of-the-art uncertainties and matches the predictive performance of deterministic trained models. We evaluate on both the validation set and the official challenge set \cite{Warburg_2020_CVPR}.}
     \label{tab:msls}
    \resizebox{\linewidth}{!}{%
    \begin{tabular}{l|lllll|l|lllll|l}
    \toprule
    & \multicolumn{6}{c}{\textbf{Validation Set}} & \multicolumn{6}{|c}{\textbf{Challenge Set}} \\
    & R@1$\uparrow$    &   R@5$\uparrow$    &   R@10$\uparrow$  &   M@5$\uparrow$    &   M@10$\uparrow$  &   AUSC$\uparrow$ &  R@1$\uparrow$    &   R@5$\uparrow$    &   R@10$\uparrow$  &   M@5$\uparrow$    &   M@10$\uparrow$  &   AUSC$\uparrow$    \\ \midrule
    Deterministic  & 
    \first \tabf 0.77 & \first \tabf 0.88 & \first \tabf 0.90 &\first \tabf 0.61 & \first \tabf 0.56 & --- & 
    \first \tabf 0.58 & \first \tabf 0.74 & \first \tabf 0.78 & \first \tabf 0.45 & \second 0.43 & ---    \\
    MC Dropout     & 
    \third 0.75 & \second 0.87 & \third 0.87 & \third 0.59 & \third 0.54 & \first \tabf 0.77 & 
    \third 0.55 & \second 0.71 & \second 0.76 & \second 0.43 & \third 0.41 & \third 0.57\\
    PFE            & 
    \first \tabf 0.77 & \first \tabf 0.88 & \first \tabf 0.90 & \first \tabf 0.61 & \first \tabf 0.56 & \third 0.73 & 
    \first \tabf 0.58 & \first \tabf 0.74 & \first \tabf 0.78 & \first \tabf 0.45 & \first \tabf 0.44 & \third 0.57 \\
    LAM (post-hoc) & 
    \second 0.76 & \third 0.86 & \second 0.89 & \second 0.60 & \second 0.55 & \second 0.74 & 
    \first \tabf 0.58 & \first \tabf 0.74 & \first \tabf 0.78 & \first \tabf 0.45 & \first \tabf 0.44 & \second 0.59 \\
    LAM (online)   & 
    \second 0.76 & \second 0.87 & \first \tabf 0.90 & \second 0.60 & \first \tabf 0.56 & \first \tabf 0.77 & 
    \second 0.57 & \first \tabf 0.74 & \first \tabf 0.78 & \first \tabf 0.45 & \second 0.43 & \first \tabf 0.63 \\
    \bottomrule
     \end{tabular}
     }
 \end{table*}

\begin{figure*}
\centering
    \adjustbox{minipage=1.3em,valign=m}{\rotatebox{90}{PFE}}
    \begin{subfigure}[t]{\dimexpr.5\linewidth-2.0em\relax}
        \centering
        \includegraphics[width=.95\linewidth,valign=m]{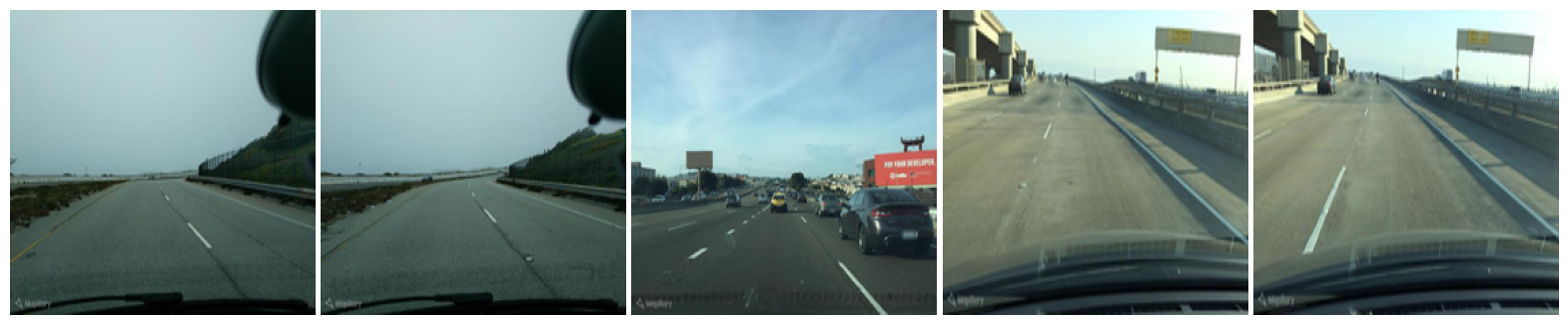}
    \end{subfigure}
    $\ldots$
    \adjustbox{minipage=1.3em,valign=m}{}
    \begin{subfigure}[t]{\dimexpr.5\linewidth-2.0em\relax}
        \centering
        \includegraphics[width=.95\linewidth,valign=m]{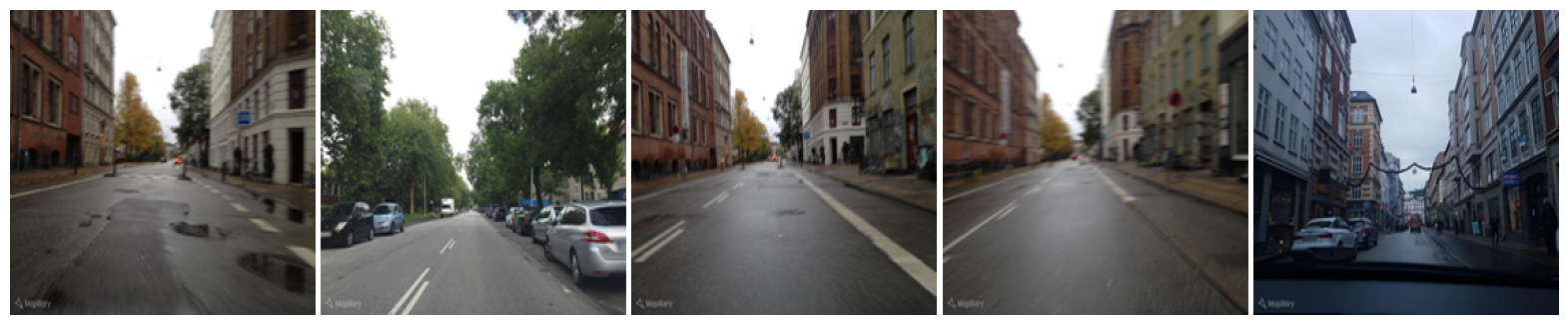}
    \end{subfigure}
    \adjustbox{minipage=1.3em,valign=m}{\rotatebox{90}{LAM (p)}}
    \begin{subfigure}[t]{\dimexpr.5\linewidth-2.0em\relax}
        \centering
        \includegraphics[width=.95\linewidth,valign=m]{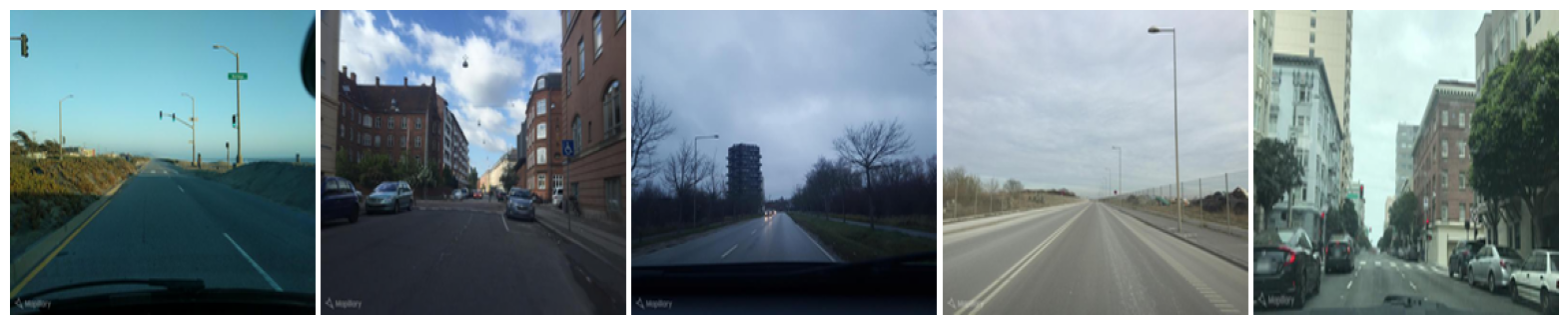}
    \end{subfigure}
    $\ldots$
    \adjustbox{minipage=1.3em,valign=m}{}
    \begin{subfigure}[t]{\dimexpr.5\linewidth-2.0em\relax}
        \centering
        \includegraphics[width=.95\linewidth,valign=m]{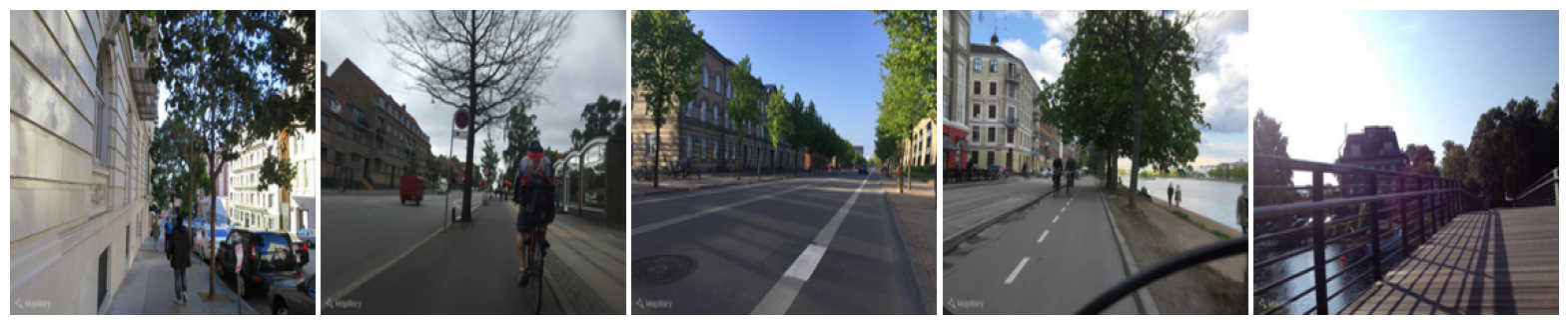}
    \end{subfigure}

    \adjustbox{minipage=1.3em,valign=m}{\rotatebox{90}{LAM (o)}}
    \begin{subfigure}[t]{\dimexpr.5\linewidth-2.0em\relax}
        \centering
        \includegraphics[width=.95\linewidth,valign=m]{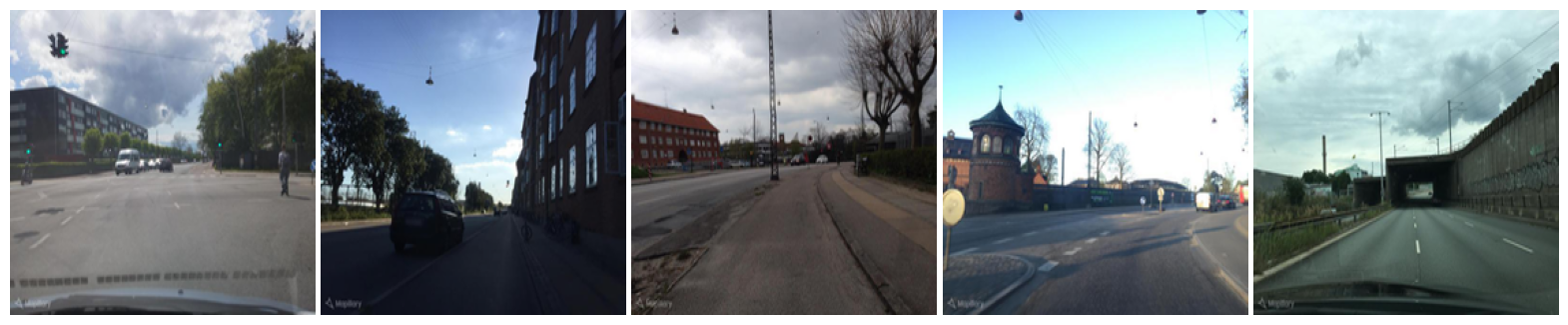}
    \end{subfigure}
    $\ldots$
    \adjustbox{minipage=1.3em,valign=m}{}
    \begin{subfigure}[t]{\dimexpr.5\linewidth-2.0em\relax}
        \centering
        \includegraphics[width=.95\linewidth,valign=m]{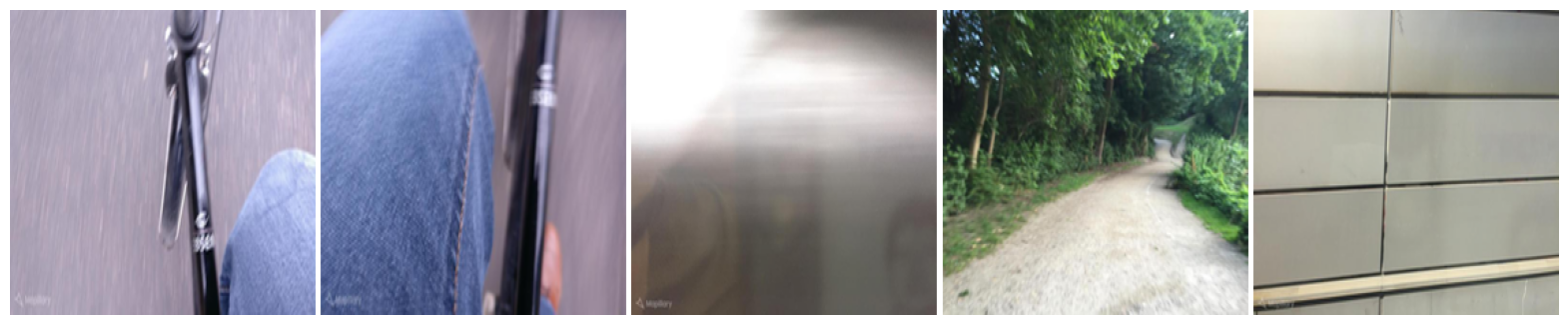}
    \end{subfigure}
    \hspace{1.2em}
    \begin{subfigure}[t]{\dimexpr1.0\linewidth-2.0em\relax}
        \begin{tikzpicture}[node distance=.5\linewidth-1.5em, thick]
            \node (1) {};
            \node (2) [right of=1] {};
            \draw[-latex, thick, shorten <=2pt, shorten >=2pt]    (2) to node [font=\small, preaction={fill, white}] {Lower uncertainty} (1);
        \end{tikzpicture}
        \hspace{1.2em}
        \begin{tikzpicture}[node distance=.5\linewidth-1.5em, thick]
            \node (1) {};
            \node (2) [right of=1] {};
            \draw[-latex, thick, shorten <=2pt, shorten >=2pt]    (1) to node [font=\small, preaction={fill, white}] {Higher uncertainty} (2);
        \end{tikzpicture}
    \end{subfigure}
    \vspace{-1mm}
    \caption{\textbf{Images with lowest and highest variance} for PFE, post-hoc LAM, and online LAM across MSLS validation set. LAM reliably associates high uncertainty to images that are blurry, are captured facing the pavement, or contain vegetation. These images do not contain features that are descriptive of a specific place, making them especially challenging to geographically locate. 
    }
    \vspace{-0.5\baselineskip}
    \label{fig:msls_qualitative}
\end{figure*}

\cref{tab:msls} shows that online LAM yields state-of-the-art uncertainties for visual place recognition measured with \textsc{AUSC}, while matching the predictive performance of the alternative probabilistic and deterministic methods on both the MSLS validation and the challenge set. \cref{fig:ausc_msls} shows the sparsification curves on the challenge set. Both online and posthoc LAM have monotonically increasing sparsification curves, implying that when we remove the most uncertain observations, the predictive performance increase. This illustrates that LAM produces reliable uncertainties for this challenging open-set retrieval task. \cref{fig:msls_qualitative} shows the queries associated with the highest and lowest uncertainty. LAM predicts high uncertainty to images with are blurry, captured facing into the pavement, or contain mostly vegetation. These images do not have features that are descriptive of a specific place, making them hard to geographically locate.

\begin{figure}
    \centering
    \includegraphics[width=0.9\linewidth]{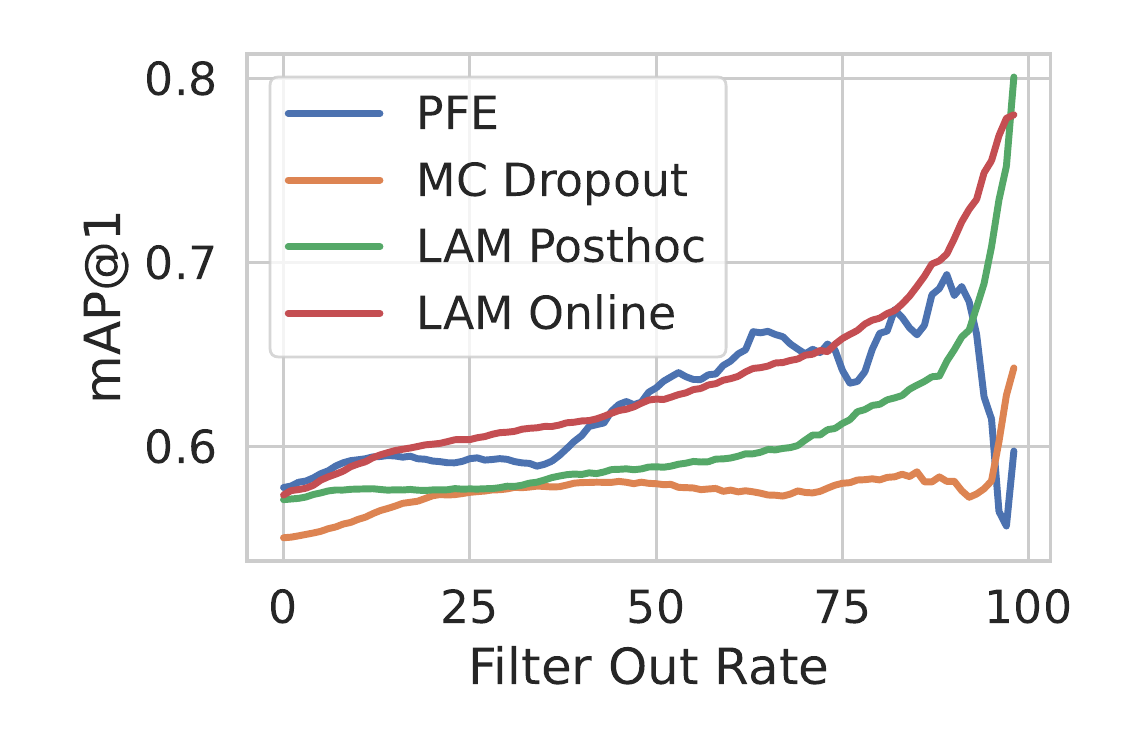}
    \vspace{-0.6cm}
    \caption{\textbf{Sparsification curve.} Online and posthoc LAM's sparsification curves monotonically increase, illustrating that they reliably associate higher uncertainty to harder observations. }
    \label{fig:ausc_msls}
\end{figure}

\textbf{Limitations.} Similar to other Bayesian methods, LAM relies on $n$ samples to obtain uncertainties. This makes inferences $n$ times slower.
Computing the Hessian at every step during online LAM also makes training time slower.
To combat long training times and high memory usage, we use a last-layer LAM and thus only estimate and sample for a weight posterior of the last layer.
The last-layer LAM training time is $3$ hours for online LAM vs $2.3$ hours for deterministic contrastive loss on LFW, and $30$ minutes vs $15$ minutes loss on CUB200 on an NVIDIA RTX A5000.

\section{Conclusion}

In this paper, we have introduced a Bayesian encoder for metric learning, the Laplacian Metric Learner (LAM), which uses the Laplace approximation.
We prove that the contrastive loss is indeed a valid unnormalized log-posterior, and develop three Hessian approximations, which ensures a positive definite covariance matrix. We propose a novel decomposition of the Generalized Gauss-Newton approximation that improves Hessian approximations of $\ell_2$-normalized networks.
Empirically, we demonstrate that LAM consistently produces well-calibrated uncertainties, reliably detects out-of-distribution examples, and achieves state-of-the-art predictive performance on both closed-set and challenging open-set image retrieval tasks.

\newpage

\textbf{Acknowledgement.} This work was supported by research grants (15334, 42062) from VILLUM FONDEN. This project received funding from the European Research Council (ERC) under the European Union's Horizon 2020 research and innovation programme (grant agreement 757360). The work was partly funded by the Novo Nordisk Foundation through the Center for Basic Machine Learning Research in Life Science (NNF20OC0062606). The authors acknowledge the Pioneer Centre for AI, DNRF grant P1.\looseness=-1

\bibliography{icml2023}
\bibliographystyle{icml2023}

\newpage
\appendix
\onecolumn

\section{Notation}

Let $\mathcal{X}$ be the data space, $\mathcal{Z}$ be the latent space, $\Theta$ be the parameter space. For a given neural network architecture $f$ and a parameter vector $\theta\in\Theta$, $f_\theta$ is a function
\begin{equation}
f_\theta: \mathcal{X} \rightarrow \mathcal{Z}
\end{equation}
The discrete set $\mathcal{C}=\{c_i\}_{i=1,\dots,C}$ of classes induces a partition of the points in the data space. A dataset is a collection of \emph{data point}-class pairs $(x,c)\in\mathcal{X}\times\mathcal{C}$.
\begin{equation}
\mathcal{D} = \{(x_i,c_i)\}_{i=1,\dots,D}
\end{equation}
and it is itself partitioned by the classes into $C$ sets
\begin{equation}
    \mathcal{D}_c
    =
    \{(x',c')\in\mathcal{D} | c' = c\}
    \subset \mathcal{D}
    \qquad
    \forall c\in\mathcal{C}.
\end{equation}
We assume these sets always to contain at least one element $\mathcal{D}_c\not=\emptyset$ and we use the notation $|\mathcal{D}_c|$ to refer to their cardinalities. In the metric learning setting, instead of enforcing properties of a single data point, the goal is to enforce relations between data points. Thus we will often make use of pairs $\pp_{ij}=((x_i,c_i),(x_j,c_j))\in\mathcal{D}^2$, specifically we use the terms
\begin{align}
    \pp_{ij} & \text{ is \emph{positive} pair if } c_i=c_j \\
    \pp_{ij} & \text{ is \emph{negative} pair if } c_i\not=c_j.
\end{align}

To ease the later notation we will consider the trivial pairs composed by a data point with itself. We then define the positive set and negative set, and compute their cardinalities, respectively
\begin{align}
    \mathcal{D}^2_{\text{pos}}
    &
    := \{\pp\in\mathcal{D}^2 \text{ such that }p\text{ is positive}\}
    \qquad\qquad
    | \mathcal{D}^2_{\text{pos}}| =
    \sum_{c\in\mathcal{C}} |\mathcal{D}_c|^2 
    \\
    \mathcal{D}^2_{\text{neg}}
    &
    := 
    \{\pp\in\mathcal{D}^2 \text{ such that }p\text{ is negative}\}
    \qquad\quad\,\,
    | \mathcal{D}^2_{\text{neg}}|
    =
    \sum_{\substack{c_i,c_j\in\mathcal{C}^2 \\ \text{s.t. } c_i \not= c_j}}
    |\mathcal{D}_{c_i}| |\mathcal{D}_{c_j}|
\end{align}

A common trick in metric learning is to introduce a \emph{margin} $m\in\mathbb{R}^+$. 
The margin induces a further partition of the $\mathcal{D}^2_{\text{neg}}$ set into pairs with close or far embeddings. 
For any pair $\pp_{ij}~=~((x_i,c_i),(x_j,c_j))$ we describe the pair as being
\begin{align}
    \text{\emph{inside} the margin if } \|f_\theta(x_i)-f_\theta(x_j)\| \leq m \\
    \text{\emph{outside} the margin if } \|f_\theta(x_i)-f_\theta(x_j)\| > m
\end{align}
irrespective of the classes $c_i$ and $c_j$.  We can make use of this definition 
to consider only the pairs that have a non-zero loss. 
From this logic, it is convenient to define the set $\mathcal{D}^2_{\text{neg inside}}:=\{\pp \in\mathcal{D}^2_{\text{neg}} \text{ inside the margin}\}$.

The \emph{target}, or label, is the value that encodes the information we want to learn. In classical settings, we have one scalar for each data point: a class for classification, a value for regression. In our setting, we consider a target $y_{ij}\in\mathbb{R}$ for every pair $\pp_{ij}\in\mathcal{D}^2$. With this notation in mind, we can continue to define the contrastive loss.


\section{Contrastive Loss}\label{sec:contrastive_loss}
Recall the definition of the contrastive loss~\cite{hadsell2006dimensionality} 
\begin{equation}\label{eq:contrastive_loss_bad_definition}
    \mathcal{L}_{\mathrm{con}}(\theta) 
     = 
    \frac{1}{2}\|f_\theta(x_a)-f_\theta(x_p)\|^2 
    +
    \frac{1}{2}\max\left(0, m- \|f_\theta(x_a)-f_\theta(x_n)\|^2\right),
\end{equation}
where $f$ is a network with parameters $\theta$ and $x_p$ is a data point the same class as the anchor $x_a$ and different from the negative data point $x_n$. The loss over the whole dataset is then informally defined as $\mathcal{L}_{\mathrm{con}}(\theta;\mathcal{D})=\mathbb{E}_{\mathcal{D}}[\mathcal{L}_{\mathrm{con}}(\theta)]$, where the expectation is taken over tuples $(x_a,x_p,x_n)$ satisfying the positive and negative constrains. This definition is intuitive and compact, but not formal enough to show that the contrastive loss is in fact an unnormalized log posterior. The main issue is the explicit notation for positive $x_p$ and negative $x_n$, which at first glance seems innocent, but for later derivations becomes cumbersome. Instead, we express the loss in an equivalent but more verbose way.

In order to understand that we are doing nothing more than a change in notation, it is first convenient to express \cref{eq:contrastive_loss_bad_definition} as
\begin{equation}\label{eq:per_obs_bad}
\mathcal{L}_{\text{con}}(\theta) 
     = 
    \left\{
    \begin{array}{ll}
        \frac{1}{2}\|f_\theta(x_i)-f_\theta(x_j)\|^2  & \text{ if the pair $\pp_{ij}$ is positive} \\
        0 & \text{ if the pair $\pp_{ij}$ is negative and outside the margin} \\
        m-\frac{1}{2}\|f_\theta(x_i)-f_\theta(x_j)\|^2 & \text{ if the pair $\pp_{ij}$ is negative and inside the margin}
    \end{array}
    \right.
    ,
\end{equation}

which reveals the three cases of the \emph{per-observation} contrastive loss. Notice that, up to a neglectable additive constant, all cases shares the same form of being scalar multiples of a distance between the two embeddings $z_i=f_\theta(x_i)$ and $z_j=f_\theta(x_j)$. This allows us to combine the three scenarios into a single-case \emph{per-observation} contrastive loss, parametrized by a scalar $y\in\mathbb{R}$, as
\begin{align}\label{eq:per_obs}
\mathcal{L}_{y}:
 & \quad\mathcal{Z}^2\,\,\longrightarrow\mathbb{R} \nonumber\\
 & \,\,z_i,z_j \longmapsto \frac{1}{2}y \|z_i-z_j\|^2 
\end{align}
and we later entrust the distinction between the three scenarios, as in \cref{eq:per_obs_bad}, to the scalar $y$. 
Although it is important to define this loss for a general target $y\in\mathbb{R}$ (we will interpret it as a Von-Mises-Fisher concentration parameter $\kappa$ in \cref{sec:probabilistic_view}), practically we will make use of the specific instances $y=y_{ij}\in\mathbb{R}$ with the target values defined for every data indexes $i$ and $j$ as
\begin{equation}\label{eq:target_definition}
    y_{ij}
    :=
    \left\{
    \begin{array}{ll}
        \frac{1}{| \mathcal{D}^2_{\text{pos}}|} & \text{ if the pair $\pp_{ij}$ is positive} \\
        0 & \text{ if the pair $\pp_{ij}$ is negative and outside the margin} \\
        -\frac{1}{| \mathcal{D}^2_{\text{neg}}|} & \text{ if the pair $\pp_{ij}$ is negative and inside the margin}
    \end{array}
    \right.
    .
\end{equation}
We define the contrastive loss over the entire dataset $\mathcal{L}(\,\,;\mathcal{D}):\Theta\rightarrow\mathbb{R}$ as a sum over all pairs of the \emph{per-observation} contrastive loss (\ref{eq:per_obs}) with targets (\ref{eq:target_definition})
\begin{equation}\label{eq:contrastive_loss_definition}
    \mathcal{L}(\theta;\mathcal{D})
    :=
    \sum_{\pp_{ij}\in\mathcal{D}^2} 
    \mathcal{L}_{y_{ij}}(
        \underbrace{f_\theta(x_i)}_{z_i}
        ,
        \underbrace{f_\theta(x_j)}_{z_j}
    ).
\end{equation}
Notice that we slightly overload the notation $\mathcal{L}(\theta)$ and $\mathcal{L}_{y}(\theta;\mathcal{D})$ for the dataset loss and the per-observation-loss, respectively. 

\textbf{Expanded expression}. In order to better highlight the equivalence, it may be useful to express the loss explicitly using all the previous definitions.
\begin{align}
    \mathcal{L}(\theta;\mathcal{D})
    & =
    \sum_{\pp_{ij}\in\mathcal{D}^2} 
    \mathcal{L}_{y_{ij}}(
        f_\theta(x_i),
        f_\theta(x_j)
    ) \\
    & =
    \sum_{\pp_{ij}\in\mathcal{D}^2} 
    \frac{1}{2}y_{ij}
    \|
        f_\theta(x_i) -
        f_\theta(x_j)
    \|^2 \\
    & =
    \sum_{\pp_{ij}\in\mathcal{D}^2_\text{pos}} 
    \frac{1}{2}
    \frac{1}{| \mathcal{D}^2_{\text{pos}}|}
    \|
        f_\theta(x_i) -
        f_\theta(x_j)
    \|^2
    +
    \sum_{\pp_{ij}\in\mathcal{D}^2_\text{neg inside}} 
    -\frac{1}{2}
    \frac{1}{| \mathcal{D}^2_{\text{neg}}|}
    \|
        f_\theta(x_i) -
        f_\theta(x_j)
    \|^2 \\
    & =
    \frac{1}{| \mathcal{D}^2_{\text{pos}}|}
    \sum_{\pp_{ij}\in\mathcal{D}^2_\text{pos}} 
    \frac{1}{2}
    \|
        f_\theta(x_i) -
        f_\theta(x_j)
    \|^2
    -
    \frac{1}{| \mathcal{D}^2_{\text{neg}}|}
    \sum_{\pp_{ij}\in\mathcal{D}^2_\text{neg inside}} 
    \frac{1}{2}
    \|
        f_\theta(x_i) -
        f_\theta(x_j)
    \|^2
\end{align}

The scaling factor $\frac{1}{| \mathcal{D}^2_{\text{pos}}|}$ for positives and $\frac{1}{| \mathcal{D}^2_{\text{neg}}|}$ for negatives, together with the sum over pairs, leads to a per-type average. In this sense, we can informally say that $\mathcal{L}(\theta)=\mathbb{E}[\mathcal{L}_{\text{con}}(\theta)]$ under a distribution that assigns equal probabilities of a pair being positive or negative. We highlight that scaling is key in order to ensure positive definiteness in \cref{prop:positive_definiteness}.

\subsection{Minibatch}\label{appendix:mining}

In the previous Section, we defined the contrastive loss for the entire dataset (\ref{eq:contrastive_loss_definition}). However, in practice it is approximated with minibatching. This approximation induces a bias that we can correct by simply adjusting the target $y_{ij}$ as in \cref{eq:minibatch_target_definition}.

\textbf{Minibatching recap.} When dealing with a huge number of identically distributed data, it is common to assume that a big enough arbitrary subset will follow the same distribution and thus have the same properties, specifically we assume the expected value to be similar $\mathbb{E}_{\mathcal{S}}[f(s)]\approx\mathbb{E}_{\mathcal{S}'}[f(s)]$.
The idea of minibatching relies on this to approximate a sum over a set $\mathcal{S}$ with a \emph{scaled} sum over a subset $\mathcal{S'}\subseteq\mathcal{S}$, 
\begin{equation}\label{eq:minibatching_logic}
    \sum_{s\in\mathcal{S}} f(s)
    \approx
    \frac{|\mathcal{S}|}{|\mathcal{S}'|} 
    \sum_{s\in\mathcal{S'}} f(s)
\end{equation}

where $\mathcal{S}=\mathcal{D}$ and $\mathcal{S'}$ is the set of data points in the minibatch. 

\textbf{Minibatching contrastive.} Conversely, in the constrastive setting, we need to consider a subset of the pairs (rather than single observations), i.e. $\mathcal{S}=\mathcal{D}^2$.
In practice this subset will not be representative of the positive and negative ratio, thus we need a different scaling to account for that. This process can be viewed as taking two independent minibatches at the same time, one representative of the positives and one of the negatives. This intuition is formalized by the following Definition and Proposition. 

\textbf{Definition.} Consider a minibatch set of pairs $\mathcal{B}\subseteq\mathcal{D}^2$ and its partition $\mathcal{B}=\mathcal{B}_{\text{pos}}\cup\mathcal{B}_{\text{neg}}$ in positives $\mathcal{B}_{\text{pos}}\subseteq\mathcal{D}^2_{\text{pos}}$ and negatives $\mathcal{B}_{\text{neg}}\subseteq\mathcal{D}^2_{\text{neg}}$, then we can define a new scaled target as
\begin{equation}\label{eq:minibatch_target_definition}
    y^{\mathcal{B}}_{ij}
    :=
    \left\{
    \begin{array}{ll}
        \frac{|\mathcal{B}|}{| \mathcal{D}^2| |\mathcal{B}_\text{pos}|}
        & \text{ if the pair $\pp_{ij}$ is positive} \\
        0 & \text{ if the pair $\pp_{ij}$ is negative and outside the margin} \\
        -\frac{|\mathcal{B}|}{| \mathcal{D}^2| |\mathcal{B}_\text{neg}|}
        & \text{ if the pair $\pp_{ij}$ is negative and inside the margin}
    \end{array}
    \right.
\end{equation}
and so we can properly approximate the loss by minibatching positives and negatives independently

\begin{proposition}
    Assume that the positives in the batch are representative of the positives in the whole dataset, and similarly for the negatives, i.e. $\mathbb{E}_{\mathcal{D}^2_{\text{pos}}}[\mathcal{L}_y]\approx\mathbb{E}_{\mathcal{B}_{\text{pos}}}[\mathcal{L}_y]$ and $\mathbb{E}_{\mathcal{D}^2_{\text{neg}}}[\mathcal{L}_y]\approx\mathbb{E}_{\mathcal{B}_{\text{neg}}}[\mathcal{L}_y]$. Then the loss, as defined in \cref{eq:contrastive_loss_definition}, can be approximated by using the target in \cref{eq:minibatch_target_definition} with
    \begin{equation}
        \mathcal{L}(\theta;\mathcal{D})
        \approx
        \frac{|\mathcal{D}^2|}{|\mathcal{B}|}
        \sum_{\pp_{ij}\in\mathcal{B}}
            \mathcal{L}_{y^{\mathcal{B}}_{ij}}(f_\theta(x_i),f_\theta(x_j))
    \end{equation}
\end{proposition}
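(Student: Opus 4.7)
The strategy is to split $\mathcal{L}(\theta;\mathcal{D})$ into its positive and negative contributions, apply the basic minibatching identity (\ref{eq:minibatching_logic}) to each independently, and then check that the combined scaling factor $|\mathcal{D}^2|/|\mathcal{B}|$ together with the redefined target $y^{\mathcal{B}}_{ij}$ precisely recovers the two separate approximations.

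\textbf{Step 1: Split the loss.} Using the definition of $y_{ij}$ in \cref{eq:target_definition} and the expanded expression for $\mathcal{L}(\theta;\mathcal{D})$, I would first write
\begin{equation*}
    \mathcal{L}(\theta;\mathcal{D})
    =
    \frac{1}{|\mathcal{D}^2_{\text{pos}}|}\!\!\sum_{\pp_{ij}\in\mathcal{D}^2_{\text{pos}}}\!\!\tfrac{1}{2}\|z_i-z_j\|^2
    -
    \frac{1}{|\mathcal{D}^2_{\text{neg}}|}\!\!\sum_{\pp_{ij}\in\mathcal{D}^2_{\text{neg inside}}}\!\!\tfrac{1}{2}\|z_i-z_j\|^2,
\end{equation*}
where the negative-outside pairs contribute zero and can be dropped. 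Rewriting both sums as expectations over $\mathcal{D}^2_{\text{pos}}$ and $\mathcal{D}^2_{\text{neg}}$ respectively puts them in the form to which the minibatch hypotheses directly apply.

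\textbf{Step 2: Apply minibatching separately.} By the assumption $\mathbb{E}_{\mathcal{D}^2_{\text{pos}}}[\mathcal{L}_y]\approx\mathbb{E}_{\mathcal{B}_{\text{pos}}}[\mathcal{L}_y]$, I can approximate the positive sum by $\frac{|\mathcal{D}^2_{\text{pos}}|}{|\mathcal{B}_{\text{pos}}|}\sum_{\pp_{ij}\in\mathcal{B}_{\text{pos}}}\tfrac{1}{2}\|z_i-z_j\|^2$, which after cancellation with the $1/|\mathcal{D}^2_{\text{pos}}|$ prefactor gives $\frac{1}{|\mathcal{B}_{\text{pos}}|}\sum_{\pp_{ij}\in\mathcal{B}_{\text{pos}}}\tfrac{1}{2}\|z_i-z_j\|^2$. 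The analogous identity for the negative inside term yields $-\frac{1}{|\mathcal{B}_{\text{neg}}|}\sum_{\pp_{ij}\in\mathcal{B}_{\text{neg inside}}}\tfrac{1}{2}\|z_i-z_j\|^2$; here I implicitly use that negative pairs outside the margin contribute zero both in the full dataset and in the minibatch, so restricting to $\mathcal{B}_{\text{neg inside}}$ inside the minibatch sum is consistent.

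\textbf{Step 3: Recombine into the stated form.} Finally I multiply and divide by $|\mathcal{B}|/|\mathcal{D}^2|$, so that the common factor $|\mathcal{D}^2|/|\mathcal{B}|$ can be pulled out front and the remaining per-pair coefficient becomes exactly $\tfrac{1}{2}y^{\mathcal{B}}_{ij}$ as defined in \cref{eq:minibatch_target_definition}: for a positive pair the coefficient is $\tfrac{1}{2}\cdot\frac{|\mathcal{B}|}{|\mathcal{D}^2||\mathcal{B}_{\text{pos}}|}$, for a negative-inside pair it is $-\tfrac{1}{2}\cdot\frac{|\mathcal{B}|}{|\mathcal{D}^2||\mathcal{B}_{\text{neg}}|}$, and for a negative-outside pair it is zero, matching (\ref{eq:minibatch_target_definition}) case by case. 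Collapsing the positive and negative-inside sums into a single sum over $\pp_{ij}\in\mathcal{B}$ (with the outside pairs contributing zero) and recognising $\tfrac{1}{2}y^{\mathcal{B}}_{ij}\|z_i-z_j\|^2 = \mathcal{L}_{y^{\mathcal{B}}_{ij}}(f_\theta(x_i),f_\theta(x_j))$ gives the claimed approximation.

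\textbf{Main obstacle.} This is essentially a bookkeeping argument: the only real subtlety is verifying that the two independent minibatch approximations, one for the positive expectation and one for the negative expectation, combine correctly under a single outer scaling $|\mathcal{D}^2|/|\mathcal{B}|$. The algebra works out precisely because $y^{\mathcal{B}}_{ij}$ has been defined with per-class cardinalities $|\mathcal{B}_{\text{pos}}|$ and $|\mathcal{B}_{\text{neg}}|$ rather than a single $|\mathcal{B}|$, which is exactly what absorbs the class-specific scaling from Step 2.
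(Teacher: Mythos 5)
Your proposal is correct and follows essentially the same route as the paper's proof: split the loss into positive and negative parts, apply the minibatch scaling identity to each independently, then factor out $|\mathcal{D}^2|/|\mathcal{B}|$ and verify that the residual per-pair coefficients are absorbed by the redefinition of $y^{\mathcal{B}}_{ij}$. The only cosmetic difference is that you work with the expanded $\tfrac{1}{2}\|z_i-z_j\|^2$ form while the paper keeps the $\mathcal{L}_{y_{ij}}$ notation and passes through the positive/negative percentage ratios; the substance is identical.
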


\begin{proof}
The equality is proven by applying the logic of \cref{eq:minibatching_logic} two times independently, once for the positive pairs with $\mathcal{B}_{\text{pos}}=\mathcal{S}'\subseteq\mathcal{S}=\mathcal{D}^2_{\text{pos}}$ and once for the negatives with $\mathcal{B}_{\text{neg}}=\mathcal{S}'\subseteq\mathcal{S}=\mathcal{D}^2_{\text{neg}}$ and then rearranging the terms
\begin{align}
    \mathcal{L}(\theta;\mathcal{D})
    & =
    \sum_{\pp_{ij}\in\mathcal{D}^2} \mathcal{L}_{y_{ij}}(f_\theta(x_i),f_\theta(x_j)) \\
    & =
    \sum_{\pp_{ij}\in\mathcal{D}^2_{\text{pos}}} \mathcal{L}_{y_{ij}}(f_\theta(x_i),f_\theta(x_j))
    +
    \sum_{\pp_{ij}\in\mathcal{D}^2_{\text{neg}}} \mathcal{L}_{y_{ij}}(f_\theta(x_i),f_\theta(x_j)) \\
    & \approx
    \frac{|\mathcal{D}^2_{\text{pos}}|}{|\mathcal{B}_{\text{pos}}|}
        \sum_{\pp_{ij}\in\mathcal{B}_{\text{pos}}}
        \mathcal{L}_{y_{ij}}(f_\theta(x_i),f_\theta(x_j))
    +
    \frac{|\mathcal{D}^2_{\text{neg}}|}{|\mathcal{B}_{\text{neg}}|}
        \sum_{\pp_{ij}\in\mathcal{B}_{\text{neg}}}
        \mathcal{L}_{y_{ij}}(f_\theta(x_i),f_\theta(x_j)) \\
    & =
    \frac{|\mathcal{D}^2|}{|\mathcal{B}|}
    \left(
        \frac{|\mathcal{D}^2_{\text{pos}}|}{|\mathcal{D}^2|}
        \frac{|\mathcal{B}|}{|\mathcal{B}_{\text{pos}}|}
            \sum_{\pp_{ij}\in\mathcal{B}_{\text{pos}}}
            \mathcal{L}_{y_{ij}}(f_\theta(x_i),f_\theta(x_j))
        +
        \frac{|\mathcal{D}^2_{\text{neg}}|}{|\mathcal{D}^2|}
        \frac{|\mathcal{B}|}{|\mathcal{B}_{\text{neg}}|}
            \sum_{\pp_{ij}\in\mathcal{B}_{\text{neg}}}
            \mathcal{L}_{y_{ij}}(f_\theta(x_i),f_\theta(x_j))
    \right) \\
    & =
    \frac{|\mathcal{D}^2|}{|\mathcal{B}|}
    \left(
        \frac{\text{pos}^{\%}_{\mathcal{D}^2}}{\text{pos}^{\%}_{\mathcal{B}}}
            \sum_{\pp_{ij}\in\mathcal{B}_{\text{pos}}}
            \mathcal{L}_{y_{ij}}(f_\theta(x_i),f_\theta(x_j))
        +
        \frac{\text{neg}^{\%}_{\mathcal{D}^2}}{\text{neg}^{\%}_{\mathcal{B}}}
            \sum_{\pp_{ij}\in\mathcal{B}_{\text{neg}}}
            \mathcal{L}_{y_{ij}}(f_\theta(x_i),f_\theta(x_j))
    \right) \\
    & =
    \frac{|\mathcal{D}^2|}{|\mathcal{B}|}
    \left(
        \sum_{\pp_{ij}\in\mathcal{B}_{\text{pos}}}
            \mathcal{L}_{y^{\mathcal{B}}_{ij}}(f_\theta(x_i),f_\theta(x_j))
        +
        \sum_{\pp_{ij}\in\mathcal{B}_{\text{neg}}}
            \mathcal{L}_{y^{\mathcal{B}}_{ij}}(f_\theta(x_i),f_\theta(x_j))
    \right) \\
    & =
    \frac{|\mathcal{D}^2|}{|\mathcal{B}|}
    \sum_{\pp_{ij}\in\mathcal{B}}
        \mathcal{L}_{y^{\mathcal{B}}_{ij}}(f_\theta(x_i),f_\theta(x_j))
\end{align}
where $\text{pos}^{\%}$ and $\text{neg}^{\%}$ are the percentage of respectively positives and negatives in a given set, indicated in the subscript. 
\end{proof}

We highlight that this scaling is linear, and thus is reflected in both first and second-order derivatives. This will later become important for scaling the hessian. 

\section{Normalization layer and Von~Mises–Fisher}
\label{sec:normalization-vmf}

It is common in metric learning to add a normalization layer at the end of the neural network architecture. This, besides improving performances, has an interesting geometric interpretation. Moreover, happens to be fundamental in order to interpret the loss in a probabilistic way, and consequently to apply the Laplace approximation in a meaningful way.

Adding an $l^2$-normalization layer is a practical way to enforce that all outputs lie on the unit sphere. In other words, we are assuming our latent manifold to be
\begin{equation}
    \mathcal{Z}:=S^Z  \subset \mathbb{R}^{Z+1}.
\end{equation}

We can generalize the concept of Gaussian to the unit sphere. Start from a normal distribution with isotropic covariance $\sigma^2 \mathbb{I}$ and mean $\mu$. Conditioning on $\|z\|=1$ leads to a distribution on $S^Z$, the so-called von~Mises–Fisher distribution $\mathcal{N}^S$. More succinctly, the restriction of any isotropic multivariate normal density to the unit hypersphere, gives a von~Mises-Fisher density, up to normalization. 
\begin{equation}\label{eq:von_mises_fisher_density}
    \mathcal{N}^S(z|\mu,\kappa) \sim \mathcal{P}(z) = c_{\kappa} e^{-\kappa\frac{\|z-\mu\|^2}{2}} 
\end{equation}
where, importantly, the normalization constant $c_{\kappa}\in\mathbb{R}$ only depends on $\kappa$ and not on $\mu$. 
The von Mises-Fisher distribution is parametrized with a directional mean $\mu$ and a scalar concentration parameter $\kappa$, which can be interpreted as the inverse
of an isotropic covariance $\kappa=\frac{1}{\sigma^2}$.

Being both $\mu$ and $z$ of the unitary norm, the norm $\|z-\mu\|^2$ can be more efficiently computed through the scalar product $\langle z,\mu\rangle$, that is the reason it is also called Arccos distance. Moreover, we highlight that on the unit sphere, the equivalence 
\begin{equation}
\|z-\mu\|^2 = 2 - \langle z,\mu\rangle = 4 - \|z+\mu\|^2
\end{equation}
holds. This will be key in proving the equivalence of the probabilistic setting from \cref{eq:equiv_repulsive_term_line1} to \cref{eq:equiv_repulsive_term_line2}.

\textbf{Parameters estimators}. Having access to $N$ samples drawn from a Gaussian distribution with unknown mean and variance, it is common to compute an empirical estimate of such parameters. Various estimators exists, each satisfying different properties like being unbiased~\cite{kenney1951mathematics}.

In a similar fashion, for Von-Mises-Fisher distributions we make use of two empirical estimators~\citet{sra2011vonmises} of the parameters $\mu$ and $\kappa=\frac{1}{\sigma^2}$. We compute the empirical mean direction
\begin{equation}
    \bar{\mu} = \frac{\mu}{\bar{R}}, \qquad \bar{R} = \|\mu\|, \qquad \mu = \frac{1}{N} \sum_{i=0}^{N} z_i,
\end{equation}
from these samples and the approximate concentration parameter
\begin{equation}
    \bar{\kappa} = \frac{\bar{R}(D - \bar{R}^2)}{1 - \bar{R}^2}.
\end{equation}


\section{Probabilistic view}\label{sec:probabilistic_view}

In the metric learning framework, a dataset $\mathcal{D}$ plays two very different roles at the same time. This happens in the same spirit as, in electrostatics, a charged particle at the same time \emph{generates} an electric field and \emph{interact} with the existing electric field. This subtle difference, although being neglectable in the classic non-Bayesian setting, is conceptually important in the probabilistic setting. Thus, for now, we assume to be given a \emph{generating} dataset $\mathcal{D}_G$ and an \emph{interacting} dataset $\mathcal{D}_I$. Only after the derivations, we will set them to be equal. 

For some fixed value $\kappa>0$, each data point $x\in\mathcal{X}$ induces two Von Mises-Fisher distributions on the latent space
\begin{align}
    &\mathcal{P}^{\shortrightarrow\shortleftarrow}(z|x,\theta)
    \sim \mathcal{N}^{S}(z|f_\theta(x), \kappa) \\
    &\mathcal{P}^{\shortleftarrow\shortrightarrow}(z|x,\theta)
    \sim \mathcal{N}^{S}(z| -f_\theta(x), \kappa)
\end{align}
one centered in the embedding point $f_\theta(x)$ and one centered in the antipodal point $-f_\theta(x)$. 
It is convenient to explicit the densities of these two distributions, according to \cref{eq:von_mises_fisher_density}
\begin{align}
    \mathcal{P}^{\shortrightarrow\shortleftarrow}(z|x,\theta)
    &=
    c_\kappa \exp\left({-\kappa\frac{\|z-f_\theta(x)\|^2}{2}}\right) \\
    \label{eq:repulsive_term_density}
    \mathcal{P}^{\shortleftarrow\shortrightarrow}(z|x,\theta)
    &=
    c_\kappa \exp\left({-\kappa\frac{\|z+f_\theta(x)\|^2}{2}}\right).
\end{align}

\begin{figure}[h]
    \centering
    \includegraphics[width=0.7\linewidth]{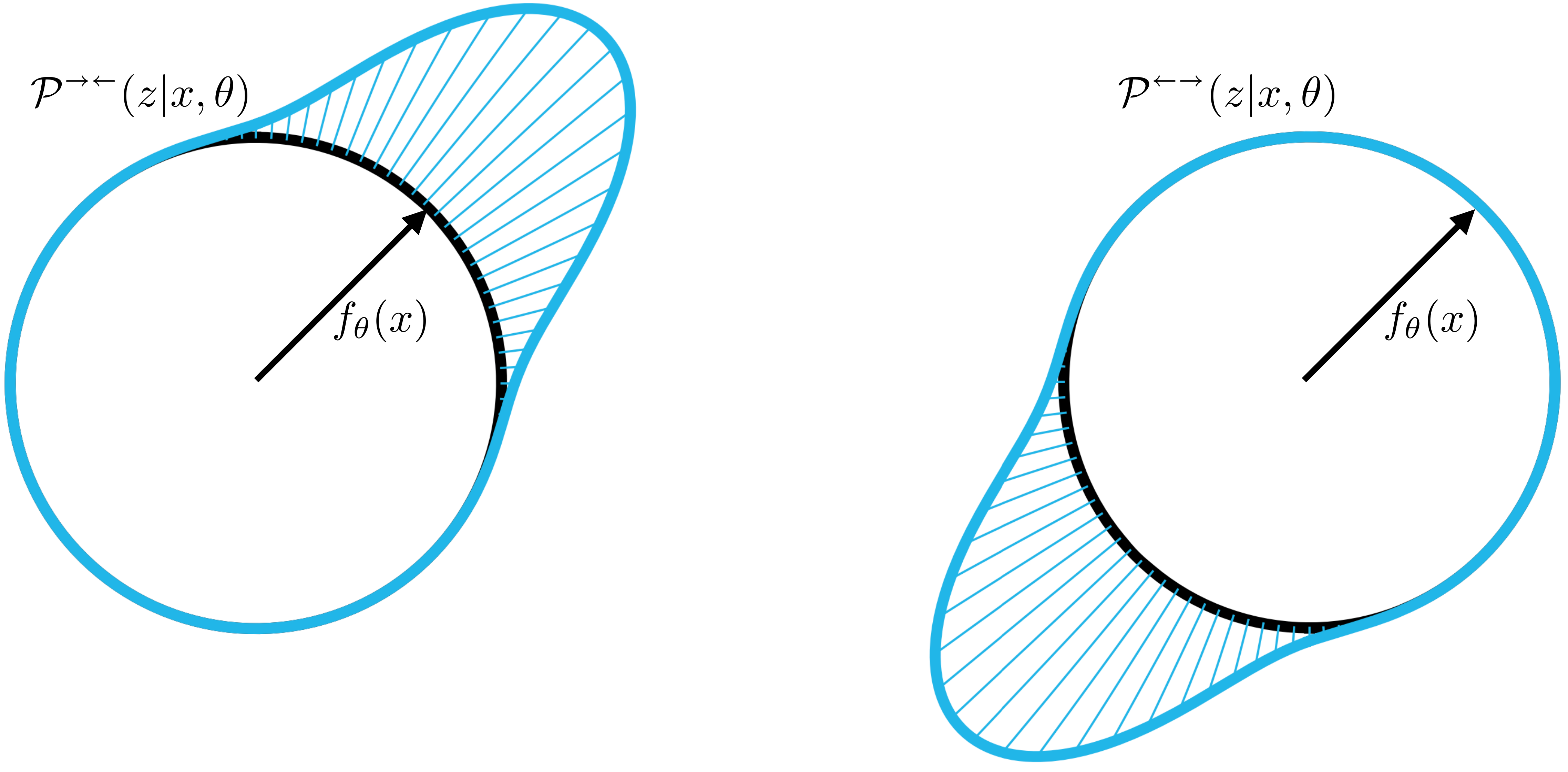}
    \caption{Representation of the Von Mises-Fisher densities $\mathcal{P}^{\shortrightarrow\shortleftarrow}$ and $\mathcal{P}^{\shortleftarrow\shortrightarrow}$ in the one-dimensional unit circle $\mathcal{Z}=\mathcal{S}^1$. Intuitively, the distribution corresponding to an attractive force is supported near the embedding $z=f_\theta(x)$, while the distribution corresponding to a repulsive force is supported far from the embedding, i.e. close to the antipodal $z=-f_\theta(x)$. The higher the value of $\kappa$ is, the higher the precision, the lower the variance and the narrower the support.}
    \label{fig:von_mises_fisher}
\end{figure}

The \emph{generating} dataset $\mathcal{D}_G$ induces a distribution $\mathcal{P}^c_{\mathcal{Z}}\in\Delta(\mathcal{Z)}$ on the latent space for each class $c\in\mathcal{C}$ defined by
\begin{equation}\label{eq:induced_distribution_definition_new}
    \mathcal{P}^c_{\mathcal{Z}}(z|\mathcal{D}_G,\theta)
    =
    \frac{c_\kappa}{|\mathcal{D}_G|} \prod_{\substack{(x_j,c_j)\in\mathcal{D}_G \\ \text{s.t. } c_j = c}}
    \mathcal{P}^{\shortrightarrow\shortleftarrow}(z|x_j,\theta)
    \prod_{\substack{(x_j,c_j)\in\mathcal{D}_G \\ \text{s.t. } c_j\not= c}}
    \mathcal{P}^{\shortleftarrow\shortrightarrow}(z|x_j,\theta)
\end{equation}
that is a product of one \emph{attractive} term $\mathcal{P}^{\shortrightarrow\shortleftarrow}$ for all anchor points of the same class, times a \emph{repulsive} term $\mathcal{P}^{\shortleftarrow\shortrightarrow}$ for all anchor points of different classes. 
In the same spirit as the product of Gaussian densities is a Gaussian density, we can show 
that $\mathcal{P}^c_{\mathcal{Z}}$ is a Von Mises-Fisher distribution itself, the precision of which is the sum of the precisions. This implies that the normalization constant only depends on $\kappa$ for this distribution as well.

Having access to a likelihood in the latent space, the \emph{interacting} dataset likelihood is then classically defined as the product
\begin{equation}
    \mathcal{P}(\mathcal{D}_I|\mathcal{D}_G, \theta)
    =
    \prod_{(x_i,c_i)\in\mathcal{D}_I}
    \mathcal{P}^{c_i}_{\mathcal{Z}}(z|\mathcal{D}_G,\theta)\big|_{z=f_\theta(x_i)}
\end{equation}
Using the definition in \cref{eq:induced_distribution_definition_new} and rearranging the terms, we can rewrite this likelihood in extended form as
\begin{equation}
    \mathcal{P}(\mathcal{D}_I|\mathcal{D}_G, \theta)
    =
    \bar{c}_\kappa \prod_{\substack{\pp_{ij}\in\mathcal{D}_I\times\mathcal{D}_G \\ \text{s.t. } \pp_{ij} \text{ is positive}}}
    \mathcal{P}^{\shortrightarrow\shortleftarrow}(z|x_j,\theta)\big|_{z=f_\theta(x_i)}
    \prod_{\substack{\pp_{ij}\in\mathcal{D}_I\times\mathcal{D}_G \\ \text{s.t. } \pp_{ij} \text{ is negative}}}
    \mathcal{P}^{\shortleftarrow\shortrightarrow}(z|x_j,\theta)\big|_{z=f_\theta(x_i)}
\end{equation}

Setting the generating and interacting dataset to be the same, we can define the constrastive learning dataset likelihood as
\begin{equation}
    \mathcal{P}(\mathcal{D}|\theta) := \mathcal{P}(\mathcal{D}_I|\mathcal{D}_G, \theta)\big|_{\mathcal{D}_I=\mathcal{D}_G=\mathcal{D}}
\end{equation}

\subsection{Equivalence of the two settings}
The contrastive term for a single pair is equivalent to the negative log-likelihood of a von~Mises-Fisher distribution, up to additive constants. Specifically, a positive target $y=\kappa>0$ is related to the attractive term $\mathcal{P}^{\shortrightarrow\shortleftarrow}$
\begin{align}
    \mathcal{L}_{\kappa}(f_\theta(x_e),f_\theta(x_a))
    &=
    \frac{1}{2}\kappa\|f_\theta(x_e)-f_\theta(x_a)\|^2  \\
    &=
    \log(c_\kappa)  -\log \mathcal{P}^{\shortrightarrow\shortleftarrow}(f_\theta(x_e)|x_a,\theta)
\end{align}
while a negative target $y=-\kappa<0$ is related to the repulsive term $\mathcal{P}^{\shortleftarrow\shortrightarrow}$
\begin{align}
    \label{eq:equiv_repulsive_term_line1}
    \mathcal{L}_{-\kappa}(f_\theta(x_e),f_\theta(x_a))
    &=
    -\frac{1}{2}\kappa\|f_\theta(x_e)-f_\theta(x_a)\|^2  \\
    \label{eq:equiv_repulsive_term_line2}
    &=
    -2\kappa + \frac{1}{2}\kappa\|f_\theta(x_e)+f_\theta(x_a)\|^2  \\
    &=
    -2\kappa + \log(c_\kappa) -\log \mathcal{P}^{\shortleftarrow\shortrightarrow}(f_\theta(x_e)|x_a,\theta).
\end{align}

This equivalence is reflected in the equivalence between the contrastive loss and the dataset negative log-likelihood, up to an additive constant. Expanding the definition of dataset likelihood we have
\begin{align}
    \mathcal{P}(\mathcal{D}|\theta)
    & =
    \prod_{(x_i,c_i)\in\mathcal{D}}
    \mathcal{P}^{c_i}_{\mathcal{Z}}(f_\theta(x_i)|\mathcal{D},\theta) \\
    & =
    \bar{c}_\kappa \prod_{\substack{\pp_{ij}\in\mathcal{D}^2 \\ \text{s.t. } \pp_{ij} \text{ is positive}}}
    \mathcal{P}^{\shortrightarrow\shortleftarrow}(z|x_j,\theta)\big|_{z=f_\theta(x_i)}
    \prod_{\substack{\pp_{ij}\in\mathcal{D}^2\\ \text{s.t. } \pp_{ij} \text{ is negative}}}
    \mathcal{P}^{\shortleftarrow\shortrightarrow}(z|x_j,\theta)\big|_{z=f_\theta(x_i)} \\
    & =
    \bar{c}_\kappa c_\kappa^{|\mathcal{D}|} \prod_{\substack{\pp_{ij}\in\mathcal{D}^2 \\ \text{s.t. } \pp_{ij} \text{ is positive}}}
    \exp\left({-\kappa\frac{\|f_\theta(x_i)-f_\theta(x_j)\|^2}{2}}\right)
    \prod_{\substack{\pp_{ij}\in\mathcal{D}^2\\ \text{s.t. } \pp_{ij} \text{ is negative}}}
    \exp\left({-\kappa\frac{\|f_\theta(x_i)+f_\theta(x_j)\|^2}{2}}\right)
\end{align}

Considering the log-likelihood
\begin{align}
    \log \mathcal{P}(\mathcal{D}|\theta)
    & =
    \sum_{(x_i,c_i)\in\mathcal{D}}
    \log \mathcal{P}^{c_i}_{\mathcal{Z}}(f_\theta(x_i)|\mathcal{D},\theta) \\
    & =
    \log(\bar{c}_\kappa)
    +
    \sum_{\substack{\pp_{ij}\in\mathcal{D}^2 \\ \text{s.t. } \pp_{ij} \text{ is positive}}}
    \log \mathcal{P}^{\shortrightarrow\shortleftarrow}(z|x_j,\theta)\big|_{z=f_\theta(x_i)}
    +
    \sum_{\substack{\pp_{ij}\in\mathcal{D}^2\\ \text{s.t. } \pp_{ij} \text{ is negative}}}
    \log \mathcal{P}^{\shortleftarrow\shortrightarrow}(z|x_j,\theta)\big|_{z=f_\theta(x_i)} \\
    & =
    \mathrm{const}(\kappa)
    -
    \sum_{\substack{\pp_{ij}\in\mathcal{D}^2 \\ \text{s.t. } \pp_{ij} \text{ is positive}}}
    \mathcal{L}_{\kappa}(f_\theta(x_i),f_\theta(x_j))
    -
    \sum_{\substack{\pp_{ij}\in\mathcal{D}^2\\ \text{s.t. } \pp_{ij} \text{ is negative}}}
    \mathcal{L}_{-\kappa}(f_\theta(x_i),f_\theta(x_j)) \\
    & =
    \mathrm{const}(\kappa)
    - \mathcal{L}(\theta;\mathcal{D})
\end{align}
thus the loss is equal, up to an additive constant, to the negative log-likelihood and we can proceed with the Bayesian interpretation. We highlight that $\mathrm{const}(\kappa)$ is not dependent on $\theta$, and so is neglected by the derivatives
\begin{align}
    \nabla_\theta \log \mathcal{P}(\mathcal{D}|\theta)
    & =
    - \nabla_\theta \mathcal{L}(\theta;\mathcal{D}) \\
    \nabla^2_\theta \log \mathcal{P}(\mathcal{D}|\theta)
    & =
    - \nabla^2_\theta \mathcal{L}(\theta;\mathcal{D}).
\end{align}
This shows that contrastive loss minimization is indeed a likelihood maximization. We highlight that this proof assumes margin $m>2$, i.e. all negatives are inside the margin (because on the manifold $\mathcal{S}^Z$ the maximum distance is 2). Following the exact same structure, a proof can be done with arbitrary $m$. The only difference would be in the definition \cref{eq:repulsive_term_density}, instead of a Von Mises-Fisher it would be a discontinuous density and the normalization constants would depend on both $\kappa$ and $m$.



\section{Bayesian Metric learning}
Having defined the dataset likelihood $\mathcal{P}(\mathcal{D}|\theta)$ conditioned to the parameter $\theta$, the parameter likelihood $\mathcal{P}(\theta|\mathcal{D})$ conditioned to the dataset is defined according to Bayes rule as
\begin{equation}
    \mathcal{P}(\theta|\mathcal{D})
    =
    \frac{\mathcal{P}(\mathcal{D}|\theta) \mathcal{P}(\theta)}{\mathcal{P}(\mathcal{D})}.
\end{equation}

The aim of metric learning is to maximise such likelihood with respect to the parameter $\theta$, specifically, it aims at finding the Maximum A Posteriori
\begin{equation}
    \theta^{\text{MAP}} \in \argmax_{\theta\in\Theta} \mathcal{P}(\theta|\mathcal{D}).
\end{equation}
We highlight that the argmax is the same in log-scale
\begin{equation}
    \argmax_{\theta\in\Theta} \mathcal{P}(\theta|\mathcal{D})
    =
    \argmax_{\theta\in\Theta} \log \mathcal{P}(\theta|\mathcal{D})
    =
    \argmax_{\theta\in\Theta} 
    \log \mathcal{P}(\mathcal{D}|\theta) + \log \mathcal{P}(\theta)
\end{equation}
where the log-prior takes the form of a standard $l^2$ regularize (weight decay), and, as shown before, maximizing the log-likelihood $\log p(\mathcal{D}|\theta)$ is equivalent to minimizing of the contrastive loss $\mathcal{L}(\theta;\mathcal{D})$. 

However in the Bayesian setting, we do not seek a maximum, but we seek an expression of the whole distribution $\mathcal{P}(\theta|\mathcal{D})$, for which we use the notation $q(\theta)$. We do that by maximizing the dataset likelihood $\mathcal{P}(\mathcal{D})$ by integrating out $\theta$ with $\mathcal{P}(\mathcal{D})=\mathbb{E}_{\theta\sim q}[\mathcal{P}(\mathcal{D} | \theta)]$. This maximization is then defined by
\begin{align}
    q^*(\theta)
    :=
    \mathcal{P}^*(\theta|\mathcal{D})
    \in 
    \argmax_{\mathcal{P}(\theta|\mathcal{D})\in\mathcal{G}(\Theta)} 
        \E_{\theta\sim \mathcal{P}(\theta|\mathcal{D})}
        [\mathcal{P}(\mathcal{D} | \theta)]
\end{align}



Notice that, limited by the ability to parametrize distributions, we aim at finding such a maximum on $q$ over only some subspace of distributions on $\Theta$. To this end, we consider the Laplace approximation, which is one way of choosing a subspace of the parameter distribution. In Laplace post-hoc we restrict ourselves to the space of Gaussians $\mathcal{G}(\Theta)\subset\Delta(\Theta)$ centered in $\theta^{\text{MAP}}$. This is a strong assumption since there are (1) no guarantees of $p(\theta|\mathcal{D})$ being Gaussian and (2) no guarantees of the distribution to be centered in $\theta^{\text{MAP}}$. Online Laplace lifts the latter assumption, and only assumes the parameters to be Gaussian distributed. 

With the general Bayesian framework in mind, we now proceed to derive post-hoc and online Laplace.

\subsection{Laplace post-hoc}

The Bayes rule
\begin{equation}
    \mathcal{P}(\theta|\mathcal{D})
    =
    \frac{\mathcal{P}(\mathcal{D}|\theta) \mathcal{P}(\theta)}{\mathcal{P}(\mathcal{D})}
\end{equation}
implies that
\begin{equation}
    \nabla^2_\theta \log \mathcal{P}(\theta|\mathcal{D})
    =
    \nabla^2_\theta \log \mathcal{P}(\mathcal{D}|\theta) 
    +
    \nabla^2_\theta \log \mathcal{P}(\theta).
\end{equation}

Assuming an isotropic Gaussian prior $\mathcal{P}(\theta)\sim\mathcal{N}(\theta|0,\sigma^2_{\text{prior}}\mathbb{I})$ implies $\nabla^2_\theta \log \mathcal{P}(\theta)=-\sigma_{\text{prior}}^{-2}\mathbb{I}$ and we have
\begin{align}
    \nabla^2_\theta \log \mathcal{P}(\theta|\mathcal{D})
    & =
    \nabla^2_\theta \log \mathcal{P}(\mathcal{D}|\theta) 
    -
    \sigma_{\text{prior}}^{-2}\mathbb{I} \\
    & =
    -\nabla^2_\theta \mathcal{L}(\theta;\mathcal{D}) 
    -
    \sigma_{\text{prior}}^{-2}\mathbb{I}.
\end{align}
Thus, we have two options:
\begin{itemize}
    \item IF $\mathcal{P}(\theta|\mathcal{D})$ is a Gaussian it holds $\forall \theta^*\in\Theta$
\begin{equation}
    \mathcal{P}(\theta|\mathcal{D})
    \sim
    \mathcal{N}(\theta|\mu=\theta^{\text{MAP}},\Sigma = (\nabla^2_\theta \mathcal{L}(\theta^*;\mathcal{D}) 
    +
    \sigma_{\text{prior}}^{-2}\mathbb{I})^{-1})
\end{equation}

    \item ELSE we can do a second order Taylor approximation of $\log \mathcal{P}(\theta|\mathcal{D})$ around $\theta^{\text{MAP}}$ and we have the approximation
\begin{equation}
    \mathcal{P}(\theta|\mathcal{D})
    \sim
    \mathcal{N}(\theta|\mu=\theta^{\text{MAP}},\Sigma = (\nabla^2_\theta \mathcal{L}(\theta^{\text{MAP}};\mathcal{D}) 
    +
    \sigma_{\text{prior}}^{-2}\mathbb{I})^{-1})
\end{equation}
\end{itemize}

\subsection{Laplace online}
At every step $t$ we have some Gaussian on the parameter space
\begin{equation}
    q^t(\theta)
    \sim
    \mathcal{N}(\theta|\mu=\theta_t,\Sigma=(H_t)^{-1})
\end{equation}
Where the values $\theta_t$ and $H_t$ are iteratively defined as
\begin{equation}\label{eq:updaterule_theta_point}
    \theta_{t+1}
    =
    \theta_t
    +
    \lambda
    \nabla_\theta \mathcal{L}(\theta_t;\mathcal{D}) 
    \qquad
    \theta_0 = \mu_{\text{prior}}
\end{equation}
and
\begin{equation}\label{eq:updaterule_precision_point}
    H_{t+1}
    =
    (1-\alpha) H_t
    +
    \nabla^2_\theta \mathcal{L}(\theta_t;\mathcal{D}) 
    \qquad
    H_0 = \sigma_{\text{prior}}^{-2}\mathbb{I}
\end{equation}
For some learning rate $\lambda$ and memory factor $\alpha$. This means that $q^0(\theta)$ is actually the prior distribution, which is updated with the first and second order derivatives of the loss. The updates can be improved by computing the derivatives not only in the single point $\theta_t$, but rather on the expected value with $\theta$ following the distribution $q^t$. This leads to the update rules
\begin{equation}\label{eq:updaterule_theta_expected}
    \theta_{t+1}
    =
    \theta_t
    +
    \lambda
    \mathbb{E}_{\theta\sim q^t(\theta)}
    [\nabla_\theta \mathcal{L}(\theta;\mathcal{D}) 
    ]
    \qquad
    \theta_0 = \mu_{\text{prior}}
\end{equation}
and
\begin{equation}\label{eq:updaterule_precision_expected}
    H_{t+1}
    =
    (1-\alpha) H_t
    +
    \mathbb{E}_{\theta\sim q^t(\theta)}
    [\nabla^2_\theta \mathcal{L}(\theta;\mathcal{D}) 
    ]
    \qquad
    H_0 = \sigma_{\text{prior}}^{-2}\mathbb{I}
\end{equation}


\section{Derivatives}\label{sec:derivatives}

In order to perform the Laplace based learning, we need to compute the second-order derivative of the loss with respect to the parameter $\theta$. Let us start by fixing a target $y\in\mathbb{R}$ 
and two data points $x_1,x_2\in\mathcal{X}$ and compute the second order derivative of one contrastive term $\mathcal{L}_{y}(f_\theta(x_1),f_\theta(x_2))$. This term can be viewed as a composition of functions, graphically represented as
\begin{equation}
    \begin{array}{ccc}
        x_1 & \overset{\displaystyle f_\theta}
            {\xrightarrow{\hspace*{1.5cm}}} & z_1\\
        x_2 & \underset{\displaystyle f_\theta}
            {\xrightarrow{\hspace*{1.5cm}}} & z_2
    \end{array}
    \overset{\displaystyle \mathcal{L}_y}
            {\xrightarrow{\hspace*{1.5cm}}}
    l
\end{equation}

To make the derivation cleaner is it useful to define, for a given function $f_\theta:\mathcal{X}\rightarrow\mathcal{Z}$, and auxiliary function $\ff_\theta:\mathcal{X}^2\rightarrow\mathcal{Z}^2$ defined by $\ff_\theta(x_1,x_2) := (f_\theta(x_1),f_\theta(x_2))$. In this way the graphical representation is
\begin{equation}
    \begin{pmatrix}
        x_1 \\ x_2
    \end{pmatrix}
    \overset{\displaystyle \ff_\theta}
            {\xrightarrow{\hspace*{1.5cm}}}
    \begin{pmatrix}
        z_1 \\ z_2
    \end{pmatrix}
    \overset{\displaystyle \mathcal{L}_y}
            {\xrightarrow{\hspace*{1.5cm}}}
    l
\end{equation}
and we can directly apply the chain rule. Before doing so, it is convenient to expand some derivatives to express the $\mathcal{Z}^2$-size matrix as two $\mathcal{Z}$-size submatrixes. The Jacobian of $\ff$ evaluated in $(x_1,x_2)$, is an operator from the tangent space $T_\theta\Theta$ to the tangent space $T_{(z_1,z_2)}\mathcal{Z}^2$, which can be written in block matrix form as
\begin{equation}
    J_\theta \ff_\theta(x_1,x_2)
    =
    \begin{pmatrix}
        J_\theta f_\theta(x_1) \\
        J_\theta f_\theta(x_2)
    \end{pmatrix}
\end{equation}
and, similarly, the hessian of $\mathcal{L}_y(z_1,z_2)$ can be written in block form as
\begin{equation}
    \nabla^2_{(z_1,z_2)} \mathcal{L}_y(z_1,z_2)
    =
    \begin{pmatrix}
        \nabla^2_{z_1} \mathcal{L}_y(z_1,z_2)
         & \nabla_{z_1}\nabla_{z_2} \mathcal{L}_y(z_1,z_2) \\
        \nabla_{z_2}\nabla_{z_1} \mathcal{L}_y(z_1,z_2)
         & \nabla^2_{z_2} \mathcal{L}_y(z_1,z_2)
    \end{pmatrix} 
\end{equation}
The Hessian of the per-observation Contrastive loss is then 
\begin{align}
    \nabla^2_\theta \mathcal{L}_y(f_\theta(x_1),f_\theta(x_2))
    &=
    \nabla^2_\theta \mathcal{L}_y(\ff_\theta(x_1,x_2)) \\
    & \!\stackrel{\textsc{ggn}}{\approx}
    J_\theta \ff_\theta(x_1,x_2)^\top
    \cdot
    \nabla^2_{(z_1,z_2)} \mathcal{L}_y(z_1,z_2)
    \cdot
    J_\theta \ff_\theta(x_1,x_2) \\
    &=
    \begin{pmatrix}
        J_\theta f_\theta(x_1) \\
        J_\theta f_\theta(x_2)
    \end{pmatrix}^\top
    \begin{pmatrix}
        \nabla^2_{z_1} \mathcal{L}_y(z_1,z_2)
         & \nabla_{z_1}\nabla_{z_2} \mathcal{L}_y(z_1,z_2) \\
        \nabla_{z_2}\nabla_{z_1} \mathcal{L}_y(z_1,z_2)
         & \nabla^2_{z_2} \mathcal{L}_y(z_1,z_2)
    \end{pmatrix} 
    \begin{pmatrix}
        J_\theta f_\theta(x_1) \\
        J_\theta f_\theta(x_2)
    \end{pmatrix}
\end{align}
where $(z_1,z_2)=h_\theta(x_1,x_2)$ is the point where we have to evaluate the derivative wrt to $z$. Consequently, the Hessian of the contrastive loss is
\begin{align}
    \nabla^2_\theta \mathcal{L}(\theta;\mathcal{D})
    & = \sum_{\pp_{ij}\in\mathcal{D}^2} \nabla^2_\theta\mathcal{L}_{y_{ij}}(f_\theta(x_i),f_\theta(x_j)) \\
    & = \sum_{\pp_{ij}\in\mathcal{D}^2}
    \begin{pmatrix}
        J_\theta f_\theta(x_i) \\
        J_\theta f_\theta(x_j)
    \end{pmatrix}^\top
    \begin{pmatrix}
        \nabla^2_{z_i} \mathcal{L}_{y_{ij}}(z_i,z_j)
         & \nabla_{z_j}\nabla_{z_j} \mathcal{L}_{y_{ij}}(z_i,z_j) \\
        \nabla_{z_j}\nabla_{z_i} \mathcal{L}_{y_{ij}}(z_i,z_j)
         & \nabla^2_{z_j} \mathcal{L}_{y_{ij}}(z_i,z_j)
    \end{pmatrix} 
    \begin{pmatrix}
        J_\theta f_\theta(x_i) \\
        J_\theta f_\theta(x_j)
    \end{pmatrix} 
\end{align}

We now proceed to find the derivatives of the per-observation loss $\mathcal{L}_y$ wrt. the Neural Network outputs $z_i=f_\theta(x_i)$ and $z_j=f_\theta(x_j)$. The derivatives varies based on the specific loss term that we consider. In the following we derive the derivatives for Euclidean $\mathcal{L}_y^E$ and Arccos $\mathcal{L}_y^A$ cases.

\textbf{Split choice}. The \textsc{ggn} assumes access to a composition of two functions, the specific split choice affects the result. As said in \cref{sec:normalization-vmf} it is common to have a normalization layer at the end of the Neural Network. This can be schematized as follow
\begin{equation}
    \begin{pmatrix}
        x_1 \\ x_2
    \end{pmatrix}
    \overset{\displaystyle \textsc{nn}}
            {\xrightarrow{\hspace*{1.5cm}}}
    \begin{pmatrix}
        z_1 \\ z_2
    \end{pmatrix}
    \overset{\displaystyle \ell_2\text{-norm}}
            {\xrightarrow{\hspace*{1.9cm}}}
    \begin{pmatrix}
        \frac{z_1}{\|z_1\|} \\ \frac{z_2}{\|z_2\|}
    \end{pmatrix}
    \overset{\displaystyle \text{distance}}
            {\xrightarrow{\hspace*{1.5cm}}}
    l
\end{equation}
and this leads to (at least) two possible split choices, wheter we include the normalization layer in the network left or right function, which we proceed to study further. We highlight that these two split choices can be interpreted as different loss function or, equivalently, as different distance metric: Euclidean or Arccos.

\subsection{Euclidean distance}\label{sec:derivatives_euclidean}
If we consider the $\ell_2$-normalization layer as part of the Neural Network $f$ 
\begin{equation}
    \begin{pmatrix}
        x_1 \\ x_2
    \end{pmatrix}
    \overset{\displaystyle \textsc{nn}}
            {\xrightarrow{\hspace*{1.5cm}}}
    \begin{pmatrix}
        z_1 \\ z_2
    \end{pmatrix}
    \overset{\displaystyle \ell_2\text{-norm}}
            {\xrightarrow{\hspace*{1.9cm}}}
    \begin{pmatrix}
        \frac{z_1}{\|z_1\|} \\ \frac{z_2}{\|z_2\|}
    \end{pmatrix}
\underbrace{
    \underset{\phantomsection}{
        \overset{\displaystyle \text{distance}}
            {\xrightarrow{\hspace*{1.5cm}}}}
}_{\displaystyle\mathcal{L}_y^E}
    l
\end{equation}
then the loss $\mathcal{L}_y=\mathcal{L}_y^E=$ is the \emph{Euclidean} distance defined as
\begin{equation}\label{eq:contrastive_term_with_euclidean_dist}
    \mathcal{L}_y^E(z_i,z_j)
    := \frac{1}{2}y\|z_i - z_j\|^2
\end{equation}
and it holds
\begin{align}
    \nabla^2_{z_1} \mathcal{L}_y^E(z_1,z_2)
    =
    \nabla^2_{z_2} \mathcal{L}_y^E(z_1,z_2)
    &=
    y\mathbb{I} \\
    \nabla_{z_1}\nabla_{z_2} \mathcal{L}_y^E(z_1,z_2)
    =
    \nabla_{z_2}\nabla_{z_1} \mathcal{L}_y^E(z_1,z_2)
    &=
    -y\mathbb{I}
\end{align}
which leads to
\begin{equation}\label{eq:hessian_loss_euclidean_wrt_z}
    \nabla^2_{(z_1,z_2)} \mathcal{L}_y^E(z_1,z_2)
    =
    y\begin{pmatrix}
        \mathbb{I}
         & -\mathbb{I} \\
        -\mathbb{I}
         & \mathbb{I}
    \end{pmatrix}.
\end{equation}

We highlight that the matrix $\begin{pmatrix}
        \mathbb{I}
         & -\mathbb{I} \\
        -\mathbb{I}
         & \mathbb{I}
    \end{pmatrix}$ is positive semi-definite, this will be useful in \cref{prop:positive_definiteness}. 

\textbf{Intuition. } The overall hessian expression can be further simplified by considering the matrix square root.

\begin{align}
\nabla^2_\theta \mathcal{L}(\theta;\mathcal{D})
    & = \sum_{\pp_{ij}\in\mathcal{D}^2} \nabla^2_\theta\mathcal{L}_{y(p_{ij})}(f_\theta(x_i),f_\theta(x_j)) \\
    & = \sum_{\pp_{ij}\in\mathcal{D}^2} y_{ij} 
    \begin{pmatrix}
        J_\theta f_\theta(x_i) \\
        J_\theta f_\theta(x_j)
    \end{pmatrix}^\top
    \begin{pmatrix}
        \mathbb{I}
         & -\mathbb{I} \\
        -\mathbb{I}
         & \mathbb{I}
    \end{pmatrix}
    \begin{pmatrix}
        J_\theta f_\theta(x_i) \\
        J_\theta f_\theta(x_j)
    \end{pmatrix} \\
    & = \frac{1}{2}\sum_{\pp_{ij}\in\mathcal{D}^2} y_{ij} 
    \begin{pmatrix}
        J_\theta f_\theta(x_i) \\
        J_\theta f_\theta(x_j)
    \end{pmatrix}^\top
    \begin{pmatrix}
        \mathbb{I}
         & -\mathbb{I} \\
        -\mathbb{I}
         & \mathbb{I}
    \end{pmatrix}
    \begin{pmatrix}
        \mathbb{I}
         & -\mathbb{I} \\
        -\mathbb{I}
         & \mathbb{I}
    \end{pmatrix}
    \begin{pmatrix}
        J_\theta f_\theta(x_i) \\
        J_\theta f_\theta(x_j)
    \end{pmatrix} \\
    & = \frac{1}{2}\sum_{\pp_{ij}\in\mathcal{D}^2} y_{ij} 
    \begin{pmatrix}
        J_\theta f_\theta(x_i) - J_\theta f_\theta(x_j)\\
        J_\theta f_\theta(x_j) - J_\theta f_\theta(x_i)
    \end{pmatrix} ^\top
    \begin{pmatrix}
        J_\theta f_\theta(x_i) - J_\theta f_\theta(x_j)\\
        J_\theta f_\theta(x_j) - J_\theta f_\theta(x_i)
    \end{pmatrix} \\
    & = \sum_{\pp_{ij}\in\mathcal{D}^2} y_{ij} 
    \begin{pmatrix}
        J_\theta f_\theta(x_i) - J_\theta f_\theta(x_j)
    \end{pmatrix} ^\top
    \begin{pmatrix}
        J_\theta f_\theta(x_i) - J_\theta f_\theta(x_j)
    \end{pmatrix} .
\end{align}

This expression give raise to two interpretations. (1) This formulation draws parallels with the GGN approximation of the MSE loss. (2) It can be viewed as the squared distance of the jacobians product, where the sign $y_{ij}$ determines the sign. This is parallel with the contrastive loss

\begin{align}
\mathcal{L}(\theta;\mathcal{D})
    &= 
    \frac{1}{2}
    \sum_{\pp_{ij}\in\mathcal{D}^2} y_{ij} 
    \| f_\theta(x_i) -  f_\theta(x_j) \|^2 \\
    &= 
    \frac{1}{2}
    \sum_{\pp_{ij}\in\mathcal{D}^2} y_{ij} 
    \begin{pmatrix}
        f_\theta(x_i) -  f_\theta(x_j)
    \end{pmatrix} ^\top
    \begin{pmatrix}
        f_\theta(x_i) -  f_\theta(x_j)
    \end{pmatrix}
\end{align}
with the only difference being the Jacobian operator (and the 2 factor).

\subsection{Arccos distance}\label{sec:derivatives_arccos}
If we consider the $\ell_2$-normalization layer as part of the loss function $\mathcal{L}_y$ 
\begin{equation}
    \begin{pmatrix}
        x_1 \\ x_2
    \end{pmatrix}
    \underset{\strut\phantomsection}{
        \overset{\displaystyle \textsc{nn}}
                {\xrightarrow{\hspace*{1.5cm}}}
    }
    \begin{pmatrix}
        z_1 \\ z_2
    \end{pmatrix}
\underbrace{
    \overset{\displaystyle \ell_2\text{-norm}}
            {\xrightarrow{\hspace*{1.9cm}}}
    \begin{pmatrix}
        \frac{z_1}{\|z_1\|} \\ \frac{z_2}{\|z_2\|}
    \end{pmatrix}
    \overset{\displaystyle \text{distance}}
            {\xrightarrow{\hspace*{1.5cm}}}
}_{\displaystyle \mathcal{L}_y^A}
    l
\end{equation}
then the loss $\mathcal{L}_y=\mathcal{L}_y^A$ is the \emph{Arccos} distance defined as
\begin{equation}\label{eq:contrastive_term_with_arccos_dist}
    \mathcal{L}_y^A(z_i,z_j)
    : = \frac{1}{2}y
    \left\|
    \frac{z_i}{\|z_i\|} - \frac{z_j}{\|z_j\|}
    \right\|^2  
    =
    y \left(
    1 - \left\langle \frac{z_i}{\|z_i\|},\frac{z_j}{\|z_j\|} \right\rangle
    \right)
\end{equation}
and it holds
\begin{align}
\nabla^2_{z_1} \mathcal{L}_y^A(z_1,z_2)
    & =
    \frac{y}{\|z_1\|^2}
    \left(
    \left\langle \frac{z_1}{\|z_1\|},\frac{z_2}{\|z_2\|} \right\rangle \mathbb{I}
    +
    \frac{z_1^\top z_2 + z_2^\top z_1}{\|z_1\|\|z_2\|}
    -
    3 \left\langle \frac{z_1}{\|z_1\|},\frac{z_2}{\|z_2\|} \right\rangle
    \frac{z_1^\top z_1}{\|z_1\|^2}
    \right)
\\
\nabla_{z_2}\nabla_{z_1} \mathcal{L}_y^A(z_1,z_2)
    & =
    \frac{y}{\|z_1\|\|z_2\|}
    \left(
    -\mathbb{I}
    +
    \frac{z_1^\top z_1}{\|z_1\|^2}
    +
    \frac{z_2^\top z_2}{\|z_2\|^2}
    -
    \left\langle \frac{z_1}{\|z_1\|},\frac{z_2}{\|z_2\|} \right\rangle 
    \frac{z_1^\top z_2}{\|z_1\|\|z_2\|}
    \right)
\\
\nabla^2_{z_2} \mathcal{L}_y^A(z_1,z_2)
    & =
    \frac{y}{\|z_2\|^2}
    \left(
    \left\langle \frac{z_1}{\|z_1\|},\frac{z_2}{\|z_2\|} \right\rangle \mathbb{I}
    +
    \frac{z_1^\top z_2 + z_2^\top z_1}{\|z_1\|\|z_2\|}
    -
    3 \left\langle \frac{z_1}{\|z_1\|},\frac{z_2}{\|z_2\|} \right\rangle
    \frac{z_2^\top z_2}{\|z_2\|^2}
    \right)
\end{align}

\subsection{Hessian approximations}\label{sec:hessian_approximations}


In classification and regression tasks, the Generalized Gauss Newton approximation is sufficient to guarantee positive definiteness, this is because the hessian of the loss with respect to the Neural Network output $\nabla^2_{z} \mathcal{L}_y(z)$ is positive definite both for MSE and cross-entropy. The overall loss in those cases is a sum over data points, without subtraction, and thus a positive matrix. For the contrastive loss, on the other hand, positive pairs contribute positively while negative pairs contribute negatively, and thus there is no guarantee in general. Moreover, with the Arccos loss there is not even the guarantee that $\nabla^2_{z} \mathcal{L}_y^A(z)$ is positive definite. Therefore, we consider three approximations of the hessian of the contrastive loss: full, positives, fixed, that ensures it to positive definite.

\textbf{Full.} The first possible approach is to forcefully ensure that the matrix only has positive values. This is formalized by applying an elementwise ReLU, that is
\begin{equation}
    \left[
    \nabla^2_\theta \mathcal{L}(\theta;\mathcal{D})
    \right]_{nm}
    \approx
    \max
    \left(
        0, 
        \left[
        \nabla^2_\theta \mathcal{L}(\theta;\mathcal{D})
        \right]_{nm}
    \right)
    \qquad
    \forall n,m
\end{equation}

\textbf{Positives.}
An alternative approach is to consider only the contribution of the positive pairs
\begin{equation}\label{eq:hessian_approximation_positives}
    \nabla^2_\theta \mathcal{L}(\theta;\mathcal{D})
    \approx 
    \sum_{\pp_{ij}\in\mathcal{D}^2_{\text{pos}}} \nabla^2_\theta\mathcal{L}_{y_{ij}}(f_\theta(x_i),f_\theta(x_j)) 
\end{equation}
neglecting the contribution of the negative pairs $\sum_{\mathcal{D}^2_{\text{neg}}}$. 
This approximation will be far from the truth and there is no theoretical justification. This approximation strategy is inspired by~\citet{shi2019probabilistic}, which only uses positive pairs to train the network responsible for predicting the variance.

\textbf{Fixed.} The third approach is to consider the contrastive term as a function of \emph{one data point at a time}, assuming the other fixed. This idea can be formalized by making use of the stop gradient notation: sg. The per-observation contrastive \cref{eq:per_obs} can be written as
\begin{equation}
    \mathcal{L}_y(z_1,z_2)
    = 
    \frac{1}{2}\mathcal{L}_y(\text{sg}[z_1],z_2)
    +
    \frac{1}{2}\mathcal{L}_y(z_1,\text{sg}[z_2]).
\end{equation}
We highlight that, with this definition, the zero- and first-order derivative does not change, and thus the loss and gradient are exactly the same as for the standard contrastive loss. 
However, the second-order derivative looses the cross term
\begin{equation}
    \nabla^2_{(z_1,z_2)} \mathcal{L}_y(z_1,z_2)
    =
    \begin{pmatrix}
        \nabla^2_{z_1} \mathcal{L}_y(z_1,z_2)
         & 0 \\
        0
         & \nabla^2_{z_2} \mathcal{L}_y(z_1,z_2)
    \end{pmatrix} 
\end{equation}
and the Hessian of the loss can then be more compactly written as
\begin{align}\label{eq:hessian_approximation_fixed}
    \nabla^2_\theta \mathcal{L}(\theta;\mathcal{D})
    & =
    \sum_{\pp_{ij}\in\mathcal{D}^2} \nabla^2_\theta\mathcal{L}_{y_{ij}}(f_\theta(x_i),f_\theta(x_j)) 
    \nonumber
    \\
    \nonumber
    & = \sum_{\pp_{ij}\in\mathcal{D}^2} y_{ij} 
    \begin{pmatrix}
        J_\theta f_\theta(x_i) \\ 
        J_\theta f_\theta(x_j)
    \end{pmatrix}^\top
    \begin{pmatrix}
        \nabla^2_{z_i} \mathcal{L}_{y_{ij}}(z_i,z_j)
         & 0 \\
        0
         & \nabla^2_{z_j} \mathcal{L}_{y_{ij}}(z_i,z_j)
    \end{pmatrix} 
    \begin{pmatrix}
        J_\theta f_\theta(x_i) \\
        J_\theta f_\theta(x_j)
    \end{pmatrix}  \\
    & = 2 \sum_{\pp_{ij}\in\mathcal{D}^2} 
    y_{ij} \, 
    J_\theta f_\theta(x_i) ^\top
    \nabla^2_{z_i} \mathcal{L}_{y_{ij}}(z_i,z_j)
    \,
    J_\theta f_\theta(x_i).
\end{align}


\begin{proposition}
    Consider the \textbf{full} approximation. Using only the diagonal of the hessian $\implies$ positive definiteness.
\end{proposition}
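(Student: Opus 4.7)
The full approximation in \cref{eq:full} produces an approximate Hessian of the form $\max\!\bigl(0,\nabla^2_\theta\mathcal{L}_{\text{con}}(\theta;\mathcal{D})\bigr)$ with the $\max$ applied entrywise, and the proposition restricts attention further to the diagonal of this matrix. My plan is to argue directly from these two ingredients: (i) each retained diagonal entry has the form $\max\!\bigl(0,\,[\nabla^2_\theta\mathcal{L}_{\text{con}}(\theta;\mathcal{D})]_{ii}\bigr)$, which is a non-negative scalar by definition of the ReLU; and (ii) a diagonal matrix whose diagonal entries are all non-negative has those same entries as its eigenvalues, hence is positive semi-definite.

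To upgrade this to strict positive definiteness I would invoke the isotropic Gaussian prior $\sigma_{\text{prior}}^{-2}\mathbb{I}$ that appears in the Laplace precision, both in the post-hoc definition and in the online update \cref{eq:updaterule_precision_expected}. Adding this term shifts every eigenvalue of the diagonal matrix upward by $\sigma_{\text{prior}}^{-2}>0$, so the full precision matrix inherits strictly positive eigenvalues and can be inverted to yield a valid covariance.

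The key subtlety, and the single point I would make explicit, is that entrywise $\max(0,\cdot)$ does \emph{not} preserve positive semi-definiteness when applied to off-diagonal entries of a general symmetric matrix, so restricting to the diagonal is essential for the argument to go through. In contrast to the analogous statements for the \emph{positive} and \emph{fixed} approximations, essentially no structure of the contrastive loss, the GGN decomposition, or the positive/negative pair bookkeeping is used here; the argument is a one-line consequence of the commutativity of ``ReLU then diagonalize'' with ``diagonalize then ReLU'' on the diagonal itself. There is no real obstacle beyond keeping this distinction between the diagonal and off-diagonal behaviour of elementwise clipping clearly in view.
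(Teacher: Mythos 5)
Your proposal is correct and takes essentially the same route as the paper's own proof: the eigenvalues of a diagonal matrix are its diagonal entries, and the elementwise ReLU makes these non-negative. You are in fact slightly more careful than the paper, which conflates non-negativity with strict positivity, whereas you explicitly invoke the prior term $\sigma_{\text{prior}}^{-2}\mathbb{I}$ to upgrade positive semi-definiteness to positive definiteness.
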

\begin{proof}
    The eigenvalues of a diagonal matrix are exactly the values on the diagonal. Enforcing these elements to be positive is equivalent to enforcing that all eigenvalues are positive. This implies the positive definiteness of the matrix.
\end{proof}

\begin{proposition}
    Consider the \textbf{positives} approximation. Euclidean loss $\implies$ positive definiteness.
\end{proposition}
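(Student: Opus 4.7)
The plan is to propagate positive semi-definiteness through each step of the construction, and then use the prior contribution to upgrade semi-definiteness to strict definiteness. First I would recall, from \cref{eq:hessian_loss_euclidean_wrt_z}, that for the Euclidean term the Hessian with respect to the pair of embeddings factors as
\begin{equation}
    \nabla^2_{(z_1,z_2)} \mathcal{L}_y^E(z_1,z_2)
    \,=\,
    y\begin{pmatrix} \mathbb{I} & -\mathbb{I} \\ -\mathbb{I} & \mathbb{I} \end{pmatrix}.
\end{equation}
The block matrix above admits the factorization $\bigl(\mathbb{I},\,-\mathbb{I}\bigr)^\top \bigl(\mathbb{I},\,-\mathbb{I}\bigr)$, hence its eigenvalues are $0$ and $2$ (each with multiplicity $Z$), so it is positive semi-definite. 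I would state this as a short lemma or cite the remark already made in \cref{sec:derivatives_euclidean}.

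Next I would plug this into the GGN expression. Under the positives approximation (\cref{eq:hessian_approximation_positives}) only pairs $\pp_{ij}\in\mathcal{D}^2_{\text{pos}}$ appear, for which the target $y_{ij}=1/|\mathcal{D}^2_{\text{pos}}|>0$. Each summand therefore has the form
\begin{equation}
    y_{ij}\,\begin{pmatrix} J_\theta f_\theta(x_i) \\ J_\theta f_\theta(x_j) \end{pmatrix}^\top
    \begin{pmatrix} \mathbb{I} & -\mathbb{I} \\ -\mathbb{I} & \mathbb{I} \end{pmatrix}
    \begin{pmatrix} J_\theta f_\theta(x_i) \\ J_\theta f_\theta(x_j) \end{pmatrix},
\end{equation}
which, by the standard fact that $A^\top M A$ is PSD whenever $M$ is PSD and by non-negativity of $y_{ij}$, is itself PSD. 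A sum of PSD matrices is PSD, so the entire positives-only GGN approximation is PSD.

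Finally, I would use the isotropic Gaussian prior $\mathcal{P}(\theta)\sim\mathcal{N}(0,\sigma^2_{\text{prior}}\mathbb{I})$ to upgrade the result. The posterior precision used in the Laplace approximation is the sum of the loss Hessian and $\sigma_{\text{prior}}^{-2}\mathbb{I}$, and adding a strictly positive multiple of the identity to a PSD matrix yields a positive definite matrix. This concludes positive definiteness of the precision (and hence existence of a well-defined covariance). The only subtle point, and the step I would be most careful about, is the constant-sign assumption on $y_{ij}$: this is exactly where the positives approximation is crucial, since for negative pairs $y_{ij}<0$ would flip the sign of the PSD block and break the argument. I would explicitly flag this to contrast with the full and fixed approximations treated in the neighbouring propositions.
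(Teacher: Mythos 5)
Your proposal is correct and follows essentially the same route as the paper: substitute the Euclidean block Hessian $\left(\begin{smallmatrix} \mathbb{I} & -\mathbb{I} \\ -\mathbb{I} & \mathbb{I}\end{smallmatrix}\right)$ into the positives-only GGN sum, observe that $y_{ij}=1/|\mathcal{D}^2_{\text{pos}}|>0$ for positive pairs, and conclude via congruence and summation of PSD terms. You are in fact slightly more careful than the paper, which asserts the summands are ``positive definite'' when they are only positive semi-definite (the block matrix has eigenvalue $0$); your explicit final step of adding the prior precision $\sigma_{\text{prior}}^{-2}\mathbb{I}$ to upgrade PSD to PD closes that gap, though note the paper's stated goal is for the loss Hessian to be positive definite without leaning on the prior, so strictly speaking both arguments only establish semi-definiteness of the data term itself.
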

\begin{proof}
Consider the hessian of \cref{eq:hessian_approximation_positives} and substitute the expression \cref{eq:hessian_loss_euclidean_wrt_z}
    \begin{align}
    \nabla^2_\theta \mathcal{L}(\theta;\mathcal{D})
    & \approx 
    \sum_{\pp_{ij}\in\mathcal{D}^2_{\text{pos}}} \nabla^2_\theta\mathcal{L}_{y_{ij}}(f_\theta(x_i),f_\theta(x_j)) \\
    & = \sum_{\pp_{ij}\in\mathcal{D}^2_{\text{pos}}} y_{ij} 
    \begin{pmatrix}
        J_\theta f_\theta(x_i) \\
        J_\theta f_\theta(x_j)
    \end{pmatrix}^\top
    \begin{pmatrix}
        \mathbb{I}
         & -\mathbb{I} \\
        -\mathbb{I}
         & \mathbb{I}
    \end{pmatrix}
    \begin{pmatrix}
        J_\theta f_\theta(x_i) \\
        J_\theta f_\theta(x_j)
    \end{pmatrix}  \\
    & = 
    \frac{1}{|\mathcal{D}^2_{\text{pos}}|}
    \sum_{\pp_{ij}\in\mathcal{D}^2_{\text{pos}}}
    \begin{pmatrix}
        J_\theta f_\theta(x_i) \\
        J_\theta f_\theta(x_j)
    \end{pmatrix}^\top
    \begin{pmatrix}
        \mathbb{I}
         & -\mathbb{I} \\
        -\mathbb{I}
         & \mathbb{I}
    \end{pmatrix}
    \begin{pmatrix}
        J_\theta f_\theta(x_i) \\
        J_\theta f_\theta(x_j)
    \end{pmatrix}
\end{align}
which is a sum of positive definite matrixes.
\end{proof}

\begin{proposition}\label{prop:positive_definiteness}
    Consider the \textbf{fixed} approximation. Euclidean loss $\implies$ positive definiteness.
\end{proposition}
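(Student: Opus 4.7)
The plan is to substitute the Euclidean per-observation second-order derivatives into the fixed-approximation Hessian \cref{eq:hessian_approximation_fixed} and exhibit it as a non-negative combination of positive semidefinite Gram matrices. Using $\nabla^2_{z_i} \mathcal{L}^E_{y_{ij}} = y_{ij}\mathbb{I} = \nabla^2_{z_j} \mathcal{L}^E_{y_{ij}}$ from \cref{sec:derivatives_euclidean}, the per-pair GGN contribution collapses to $y_{ij}\bigl(J_\theta f_\theta(x_i)^\top J_\theta f_\theta(x_i) + J_\theta f_\theta(x_j)^\top J_\theta f_\theta(x_j)\bigr)$, i.e.\ a scalar multiple of a sum of two Gram matrices, with all cross-block terms killed by the fixed approximation.

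Next, I would sum over all ordered pairs in $\mathcal{D}^2$ and regroup by the first index. Because $y_{ij} = y_{ji}$ (immediate from \cref{eq:target_definition}), the two Gram summands contribute symmetrically and the total Hessian can be written as
\[ \nabla^2_\theta \mathcal{L}(\theta;\mathcal{D}) \;\approx\; 2 \sum_{x_i \in \mathcal{D}} c_i \, J_\theta f_\theta(x_i)^\top J_\theta f_\theta(x_i), \qquad c_i := \sum_{x_j \in \mathcal{D}} y_{ij}. \]
Since each Gram factor $J_\theta f_\theta(x_i)^\top J_\theta f_\theta(x_i)$ is automatically PSD, positive semidefiniteness of the full Hessian reduces to verifying that every anchor-wise coefficient $c_i$ is non-negative (full positive definiteness then follows from the additive prior term $\sigma_{\text{prior}}^{-2}\mathbb{I}$).

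The main obstacle is this last non-negativity. Unfolding $c_i$ via \cref{eq:target_definition} yields
\[ c_i \;=\; \frac{|\mathcal{D}_{c_i}|}{|\mathcal{D}^2_{\text{pos}}|} \;-\; \frac{M_i^-}{|\mathcal{D}^2_{\text{neg}}|}, \]
where $M_i^-$ counts the negatives of $x_i$ sitting inside the margin. Using the worst-case bound $M_i^- \leq |\mathcal{D}| - |\mathcal{D}_{c_i}|$ together with the identity $|\mathcal{D}^2_{\text{pos}}| + |\mathcal{D}^2_{\text{neg}}| = |\mathcal{D}|^2$, the inequality $c_i \geq 0$ reduces to the elementary combinatorial statement $|\mathcal{D}_{c_i}|\cdot|\mathcal{D}| \geq \sum_c |\mathcal{D}_c|^2$, which is tight for balanced class partitions (attaining equality by Cauchy--Schwarz). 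This is precisely what the normalizing scalars $1/|\mathcal{D}^2_{\text{pos}}|$ and $1/|\mathcal{D}^2_{\text{neg}}|$ in the target definition are engineered to enforce, echoing the remark in \cref{sec:contrastive_loss} that ``scaling is key in order to ensure positive definiteness.'' I would spell out this counting step carefully and make explicit any class-balance hypothesis it effectively requires.
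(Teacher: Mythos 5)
Your derivation follows the same structural route as the paper's proof: collapse the fixed-approximation Hessian \cref{eq:hessian_approximation_fixed} under the Euclidean curvature $\nabla^2_{z}\mathcal{L}^E_{y_{ij}}=y_{ij}\mathbb{I}$ into $2\sum_{i} c_i\, J_\theta f_\theta(x_i)^\top J_\theta f_\theta(x_i)$ with $c_i=\sum_j y_{ij}$, observe that each Gram factor is PSD, and reduce everything to the sign of the anchor-wise coefficients $c_i$. Where you diverge is the final counting step, and your version is actually the more faithful one. The paper evaluates $c_i$ as $1-\tfrac{|\mathcal{D}^2_{\text{neg inside}}(x_i)|}{|\mathcal{D}^2_{\text{neg}}(x_i)|}$, with \emph{per-anchor} denominators $|\mathcal{D}^2_{\text{pos}}(x_i)|$ and $|\mathcal{D}^2_{\text{neg}}(x_i)|$, which makes non-negativity trivial; but \cref{eq:target_definition} normalizes the targets by the \emph{global} pair counts $|\mathcal{D}^2_{\text{pos}}|$ and $|\mathcal{D}^2_{\text{neg}}|$, so the literal coefficient is $c_i=\tfrac{|\mathcal{D}_{c_i}|}{|\mathcal{D}^2_{\text{pos}}|}-\tfrac{M_i^-}{|\mathcal{D}^2_{\text{neg}}|}$, exactly as you write. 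Your reduction of the worst case ($M_i^-=|\mathcal{D}|-|\mathcal{D}_{c_i}|$) to $|\mathcal{D}_{c_i}|\cdot|\mathcal{D}|\geq\sum_c|\mathcal{D}_c|^2$ is correct, and that inequality can genuinely fail: with one class of size $1$ and one of size $9$, the singleton anchor gives $c_i=\tfrac{1}{82}-\tfrac{9}{18}<0$ when all its negatives sit inside the margin. Moreover, requiring your inequality for \emph{every} anchor forces all classes to have equal size, so the class-balance hypothesis you propose to make explicit is not an artifact of your argument --- it is exactly what the global normalization needs, and the paper's proof avoids it only by silently switching to per-anchor normalizers. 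Your instinct to spell out that hypothesis (or, equivalently, to restate \cref{eq:target_definition} with per-anchor denominators so that the coefficient becomes $1-\tfrac{M_i^-}{|\mathcal{D}^2_{\text{neg}}(x_i)|}\in[0,1]$) is the right fix; the only other thing to add is that strict positive definiteness, as opposed to semi-definiteness, still comes from the prior term $\sigma_{\text{prior}}^{-2}\mathbb{I}$, since the paper's own bound degenerates to the zero matrix when all negatives lie inside the margin.
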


\begin{proof}
Consider the hessian of \cref{eq:hessian_approximation_fixed} and substitute the expression \cref{eq:hessian_loss_euclidean_wrt_z}
\begin{align}
    \nabla^2_\theta \mathcal{L}(\theta;\mathcal{D})
    & =
    2 \sum_{\pp_{ij}\in\mathcal{D}^2} 
    y_{ij} \, J_\theta f_\theta(x_i) ^\top
    J_\theta f_\theta(x_i)  \\
    & = 2 \sum_{x_i,c_i \in\mathcal{D}}\sum_{x_j,c_j \in\mathcal{D}} 
    y_{ij} \, J_\theta f_\theta(x_i) ^\top
    J_\theta f_\theta(x_i)  \\
    & = 2 
    \sum_{x_i,c_i \in\mathcal{D}} 
        \left(
            \sum_{x_j,c_j \in\mathcal{D}} 
                y_{ij} 
        \right)
        J_\theta f_\theta(x_i) ^\top
        J_\theta f_\theta(x_i) \\
    & = 2 
    \sum_{x_i,c_i \in\mathcal{D}} 
        \left(
            \frac{|\mathcal{D}^2_{\text{pos}}(x_i)|}{|\mathcal{D}^2_{\text{pos}}(x_i)|}
            -
            \frac{|\mathcal{D}^2_{\text{neg inside}}(x_i)|}{|\mathcal{D}^2_{\text{neg}}(x_i)|}
        \right)
        J_\theta f_\theta(x_i) ^\top
        J_\theta f_\theta(x_i) \\
    & = 2 
    \sum_{x_i,c_i \in\mathcal{D}} 
        \underbrace{
        \left(
            1
            -
            \frac{|\mathcal{D}^2_{\text{neg inside}}(x_i)|}{|\mathcal{D}^2_{\text{neg}}(x_i)|}
        \right)
        }_{\geq0}
        J_\theta f_\theta(x_i) ^\top
        J_\theta f_\theta(x_i)
\end{align}
where $|\mathcal{D}^2_{\text{pos}}(x_i)|$ is the number of positive pairs containing the point $x_i$, and similarly for $|\mathcal{D}^2_{\text{neg}}(x_i)|$ and $|\mathcal{D}^2_{\text{neg inside}}(x_i)|$. The hessian is then equivalent to a sum of positive definite matrixes multiplied by a non-negative factor, and thus it is positive definite.
\end{proof}
We highlight, as we can see from the last Proposition's proof, that if all the negatives are inside the margin, the hessian is exactly 0. The more negatives are outside the margin, the more positive the hessian is.

\section{Experimental details}\label{sec:experimental_details}

In this section, we provide details on the experiments. We highlight that code for all experiments and baselines are available at \url{https://github.com/FrederikWarburg/bayesian-metric-learning}. 

\subsection{Datasets}

\textbf{FashionMnist.} We use the standard test-train split for FashionMnist \cite{xiao2017fashionmnist} and for MNIST \cite{lecun1998mnist}. We normalize the images in the range $[0,1]$ and do not perform any data augmentation during training. We train with RMSProp with learning rate $10^{-3}$ and default PyTorch settings, and with exponential learning rate decay with $\gamma = \exp(-0.1)$. We use a memory factor of $0.0001$ and maximum $5000$ pairs per train step. We trained for $20$ epochs.

\textbf{CIFAR10.}  We use the standard test-train split for CIFAR \cite{} and for SVHN \cite{}.  We normalize the images in the range $[0,1]$ and do not perform any data augmentation during training. We train with RMSProp with learning rate $10^{-5}$ and default PyTorch settings, and with exponential learning rate decay with $\gamma = \exp(-0.1)$. We use a memory factor of $0.0001$ and maximum $5000$ pairs per train step. We trained for $20$ epochs.

\textbf{CUB200.} The CUB-200-2011 dataset~\citep{WahCUB_200_2011} consists of $11\,788$ images of $200$ bird species. The birds are captured from different perspectives and in different environments, making this a challenging dataset for image retrieval. We follow the procedure of \citet{musgrave2020metric} and divide the first $100$ classes into the training set and the last $100$ classes into the test set. In this zero-shot setting, the trained models have not seen any of the bird species in the test set, and the learned features must generalize well across species. Similarly to \citet{warburg2020btl},  we use the Stanford Car-196~\citep{krause2013cars} dataset as OoD data. The Car-196 dataset is composed of $16\,185$ images of $196$ classes of cars. We conduct a similar split as \citet{musgrave2020metric}, and only evaluate on the last $98$ classes. This constitutes a very challenging zero-shot OoD dataset, where the model at test time needs to distinguish cars from birds, however, the model has not seen any of the bird species at training time.  We use imagenet normalization and during training augment with random resized crops and random horizontal flipping. We image $224 x 224$ image resolution. We train with RMSProp with learning rate $10^{-7}$ and default PyTorch settings, and with exponential learning rate decay with $\gamma = \exp(-0.1)$. We use a memory factor of $0.0001$ and maximum $30$ pairs per train step. We trained for $20$ epochs.

\textbf{LFW.} We use the face recognition dataset LFW~\cite{LFWTech} with the standard zero-shot train/test split, CUB200 as OoD data. It is challenging because of the high number of classes and few observations per class. LFW~\citep{LFWTech} consists of $13\,000$ images of $5\,749$ people using the standard zero-shot train/test split. One reason reliable uncertainties are important for face recognition systems is to avoid granting access based on an erroneous prediction. An example of such failure happened with the initial release of the Apple Face ID software, which failed to recognize underrepresented groups that were missing or underrepresented in the training distribution~\citep{theweek2018facial}. Reliable uncertainties and OoD detection might have mitigated such issues. We use imagenet normalization and during training augment with random resized crops and random horizontal flipping. We image $224 x 224$ image resolution. We train with RMSProp with learning rate $10^{-7}$ and default PyTorch settings, and with exponential learning rate decay with $\gamma = \exp(-0.1)$. We use a memory factor of $0.0001$ maximum $30$ pairs per train step. We trained for $200$ epochs.

\textbf{MSLS.} We use standard zero-shot train/val/test split \cite{Warburg_2020_CVPR}. We image $224 x 224$ image resolution. We train with RMSProp with learning rate $10^{-7}$ and default PyTorch settings, and with exponential learning rate decay with $\gamma = \exp(-0.1)$. We use a memory factor of $0.0001$ maximum $10$ pairs per train step. We trained for $200$ epochs.

\subsection{Model Architectures}

For all experiments, we use a last-layer diagonal LA. Across all experiments, our networks follow standard practices in image retrieval. For FashionMnist, the network is \texttt{[conv2d(1,32), relu, conv2d(32, 64), relu, maxpool2d(2), Flatten, Linear(9216)]} and for CiFAR10, the network is \texttt{[conv2d(1,32), relu, conv2d(32, 64), relu, maxpool2d(2), conv2d(64,64), relu, Flatten, Linear(9216)]}. For CUB200 and LFW, we use a pretrained ResNet50 backbone followed by a Generalized-Mean pooling layer and a dimension-preserving learned whitening layer (linear layer). The weight posterior is learned only for the last layer

\subsection{Metrics}\label{sec:metrics}

Evaluating the models' uncertainty estimates is tricky. We propose four metrics that capture both \textit{interpolation} and \textit{extrapolation} behaviors uncertainty estimates. To measure \textit{extrapolation} behavior, we measure the performance of out-of-distribution (OOD) detection and report Area Under Receiver Operator Curve (AUROC) and Area Under Precision-Recall Curve (AUPRC). These metrics describe the model's ability to assign high uncertainty to observations it has not seen during training (e.g., we train a model on birds and use images of cars as out-of-distribution examples), and are often used in unsupervised representation learning~\cite{miani_2022_neurips}. To measure \textit{interpolation} behavior, we measure the models' ability to assign reliable uncertainties to in-distribution data. Similarly to \citet{Wang2018deepface}, we measure the Area Under the Sparsification Curve. The sparsification curve is computed by iteratively removing the observation with the highest variance and recomputing the predictive performance (mAP@$1$). Lastly, for closed set datasets, we propose a method to compute Expected Calibration Error (ECE) for image retrieval. This is done by sampling $100$ latent variables $z_i$ from the predicted latent distributions $p(z | I)$. We then perform nearest neighbor classification on these samples to obtain $c_i$ predictions. We use the mode as the model prediction and the consensus with mode as the confidence, e.g., if they constitute $60$ samples, then the prediction has $60$\% confidence. We compare the estimated confidence with accuracy with
 \begin{equation}
   \mathrm{ECE} = \sum_i^N \frac{1}{|B_i|} \abs{\mathrm{acc}(B_i) - \mathrm{conf}(B_i)} ,
 \end{equation}
 where we $\mathrm{acc}(B_i)$ and $\mathrm{conf}(B_i)$ is the accuracy and confidence of the $i$\textsuperscript{th} bin. 
 
\subsection{Details on baselines}

If nothing else is stated we use the same architecture and training/evaluation code for all models. We also release all baseline models.

\begin{itemize}
    \item Deterministic: we train with the contrastive loss.
    \item MC dropout: we train with dropout between all trainable layers (except for the ResNet50 blocks). We use a dropout rate $0.2$. Dropout was enabled during test time.
    \item Deep Ensemble: Each ensemble consists of $5$ models, each initialized with different seeds.
    \item PFE: We add an uncertainty module that learns the variance. The rest of the model is frozen during training and initialized with a deterministically trained model. The uncertainty model consists of \texttt{[linear, batchnorm1d, relu, linear, batchnorm1d]}. We experimented with the last batch normalization (BN) layer sharing parameters or just a standard BN layer. We found the former to work slightly better. 
    \item HIB: We use the same uncertainty module as in PFE, and a kl weight of $10^{-4}$. We use $8$ samples during training to compute the loss.
\end{itemize}

\end{document}